\def\eqref#1{equation~\ref{#1}}
\def\1{\bm{1}}
\DeclareMathAlphabet{\mathsfit}{\encodingdefault}{\sfdefault}{m}{sl}
\SetMathAlphabet{\mathsfit}{bold}{\encodingdefault}{\sfdefault}{bx}{n}
\newtheorem{theorem}{Theorem}[section]
\newtheorem{proposition}[theorem]{Proposition}
\definecolor{color_1}{RGB}{255,0,128}
\definecolor{color_2}{RGB}{0,128,128}
\definecolor{color_3}{RGB}{0,128,0}
\definecolor{color_4}{RGB}{128,0,0}
\definecolor{color_5}{RGB}{128,0,128}
\newcommand{\ours}{B\'ezierFlow}
\title{B\'ezierFlow: Learning B\'ezier Stochastic Interpolant Schedulers for Few-Step Generation}
\author{Yunhong Min\textsuperscript{$\ast$} \hspace{0.5em} Juil Koo\textsuperscript{$\ast$} \hspace{0.5em} Seungwoo Yoo \hspace{0.5em} Minhyuk Sung \\
KAIST\\
\texttt{\{dbsghd363,63days,dreamy1534,mhsung\}@kaist.ac.kr} \\
}
\begin{document}

\vspace*{-0.3cm}
\maketitle
\begingroup
\renewcommand\thefootnote{}\footnotetext{\textsuperscript{$\ast$}Equal contribution.}
\endgroup



%


\begin{abstract}
\vspace{-0.5\baselineskip}
We introduce B\'ezierFlow, a lightweight training approach for few-step generation with pretrained diffusion and flow models. B\'ezierFlow achieves a 2–3× performance improvement for sampling with $\leq$ 10 NFEs while requiring only 15 minutes of training. Recent lightweight training approaches have shown promise by learning optimal timesteps, but their scope remains restricted to ODE discretizations. To broaden this scope, we propose learning the optimal transformation of the sampling trajectory by parameterizing stochastic interpolant (SI) schedulers. The main challenge lies in designing a parameterization that satisfies critical desiderata, including boundary conditions, differentiability, and monotonicity of the SNR. To effectively meet these requirements, we represent scheduler functions as Bézier functions, where control points naturally enforce these properties. This reduces the problem to learning an ordered set of points in the time range, while the interpretation of the points changes from ODE timesteps to Bézier control points. Across a range of pretrained diffusion and flow models, B\'ezierFlow consistently outperforms prior timestep-learning methods, demonstrating the effectiveness of expanding the search space from discrete timesteps to Bézier-based trajectory transformations. Project Page: \href{https://bezierflow.github.io}{https://bezierflow.github.io}.
\end{abstract}

\section{Introduction}
\vspace{-0.5\baselineskip}
Diffusion and flow models have achieved state-of-the-art performance, but at the cost of high computation due to their iterative generative processes. A large body of recent work~\citep{Lu:2022DPMSolver, Song:2023CM, Liu:2023RF, Tong:2025LD3} has aimed to accelerate generation to just a few steps. For diffusion models, many dedicated solvers~\citep{Lu:2022DPMSolver, Lu:2023DPMSolver++, Zhang:2023iPNDM, zhao:2023UniPC} tailored to their ODE formulations have been proposed. Although these methods substantially reduce the number of iterations from hundreds to tens, they are often insufficient to bring the steps down to only a few. More recent distillation techniques, such as consistency models~\citep{Song:2023CM} and variants~\citep{Kim:2023CTM, Berthelot:2023TRACT, Zhou:2025IMM} for diffusion and ReFlow~\citep{Liu:2023RF} for flow models, can reduce the number of steps to as few as one, but they require substantial fine-tuning, often hundreds to thousands of GPU hours, even for small datasets.

A notable line of recent work is the lightweight training approach, which learns only a few parameters with a pretrained model to improve output quality for a small number of function evaluations (NFEs). Compared to the distillation techniques, such approaches require only tens of GPU minutes for training while achieving considerable improvements. The key questions for lightweight training are: (1) what to optimize, and (2) how to parameterize the variables. For the former, most previous work~\citep{Tong:2025LD3, Chen:2024GITS, Xue:2024DMN} has focused on learning the optimal sequence of timesteps for ODE solves, treating a nondecreasing sequence of timesteps as learnable variables. Most notably, a recent teacher-forcing approach~\citep{Tong:2025LD3} that uses the outputs of a multistep adaptive solver as the teacher has demonstrated the effectiveness of learned ODE timesteps. 

Broadening our scope, we explore variables beyond ODE timesteps that can be learned with lightweight training. As our first key contribution, we propose optimizing the sampling trajectories themselves. 
The Stochastic Interpolant (SI) framework~\citep{Albergo:2023SI} provides a unified view of modern ODE-based generative models. In this framework, the state at any time is written as a linear interpolation between two endpoint samples: one drawn from the source (e.g., latent) distribution and the other from the target data distribution. The interpolation is governed by a pair of time-dependent coefficient functions, referred to as the SI scheduler. The scheduler fully specifies the geometry of the sampling trajectory. Different models adopt different schedulers, yet interchanging one scheduler for another at inference time does not change the endpoint marginal distributions. Inspired by this, we propose a lightweight training framework for learning an SI scheduler, which is equivalent to sampling path transformations that preserve the endpoints.

Our second key contribution lies in the parameterization of SI schedulers. The 1D continuous functions for SI schedulers must satisfy the following properties: (i) \emph{boundary conditions}, ensuring that the endpoints of the coefficients are fixed, (ii) \emph{monotonicity}, which guarantees a strictly nondecreasing signal-to-noise ratio (SNR) along the sampling path, and (iii) \emph{differentiability}, which ensures that a velocity field can be derived in the ODEs governed by the learned scheduler. To effectively parameterize the space of such functions, while restricting the scope to polynomials, we propose a \emph{B\'ezier}-based parameterization, termed the {B\'ezier SI Scheduler}, which forms the core of our overall lightweight training framework, {B\'ezierFlow}. A 1D B\'ezier function naturally satisfies all of these properties: the boundary conditions can be enforced by simply setting the two end control points to the time range boundaries; the function is smooth and differentiable by the definition of polynomial B\'ezier curves; and monotonicity can be achieved by enforcing a nondecreasing order of control points, making the learning process identical to learning the ODE timesteps in previous work~\citep{Tong:2025LD3}. 


In our experiments, we evaluate both diffusion and flow models across diverse datasets and ODE solvers. B\'ezierFlow consistently outperforms existing acceleration techniques while requiring only lightweight training, taking around 15 minutes on a single GPU. Extensive results further demonstrate the effectiveness of optimizing the sampling trajectories directly, rather than ODE timesteps, when coupled with our continuous B\'ezier-based parameterization.

\section{Related Work}
\vspace{-0.5\baselineskip}

There have been various attempts to improve few-step generation in ODE-based generative models. One major line of work focuses on designing dedicated ODE solvers tailored to the dynamics of the models~\citep{Lu:2022DPMSolver,Lu:2023DPMSolver++,zhao:2023UniPC,Zhang:2023iPNDM}. Although these methods require no additional training, they are unable to achieve high-fidelity generation with only a few steps. Another line is distillation-based approaches~\citep{Song:2023CM,Salimans:2022PD,Liu:2023RF}, which have demonstrated impressive gains in the very low-NFE regime, but incur substantial computational cost in training. Despite these diverse strategies, our lightweight training approach is most closely related to the methods listed below, which we now discuss in more detail.

\vspace{-0.5\baselineskip}
\paragraph{Learning ODE Solving Timesteps.}
Several methods aim to achieve high-fidelity generation with few NFEs by optimizing the ODE timesteps in a lightweight manner. ~\citet{Chen:2024GITS} frame ODE timestep learning as a selection problem under a fixed NFE budget. Based on statistics collected from multiple sampling trajectories, they allocate more steps to regions of high curvature and fewer to flatter regions. ~\citet{Xue:2024DMN} optimize timesteps from the perspective of numerical integration: given a specific ODE solver, they minimize the accumulated local integration error along the trajectory. ~\citet{Tong:2025LD3} learn optimal timesteps through a data-driven distillation framework, where a high-NFE sampler serves as the teacher and a low-NFE sampler as the student. The timesteps are optimized by minimizing the discrepancy between their outputs starting from the same initial noise. Compared to these methods that learn optimal ODE timesteps, we learn optimal stochastic interpolant schedulers and demonstrate superior performance over these approaches.


\vspace{-0.5\baselineskip}
\paragraph{Learning Sampling Trajectories.}
Several works~\citep{Karras:2022EDM, Lipman:2024FMGuide, Pokle:2023training, Kim:2025RBF} have explored changing sampling trajectories at inference time to improve generation quality and diversity, while selecting from a few predefined stochastic interpolant schedulers (e.g., linear, VP, VE), rather than parameterizing the function space and finding the best scheduler through optimization. To our knowledge, Bespoke Solver~\citep{Shaul:2023Bespoke} is the only approach that learns an optimal sampling trajectory. Unlike our method, however, it relies on a discrete parameterization, which prevents direct derivation of first-order derivatives. These derivatives must instead be represented through auxiliary variables, introducing redundancy that can lead to inconsistencies between zeroth-order and first-order representations, and thus often fail to capture a truly differentiable function.

\section{Background: Stochastic Interpolant Framework}
\label{sec:background}
\vspace{-0.5\baselineskip}


Stochastic Interpolant (SI)~\citep{Albergo:2023SI} is a unified framework for generative modeling, encompassing both ODE-based and SDE-based models~\citep{Song:2021ScoreSDE, Ho:2020DDPM, Song:2021DDIM, Lipman:2023FM}. Given two marginal probability densities $p_0, p_1: \mathbb{R}^d \rightarrow \mathbb{R}_{\geq 0}$, a stochastic interpolant $x(t)$ is defined by the following stochastic process:
\begin{equation}
    \label{eq:si}
    x(t) = \alpha(t) x_1 + \sigma(t) x_0 + \gamma(t) z, \quad t\in [0,1],
\end{equation}
where $\alpha(t)$ and $\sigma(t)$ are interpolation coefficients between $x_0 \sim p_0$ and $x_1 \sim p_1$, and $z\sim \mathcal{N}(0,I)$ is a latent variable introducing stochasticitiy. The process satisfies the boundary conditions $x(0)=x_0$ and $x(1)=x_1$ by enforcing $\alpha(0)=\sigma(1)=0$, $\alpha(1)=\sigma(0)=1$, and $\gamma(0)=\gamma(1)=0$.

While the general formulation written in Eq.~\ref{eq:si} is broad, many important generative models---including diffusion~\citep{Song:2021DDIM, Ho:2020DDPM, Karras:2022EDM}, flow~\citep{Lipman:2023FM, Lipman:2024FMGuide}, and score-based~\citep{Song:2021ScoreSDE} models---can be expressed in a more specific form, referred to as one-sided interpolants:
\begin{align}
\label{eq:si_ode}
    x(t) = \alpha(t)x_1 + \sigma(t) x_0,
\end{align}
where, as a common practice, $p_0=\mathcal{N}(0,I)$ and $p_1 = p_{\text{data}}$, thereby the latent variable $z$ is absorbed into the initial state $x_0$. By differentiating both sides of Eq.~\ref{eq:si_ode}, it can be expressed in the following ODE form:
\begin{align}
\label{eq:velocity}
    \frac{dx(t)}{dt} = \dot{\alpha}(t)x_1 + \dot{\sigma}(t)x_0,
\end{align}
where we denote a time derivative by the dot. For these dynamics to be well defined, $\alpha(t)$ and $\sigma(t)$ must be twice continuously differentiable ($C^2$) to ensure that the divergence terms in the associated Fokker-Planck equation are well-defined.

Based on Eq.~\ref{eq:si_ode}, within the SI framework, different ODE-based generative models, including diffusion, flow, score-based models, learn different but interchangeable quantities. Along the sampling path $(x,t)$, flow models $v_\phi(x,t)$ approximate the velocity field $u_t(x)=\mathbb{E}[\dot{x}_t \mid x_t = x]$, while diffusion models $\epsilon_\phi(x,t)$ approximate the expected initial random noise state $\eta_t(x) = \mathbb{E} [x_0 \mid x_t = x]$. Finally, score-based models $s_\phi(x,t)$ estimate the score function, which is equivalent to the scaled version of the expected initial state: $\nabla \log p_t(x) = -\sigma^{-1}(t) \eta_t(x)$. Thus, different types of generative models are mathematically linked, and under the SI framework, a pretrained model of one type can be reinterpreted as another at inference. For convenience, we collectively refer to these ODE-based generative models as the SI model, denoted by  $S_\phi(x,t)$, throughout the paper.




\section{B\'ezierFlow}
\vspace{-0.5\baselineskip}



\subsection{Problem Definition}
\label{subsec:problem_definition}
The objective of our work is to learn an optimal \emph{sampling trajectory} that enables high-quality generation with a few NFEs (e.g., $\leq10$), while using a pretrained diffusion or flow model. 

We consider two sampling trajectories that share the same endpoints $x_0$ and $x_1$. The source path refers to the trajectory used during model training, while the target path is a newly optimized trajectory for inference. Although both trajectories share the same endpoints, we assume their intermediate geometry matters: due to discretization error in ODE solving, output quality would depend on the path geometry. Given this assumption, we therefore aim to optimize the target path such that, even with only a few of NFEs, its geometry produces sampling results comparable to those obtained along the source path with many steps.

Formally, given a pretrained SI model $S_\phi$, let $\xi(x_0, \{t_i\}_{i=1}^N; S_\phi)$ denote a multistep ODE solver along the source path (the \emph{teacher}), and $\bar{\xi}_\theta(x_0, \{s_i\}_{i=1}^M; S_\phi)$ a few-step ODE solver along the target path (the \emph{student}), where $\{t_i\}_{i=1}^N$ and $\{s_i\}_{i=1}^M$ are the respective timestep sets with $M \ll N$. Although both solvers start from the same initial state $x_0 \sim p_0$, they differ in the number of NFEs and the sampling path. Let $q(x_1)$ denote the distribution induced by the teacher, and $\bar{p}_\theta(x_1)$ the student’s output distribution. Our objective is formulated as the following teacher-forcing KL minimization:
\vspace{0.1\baselineskip}
\begin{equation}
\label{eq:kl}
\min_{\theta} D_{\text{KL}}\left(q(x_1) \Vert \bar{p}_\theta(x_1)\right).
\end{equation}

In practice, we optimize Eq.~\ref{eq:kl} using the following tractable surrogate objective~\citep{Tong:2025LD3}, which enforces the outputs of the two solvers to align with each other:
\begin{align}
\label{eq:teacher_forcing}
\min_\theta \mathcal{L}(\theta) = \mathbb{E}_{x_0 \sim p_0}\big[ d\big(\xi(x_0, \{t_i\}_{i=1}^N; S_\phi), \bar{\xi}_\theta(x_0, \{s_i\}_{i=1}^M; S_\phi)\big)\big],
\end{align}
where $d(\cdot,\cdot)$ is a distance metric such as LPIPS~\citep{Zhang:2018LPIPS}. This lightweight optimization adjusts only the target scheduler coefficients while using the pretrained model, thereby improving few-step generation at minimal training cost. See the left of Fig.~\ref{fig:bezierflow_compare}, which compares sampling trajectories from the prior distribution to the target distribution: (i) the teacher’s trajectories with many steps (NFE=50), (ii) the student’s initial trajectories, and (iii) trajectories optimized by \ours{}. While the student's initial trajectories deviate from the target distribution at NFE=3, after training with~\ours{}, they closely follow those of the teacher despite the much smaller NFE.


\begin{figure}[t!]
    \centering
    \includegraphics[width=\linewidth]{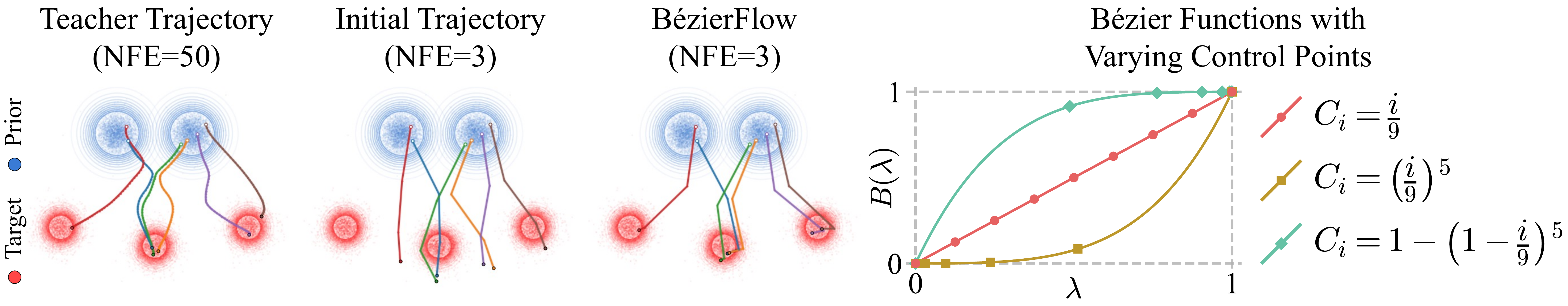}
    \label{fig:toy}
    \vspace{-\baselineskip}
    \caption{
    \textbf{Illustration of Sampling Trajectories and B\'ezier Functions.}
    On the left, we visualize different sampling trajectories. While initial trajectories deviate from the target distribution, B\'ezierFlow aligns them with those of the teacher using only NFE=3. On the right, we present examples of 8-degree B\'ezier functions with different arrangements of control points.
    }
    \vspace{-\baselineskip}
    \label{fig:bezierflow_compare}
\end{figure}

\subsection{Sampling Path Transformation}
\label{subsec:transformed-path}
\vspace{-0.5\baselineskip}
As discussed in Sec.~\ref{subsec:problem_definition}, in order to define two trajectories that share the same endpoints, we view this as a transformation of the source path via the path reparameterization following prior works~\citep{Karras:2022EDM, Shaul:2023Bespoke, Pokle:2023training, Kim:2025RBF}. 

In the SI framework, a sampling path is governed by a pair of interpolation coefficients in Eq.~\ref{eq:si_ode}, which we refer to as a \emph{scheduler}. We denote the coefficients of the source path as the source scheduler $(\alpha_t, \sigma_t)$, and those of the target path as the target scheduler $(\bar\alpha_s, \bar\sigma_s)$. Specifically, to relate the two schedulers, we adopt a scaling reparameterization trick~\citep{Karras:2022EDM}, where the target state $\bar{x}_s$ is defined from the source state $x_t$ as $\bar{x}_s = c_s x_{t_s}$. Here, $c_s$ can be arbitrary scalar functions and $t_s$ any invertible mapping from $s$, but we define them as
\begin{align}
\label{eq:scale_transform}
    c_s &= \begin{cases}
        \frac{\bar\sigma(s)}{\sigma(t_s)} = \frac{\bar{\alpha}(s)}{\alpha(t_s)}, &0 < s< 1, \\
        1,&\text{otherwise}, \\
    \end{cases} \\
    t_s &= t(s) = \rho^{-1}\big(\bar{\rho}(s)\big), \qquad \rho(t)=\frac{\alpha(t)}{\sigma(t)},\,\, \bar\rho(s)=\frac{\bar{\alpha}(s)}{\bar{\sigma}(s)},
\end{align}
where both $\rho$ and $\bar{\rho}$ are invertible as the signal-to-noise ratio (SNR) increases monotonically over time. 


Applying the change-of-variables to Eq.~\ref{eq:velocity}, the velocity in the target path $\bar{u}_s(\bar{x}_s)$ is expressed as
\begin{align}
\label{eq:ubar}
    \bar{u}_s(\bar{x}_s) = \frac{d\bar x_s}{ds}
    =\Big(\partial_s\log c_s\Big)\,\bar x_s
    + c_s\,\frac{dt_s}{ds}\;u_{t_s}\!\Big(\frac{\bar x_s}{c_s}\Big).
\end{align}

We note that replacing the source scheduler with the target scheduler based on Eq.~\ref{eq:scale_transform} at inference is valid for two reasons: (i) the endpoint marginal distributions are preserved, and (ii) the training objective of a SI model is invariant to the choice of schedule as long as the SNR endpoints (minimum and maximum values) are the same. Thus, learning $(\bar\alpha_s, \bar\sigma_s)$ only changes the geometry of the sampling path, and hence the few-step discretization behavior, without altering the underlying target distributions or requiring a different pretrained SI model. See App.~\ref{sec:validity_of_scheduler_reparameterization} for more details.

\subsection{B\'ezier Stochastic Interpolant Scheduler}
\label{subsec:opt-bezier}
\vspace{-0.5\baselineskip}
In Sec.~\ref{subsec:transformed-path}, we discussed $\emph{what}$ to optimize, namely the sampling path determined by the SI scheduler $(\bar\alpha_s, \bar\sigma_s)$. The next crucial question is $\emph{how}$ to parameterize these 1D continuous functions effectively. Since the space of arbitrary 1D functions is prohibitively large, we employ 1D B\'ezier parameterization, which offers strong expressiveness with a compact number of parameters---the control points. Moreover, B\'ezier functions naturally satisfy the three key requirements of the SI scheduler: (i) boundary conditions, as described in Sec.~\ref{sec:background}, (ii) monotonicity to ensure a strictly nondecreasing signal-to-noise ratio (SNR), and (iii) differentiability to compute the transformed velocity in Eq.~\ref{eq:ubar}. 


An $n$-degree B\'ezier curve is defined as a weighted linear combination of $n+1$ control points $\{C_i\}_{i=0}^n$, where the weights are given by Bernstein basis polynomials $b_{i,n}$:
\vspace{-0.1\baselineskip}
\begin{equation}
\label{eq:bezier}
    B(\lambda) = \sum_{i=0}^n b_{i,n}(\lambda) C_i, \quad b_{i,n}(\lambda)=\binom{n}{i}(1-\lambda)^{n-i}\lambda^i,\quad \lambda \in [0,1].
\end{equation}
\vspace{-0.1\baselineskip}
As shown in Eq.~\ref{eq:bezier}, with only $n$ control points, it can represent a wide range of trajectories. Unlike arbitrary 1D polynomial functions, B\'ezier functions always pass through their control points in order, making it straightforward to enforce boundary conditions and monotonicity. See the right of Fig.~\ref{fig:bezierflow_compare}, which illustrates how 1D B\'ezier functions $B(\lambda), \, \lambda \in [0,1]$ can represent diverse shapes under different control point arrangements while keeping the endpoints fixed.

Moreover, they are inherently smooth and infinitely differentiable $(C^\infty)$, with a closed-form derivative:
\begin{equation}
    \dot{B}(\lambda) = n \sum_{i=0}^{n-1}b_{i, n-1}(\lambda)(C_{i+1} - C_i),
\end{equation}
which allows us to directly compute the transformed velocity in Eq.~\ref{eq:ubar} at any time $s$. Specifically, we parameterize $\bar\alpha(s)$ and $\bar\sigma(s)$ as $n$-degree 1D B\'ezier functions, each defined by a set of control points:
\begin{equation}
    \bar\alpha^\theta(s) = (\alpha_1 - \alpha_0) \sum_{i=0}^n b_{i,n}(s)\,C_i^{(\alpha)} + \alpha_0,\qquad
    \bar\sigma^\theta(s)= (\sigma_1 - \sigma_0 )\sum_{i=0}^n b_{i,n}(s)\,C_i^{(\sigma)} + \sigma_0.
\end{equation}
For the boundary conditions, we fix the end control points ($C_0^{(\alpha)}=C_0^{(\sigma)}=0$, $C_n^{(\alpha)}=C_n^{(\sigma)}=1$) and treat only the $n-1$ interior control points as parameters. Concretely, with learnable parameters $\theta^{(\alpha)},\theta^{(\sigma)}\in\mathbb{R}^{n-1}$, the control points are given by
\begin{equation}
    C^{(\alpha)}=\big[\,0,\ \psi(\theta^{(\alpha)})_{1:n-1},\ 1\,\big],\quad
    C^{(\sigma)}=\big[\,0,\ \psi(\theta^{(\sigma)})_{1:n-1},\ 1\,\big],
\end{equation}
where $\phi(\theta)_i=\frac{e^{\theta_i}}{\sum_{j=1}^p e^{\theta_j}}$ is a softmax function, and
$\psi(\theta)_i=\sum_{j=1}^{i}\phi(\theta)_j$ is a cumulative softmax function that ensures monotonicity. This monotonic parameterization ensures that $\bar\rho(s)=\bar\alpha(s)/\bar\sigma(s)$ is strictly nondecreasing on $[0,1)$, resulting in $\bar\rho^{-1}$ exists. 




\subsection{Connection to Prior Work}
\label{subsec:connection_to_prior_Work}
\vspace{-0.3\baselineskip}
\paragraph{LD3~\citep{Tong:2025LD3}.}
From a parameterization perspective, both LD3 and ours optimize the same type of parameter: a nondecreasing sequence of timesteps. However, their interpretations differ: LD3’s parameters correspond directly to discrete ODE solver timesteps, whereas ours correspond to B\'ezier control points that form a continuous sampling path. Interpreting these parameters as an SI scheduler allows our approach to explore a much broader search space compared to LD3. See App.~\ref{sec:theoretical_analysis} for the proof.

\vspace{-0.3\baselineskip}
\paragraph{Bespoke Solver~\citep{Shaul:2023Bespoke}.}
Bespoke solver similarly optimizes a new, target sampling path, but the key difference from our approach lies in the parameterization. Their parameterization is discrete: they learn per-step variables $t_s$ and $c_s$ in Eq.~\ref{eq:scale_transform}. Such a discrete parameterization requires separately modeling the derivatives $\dot t_s, \dot c_s$, thereby breaking the intrinsic connection between the values and their derivatives. This can yield mismatches between the predicted next value from numerical integration and the actual learned value, ultimately causing unstable optimization. 

In contrast, our B\'ezier parameterization ensures that the resulting scheduler $(\bar\alpha^\theta_s, \bar\sigma^\theta_s)$ is smooth and, in particular, satisfies the $C^2$ condition required for the SI scheduler as discussed in Sec.~\ref{sec:background}. Consequently, the time-derivative terms in the transformed velocity can be computed directly rather than learned separately, and the learned ODE trajectories are well-defined, thereby leading to more stable optimization. 

Beyond the parameterization, Bespoke Solver also differs in its training objective: it relies on step-wise error minimization, whereas our method optimizes a global trajectory-level loss. See App.~\ref{sec:comparison_with_bespoke} for the results comparing with Bespoke Solver trained under our loss, which isolate and highlight the benefit of our B\'ezier-based continuous parameterization.

\section{Experiments}
\label{sec:experiment}
\vspace{-0.5\baselineskip}
\paragraph{Experiment Setup.}
We evaluate our \ours{} (BF) on both diffusion and flow models across diverse datasets for image generation. For diffusion models, we adopt EDM~\citep{Karras:2022EDM} with pretrained checkpoints on CIFAR-10 ($32\times32$)~\citep{Krizhevsky:2009CIFAR}, FFHQ ($64\times64$)~\citep{Karras:2019FFHQ}, and AFHQv2 ($64\times64$)~\citep{Choi:2020AFHQv2}. For flow models, we use pretrained ReFlow~\citep{Liu:2023RF} on CIFAR-10 ($32\times32$)~\citep{Krizhevsky:2009CIFAR}, FlowDCN~\citep{Wang:2024FlowDCN} on ImageNet ($256\times256$)~\citep{Deng2009ImageNet}, and Stable Diffusion v3.5 (SD)~\citep{Esser:2024SD3} on MS-COCO ($512\times512$)~\citep{Lin:2014MSCOCO}. All pretrained models are from their official implementations.

Each model is paired with its dedicated ODE solver: UniPC~\citep{zhao:2023UniPC} and iPNDM~\citep{Zhang:2023iPNDM} for diffusion models, and Runge–Kutta integrators, RK1 (Euler) and RK2 (Midpoint), for flow models. We consider the following learning-based acceleration methods as baselines for comparisons with ours: DMN~\citep{Xue:2024DMN}, GITS~\citep{Chen:2024GITS}, and LD3~\citep{Tong:2025LD3}. For flow models, we additionally include Bespoke Solver~\citep{Shaul:2023Bespoke} as a baseline, which was specifically designed for RK1 and RK2 solvers. The results of the base ODE solvers, without additional learning, are reported as reference. All baselines are evaluated using their official implementations, except for Bespoke Solver, whose official code is not publicly available. 



\vspace{-0.3\baselineskip}
\paragraph{Implementation Details.} 
Methods based on the teacher-forcing framework, including LD3, Bespoke Solver, and our~\ours{}, are trained and validated with the same number of samples for both training and validation: 200 each for CIFAR-10, 50 each for FFHQ, AFHQv2, and ImageNet, and 25 each for Stable Diffusion v3.5, following the experiment setup used in LD3~\citep{Tong:2025LD3}. Training is performed for 8 epochs on CIFAR-10, FFHQ, AFHQv2 and 5 epochs on the others. For GITS~\citep{Chen:2024GITS}, we precompute statistics using 256 sampling trajectories. For the training of~\ours{}, we use $32$ control points for the B\'ezier parameterization unless stated otherwise. For all models, we initialize the target scheduler as the linear SI scheduler, i.e., $\bar\alpha(s)=s$ and $\bar\sigma(s)=1-s$.
We set the timesteps uniformly in SNR $\rho(s)$ for diffusion models and uniformly in time $s$ for flow models.
Following LD3~\citep{Tong:2025LD3}, we also feed the learned decoupled timesteps~\citep{Li:2024TimeShift} to the neural network. See App.~\ref{sec:implementation_details} for more details.


\begin{table}[!t]
\centering
\scriptsize
\caption{\textbf{FID comparison of few-step generation with diffusion models}. Results of the base ODE solvers are reported on each top rows, highlighted in gray. \textbf{Bold} indicates the best results, and \underline{underline} marks the second best.} 
\begin{tabularx}{\textwidth}
  {>{\raggedright\arraybackslash}p{0.1\textwidth}
   |*{4}{>{\centering\arraybackslash}X}
   |>{\raggedright\arraybackslash}p{0.1\textwidth}
   |*{4}{>{\centering\arraybackslash}X}}
\toprule
 Method & NFE=4 & NFE=6 & NFE=8 & NFE=10 & Method & NFE=4 & NFE=6 & NFE=8 & NFE=10 \\
 \midrule
\multicolumn{10}{c}{CIFAR-10 $32\times 32$ with EDM~\citep{Karras:2022EDM} (Teacher FID: 2.08)} \\
 \midrule
 \cellcolor{gray!20}UniPC & 50.30 & 19.33 & 9.64 & 6.16 & \cellcolor{gray!20}iPNDM & 29.53 & 9.84 & 5.30 & 3.75 \\
 + DMN & 26.42 & 8.11  & 4.22 & 2.79 & + DMN & 28.29 & 9.33 & 4.82 & 3.52 \\
 + GITS & 24.83 & 11.02 & 6.68 & 5.02 & + GITS & 16.20 & 6.80 & 4.07 & 3.30 \\
 + LD3 & \underline{12.04} & \underline{3.56} & \underline{2.43} & \underline{2.62} & + LD3 & \underline{9.97} & \underline{4.42} & \underline{2.93} & \underline{2.44} \\
 + \ours{} & \textbf{9.55} & \textbf{3.13} & \textbf{2.40} & \textbf{2.09} & + \ours{} & \textbf{6.93} & \textbf{3.35} & \textbf{2.81} & \textbf{2.43} \\
 \midrule
 \multicolumn{10}{c}{FFHQ $64\times 64$ with EDM~\citep{Karras:2022EDM} (Teacher FID: 2.86)} \\
 \midrule
 \cellcolor{gray!20}UniPC & 47.62 & 14.96 & 7.76 & 8.93 & \cellcolor{gray!20}iPNDM & 28.75 & 11.15 & 6.68 & 4.80 \\
 + DMN & 25.87 & 9.44 & 5.06 & 4.06 & + DMN & 30.89 & 11.93 & 7.33 & 6.20 \\
 + GITS & 22.99 & 12.12 & 8.90 & 4.40 & + GITS & 18.51 & 9.21 & 5.58 & 4.37 \\
 + LD3 & \underline{22.48} & \textbf{6.16} & \underline{4.25} & \textbf{2.92} & + LD3 & \underline{15.55} & \textbf{5.89} & \textbf{3.74} & \textbf{3.03} \\
 + \ours{} & \textbf{17.05} & \underline{7.43} & \textbf{3.82} & \underline{3.13} & + \ours{} & \textbf{15.39} & \underline{7.84} & \underline{5.56} & \underline{3.75} \\
 \midrule
 \multicolumn{10}{c}{AFHQv2 $64\times 64$ with EDM~\citep{Karras:2022EDM} (Teacher FID: 2.04)} \\
 \midrule
 \cellcolor{gray!20}UniPC & 23.59 & 10.15 & 7.76 & 6.38 & \cellcolor{gray!20}iPNDM & 15.14 & 6.12 & 3.80 &  3.01 \\
 + DMN & 30.39 & 14.40 & 3.98 & 3.69 & + DMN & 33.21 & 15.95 & 5.99 & 5.29 \\
 + GITS & \underline{13.20} & 7.50 & 3.89 & 3.94 & + GITS & 14.31 & 5.81 & 3.88 & 3.57 \\
 + LD3 & 18.17 & \underline{4.95} & \textbf{2.68} & \underline{3.02} & + LD3 & \textbf{11.85} & \textbf{3.11} & \textbf{2.45} & \underline{2.18} \\
 + \ours{} & \textbf{12.27} & \textbf{4.46} & \underline{2.75} & \textbf{2.67} & + \ours{} & \underline{14.44} & \underline{4.69} & \underline{2.63} & \textbf{2.16} \\
 \bottomrule
\end{tabularx}
\label{tbl:main_fid_diffusion}
\end{table}
\begin{table}[!t]
\centering
\scriptsize
\caption{\textbf{FID comparison of few-step generation with flow-based models}. Results of the base ODE solvers are reported on each top rows, highlighted in gray. \textbf{Bold} indicates the best results, and \underline{underline} marks the second best.} 
\setlength{\tabcolsep}{5pt}
\begin{tabularx}{\textwidth}
  {>{\raggedright\arraybackslash}p{0.1\textwidth}
   |*{4}{>{\centering\arraybackslash}X}
   |>{\raggedright\arraybackslash}p{0.1\textwidth}
   |*{4}{>{\centering\arraybackslash}X}}
\toprule
 Method & NFE=4 & NFE=6 & NFE=8 & NFE=10 & Method & NFE=4 & NFE=6 & NFE=8 & NFE=10 \\
 \midrule
\multicolumn{10}{c}{CIFAR-10 $32\times 32$ with ReFlow~\citep{Liu:2023RF} (Teacher FID: 2.70)} \\
 \midrule
 \cellcolor{gray!20}RK1 & 52.78 & 26.30 & 17.40 & 13.30 & \cellcolor{gray!20}RK2 & 25.36 & 12.12 & 9.17 & 7.89 \\
 + DMN & 180.03 & 104.23 & 30.94 & 21.58 & + DMN & 82.41 & 51.99 & 21.43 & 18.62 \\
 + Bespoke & 45.31 & 18.08 & 11.88 & 9.25 & + Bespoke & 39.45 & 64.87 & 16.67 & 13.34 \\
 + GITS & 47.42 & 26.11 & 19.89 & 15.34 & + GITS & \underline{22.84} & \underline{11.84} & 8.77 & 6.58 \\
 + LD3 & \underline{38.95} & \underline{20.10} & \underline{12.54} & \underline{9.64} & + LD3 & 29.45 & 13.82 & \underline{6.26} & \underline{3.86} \\
 + \ours{} & \textbf{20.64} & \textbf{9.67} & \textbf{7.30} & \textbf{5.51} & + \ours{} & \textbf{13.18} & \textbf{6.00} & \textbf{4.31} & \textbf{3.74} \\
 \midrule
 \multicolumn{10}{c}{ImageNet $256\times 256$ with FlowDCN~\citep{Wang:2024FlowDCN} (Teacher FID: 15.89)} \\
 \midrule
 \cellcolor{gray!20}RK1 & 12.03 & 12.04 & 13.55 & 14.43 & \cellcolor{gray!20}RK2 & 7.91 & 10.54 & 12.97 & 14.08 \\
 + DMN & 142.79 & 28.56 & \underline{10.61} & \underline{11.69} & + DMN & 7.96 & 10.23 & \underline{9.42} & \underline{7.86} \\
 + Bespoke & \underline{11.85} & 11.81 & 13.39 & 14.31 & + Bespoke & \underline{7.66} & 10.05 & 13.02 & 14.23 \\
+ GITS & 13.20 & \underline{10.91} & 11.91 & 12.93 & + GITS & 8.18 & \underline{9.80} & 12.30 & 13.27 \\
 + LD3 & \textbf{11.62} & 11.94 & 13.36 & 14.12 & + LD3 & \textbf{7.59} & 10.17 & 12.75 & 14.04 \\
 + \ours{} & 15.60 & \textbf{6.85} & \textbf{7.77} & \textbf{8.11} & + \ours{} & 9.50 & \textbf{5.94} & \textbf{6.22} & \textbf{7.56} \\
 \midrule
 \multicolumn{10}{c}{MS-COCO $512\times 512$ with Stable Diffusion~\citep{Esser:2024SD3} (Teacher FID: 12.13)} \\
 \midrule
 \cellcolor{gray!20}RK1 & 57.93 & \textbf{30.96} & 21.50 & \underline{17.19} & \cellcolor{gray!20}RK2 & 34.95 & 17.89 & 13.33 & 11.61 \\
 + DMN & 113.24 & 46.02 & 31.58 & 24.41 & + DMN & 36.33 & \underline{16.45} & 27.09 & 17.36 \\
 + Bespoke & 134.21 & 52.51 & 23.70 & 20.69 & + Bespoke & 45.23 & 40.87 & 20.18 & 13.26 \\
 + GITS & 70.01 & 42.44 & 31.89 & 25.47 & + GITS & \textbf{31.09} & 21.21 & 15.58 & 14.65 \\
 + LD3 & \underline{55.31} & 36.85 & \underline{20.37} & 19.76 & + LD3 & 39.03 & 18.04 & \underline{12.30} & \underline{11.54} \\
 + \ours{} & \textbf{54.05} & \underline{33.43} & \textbf{19.69} & \textbf{16.52} & + \ours{} & \underline{33.94} & \textbf{16.41} & \textbf{12.20} & \textbf{11.02} \\
 \bottomrule
\end{tabularx}
\label{tbl:main_fid_flow}
\end{table}

\vspace{-0.3\baselineskip}
\paragraph{Quantitative Results.}
We report Fréchet Inception Distance (FID)~\citep{Heusel:2017FID} scores across diverse datasets for diffusion models in Tab.~\ref{tbl:main_fid_diffusion} and for flow models in Tab.~\ref{tbl:main_fid_flow}. FID is computed between the reference set and 50K generated samples, where the test set for each dataset serves as the reference. For SD, both the reference and generated sets are constructed from disjoint subsets of 30K text prompts from MS-COCO, following the setup used in LD3~\citep{Tong:2025LD3}. Refer to App.~\ref{subsec:sd_clip_pickscore} for a more comprehensive comparison of SD, including text-image alignment metrics.


As shown in Tab.~\ref{tbl:main_fid_diffusion}, for few-step generation with pretrained diffusion models, \ours{} consistently achieves the best FID on CIFAR-10 across different NFEs, with especially large margins over the second-best at small NFEs (e.g., at NFE=4, \ours: 9.55 vs. LD3: 12.04 with UniPC, and \ours: 6.93 vs. LD3: 9.97 with iPNDM). On FFHQ and AFHQv2, \ours{} outperforms or remains comparable to the baselines. The improvements are particularly strong at small NFEs, for example, \ours: 17.05 vs. LD3 (the second-best): 22.48 at NFE=4 with UniPC.

When it comes to flow models, as shown in Tab.~\ref{tbl:main_fid_flow}, \ours{} achieves state-of-the-art results on CIFAR-10 with both RK1 and RK2, outperforming the others by clear margins. For example, we surpass the second-best LD3 by 18.31 at NFE=4 with RK1, and the second-best GITS by 9.66 at NFE=4 with RK2. On ImageNet, we consistently obtain the best results across most NFEs, except at NFE=4, again by large margins. On MS-COCO evaluated with Stable Diffusion v3.5, \ours{} outperforms the baselines at most NFEs, demonstrating generalizability to large-scale models.

Overall, these results demonstrate that \ours{} attains the best or comparable performance to existing acceleration approaches across diverse experiment setups, including both diffusion and flow models, different NFEs, ODE solvers, and datasets.





\vspace{-0.75\baselineskip}
\paragraph{Qualitative Results.}
We present qualitative results for accelerated sampling of diffusion models in Fig.~\ref{fig:main_qualitative_diffusion} and flow models in Fig.~\ref{fig:main_qualitative_flow}. Across both model classes, \ours{} (BF) consistently produces sharper details and fewer artifacts at low NFEs. Notably, in the last row of Fig.~\ref{fig:main_qualitative_diffusion} (right), the baselines fail to generate a plausible animal face, whereas our method produces a clear and realistic cat face. See App.~\ref{sec:more_quali} for more qualitative results.


\begin{figure}[t!]
\centering
{\scriptsize
\setlength{\tabcolsep}{4pt} 
\begin{minipage}[t]{0.49\textwidth}
  \centering
  \begin{tabularx}{\linewidth}{@{} >{\centering\arraybackslash}m{1.3em} @{\hspace{6pt}}| *{5}{>{\centering\arraybackslash}m{4.18em}} @{}}
    \toprule
    NFE & \hspace{0.3em}UniPC & DMN & \hspace{-0.3em}GITS & \hspace{-0.7em}LD3 & \hspace{-1.8em}BF \\
    \midrule
    \multicolumn{6}{@{}c@{}}{FFHQ $64\times64$ with EDM~\citep{Karras:2022EDM}}\\
    \midrule
    \multirow{2}{*}{6}
    & \multicolumn{5}{@{}c@{}}{%
         \includegraphics[height=1.12cm]{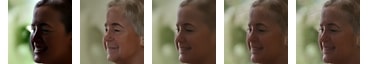}%
       }\\
    & \multicolumn{5}{@{}c@{}}{%
         \includegraphics[height=1.12cm]{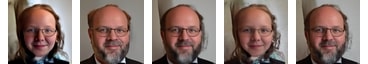}%
       }\\
    \midrule
    \multirow{2}{*}{8}
    & \multicolumn{5}{@{}c@{}}{%
         \includegraphics[height=1.12cm]{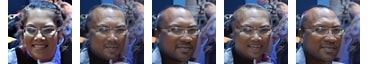}
       }\\
    & \multicolumn{5}{@{}c@{}}{%
         \includegraphics[height=1.12cm]{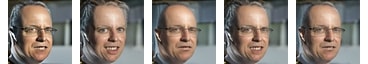}
       }\\
    \bottomrule
  \end{tabularx}
\end{minipage}\hfill
\begin{minipage}[t]{0.49\textwidth}
  \centering
  \begin{tabularx}{\linewidth}{@{} >{\centering\arraybackslash}m{1.3em} @{\hspace{6pt}}| *{5}{>{\centering\arraybackslash}m{4.18em}} @{}}
    \toprule
    NFE & \hspace{0.3em}UniPC & DMN & \hspace{-0.3em}GITS & \hspace{-0.7em}LD3 & \hspace{-1.8em}BF \\
    \midrule
    \multicolumn{6}{@{}c@{}}{AFHQv2 $64\times64$ with EDM~\citep{Karras:2022EDM}}\\
    \midrule
    \multirow{2}{*}{6}
    & \multicolumn{5}{@{}c@{}}{%
         \includegraphics[height=1.12cm]{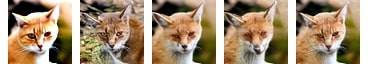}
       }\\
    & \multicolumn{5}{@{}c@{}}{%
         \includegraphics[height=1.12cm]{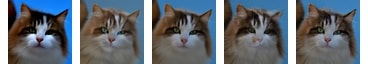}
       }\\
    \midrule
    \multirow{2}{*}{8}
    & \multicolumn{5}{@{}c@{}}{%
         \includegraphics[height=1.12cm]{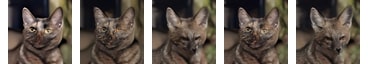}
       }\\
    & \multicolumn{5}{@{}c@{}}{%
         \includegraphics[height=1.12cm]{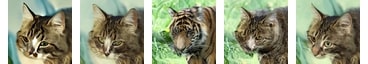}
       }\\
    \bottomrule
  \end{tabularx}
\end{minipage}
}
\caption{\textbf{Qualitative comparisons of samples generated using NFEs 6 and 8 on FFHQ and
AFHQv2 datasets.} We use UniPC solver as the base solver for both cases.}
\label{fig:main_qualitative_diffusion}
\end{figure}
\begin{figure}[t!]
\centering
{\scriptsize
\setlength{\tabcolsep}{4pt} 
\begin{minipage}[t]{0.485\textwidth}
  \centering
  \begin{tabularx}{\linewidth}{@{} >{\centering\arraybackslash}m{1.3em} @{\hspace{6pt}}| *{6}{>{\centering\arraybackslash}m{3.3em}} @{}}
    \toprule
    NFE & RK2 & \hspace{-0.2em}DMN & \hspace{-0.65em}GITS & \hspace{-1em}Bespoke & \hspace{-1.3em}LD3 & \hspace{-2.1em}BF \\
    \midrule
    \multicolumn{7}{@{}c@{}}{CIFAR-10 $32\times 32$ with ReFlow~\citep{Liu:2023RF}} \\
    \midrule
    \multirow{2}{*}{6}
    & \multicolumn{6}{@{}c@{}}{%
         \includegraphics[height=1.01cm]{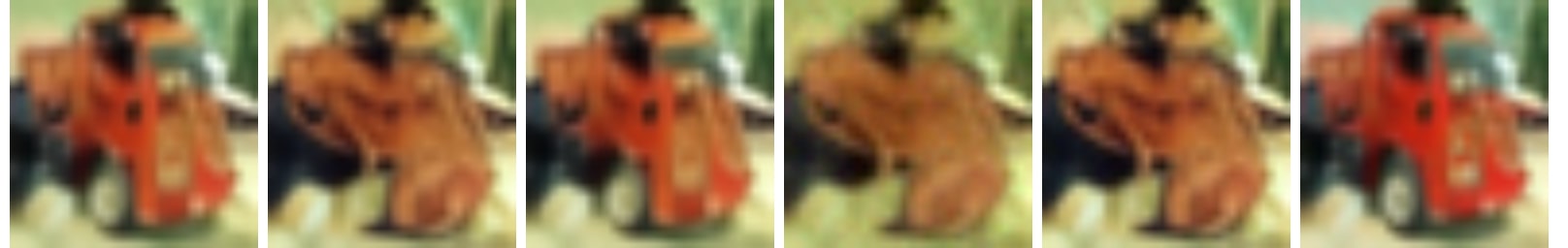}%
       }\\
    & \multicolumn{6}{@{}c@{}}{%
         \includegraphics[height=1.01cm]{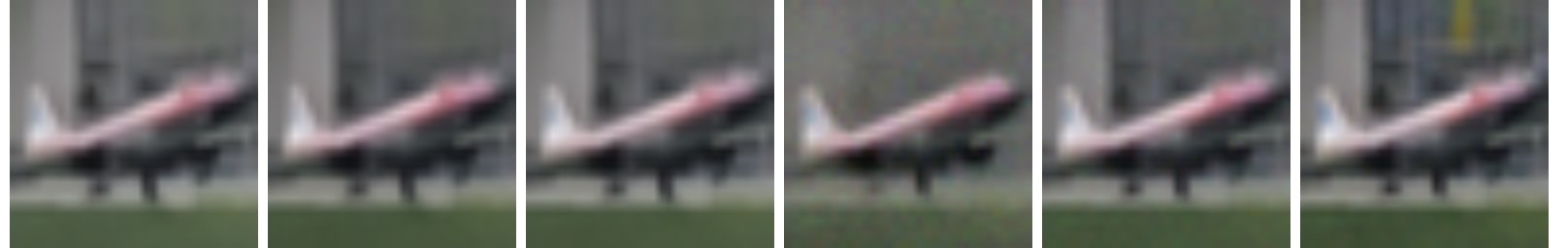}%
       }\\
    \midrule
    \multirow{2}{*}{8}
    & \multicolumn{6}{@{}c@{}}{%
         \includegraphics[height=1.01cm]{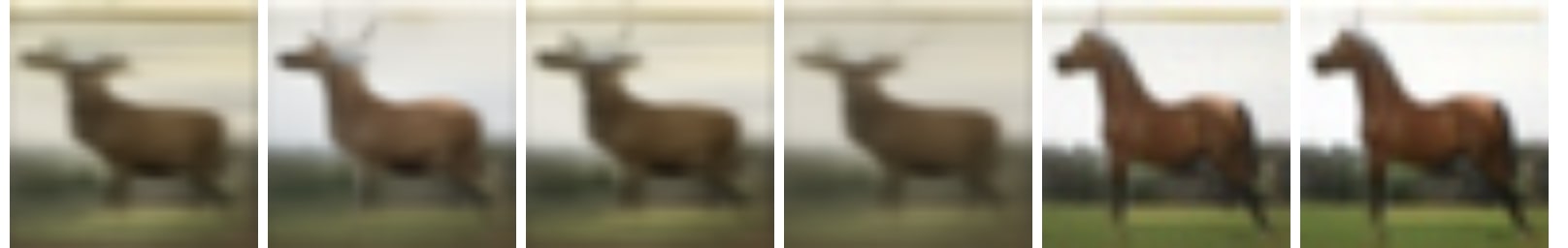}
       }\\
    & \multicolumn{6}{@{}c@{}}{%
         \includegraphics[height=1.01cm]{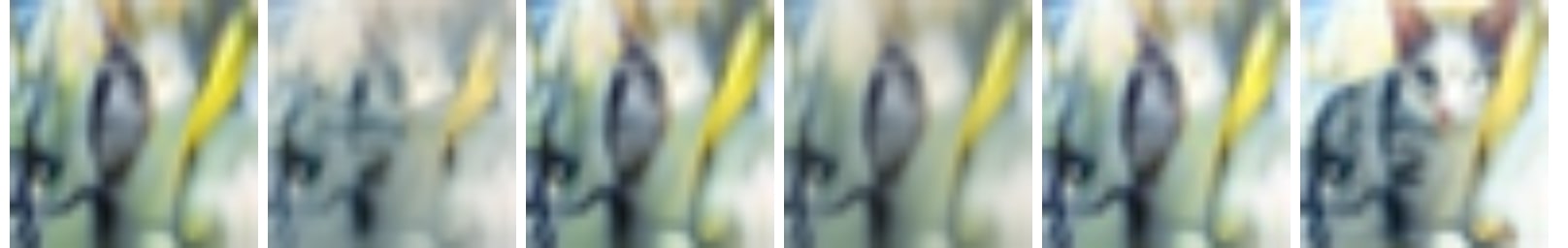}
       }\\
    \bottomrule
  \end{tabularx}
\end{minipage}\hfill
\begin{minipage}[t]{0.485\textwidth}
  \centering
  \begin{tabularx}{\linewidth}{@{} >{\centering\arraybackslash}m{1.3em} @{\hspace{6pt}}| *{6}{>{\centering\arraybackslash}m{3.3em}} @{}}
    \toprule
    NFE & RK2 & \hspace{-0.2em}DMN & \hspace{-0.65em}GITS & \hspace{-1em}Bespoke & \hspace{-1.3em}LD3 & \hspace{-2.1em}BF \\
    \midrule
    \multicolumn{7}{@{}c@{}}{ImageNet $256\times 256$ with FlowDCN~\citep{Wang:2024FlowDCN}}\\
    \midrule
    \multirow{2}{*}{6}
    & \multicolumn{6}{@{}c@{}}{%
         \includegraphics[height=1.01cm]{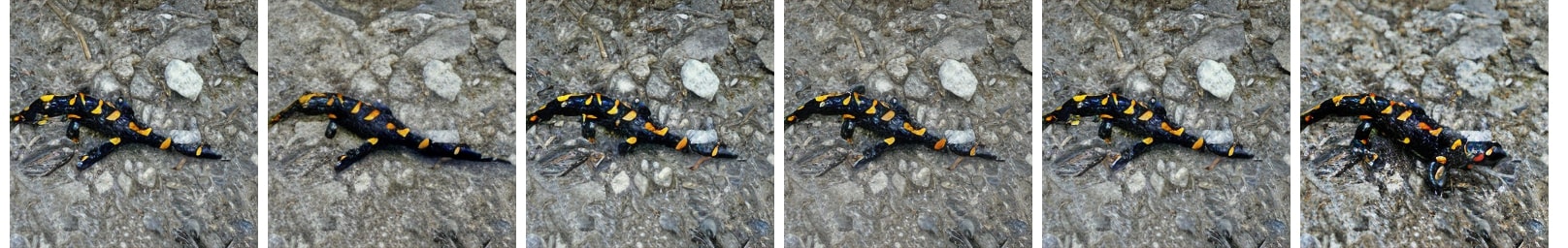}
       }\\
    & \multicolumn{6}{@{}c@{}}{%
         \includegraphics[height=1.01cm]{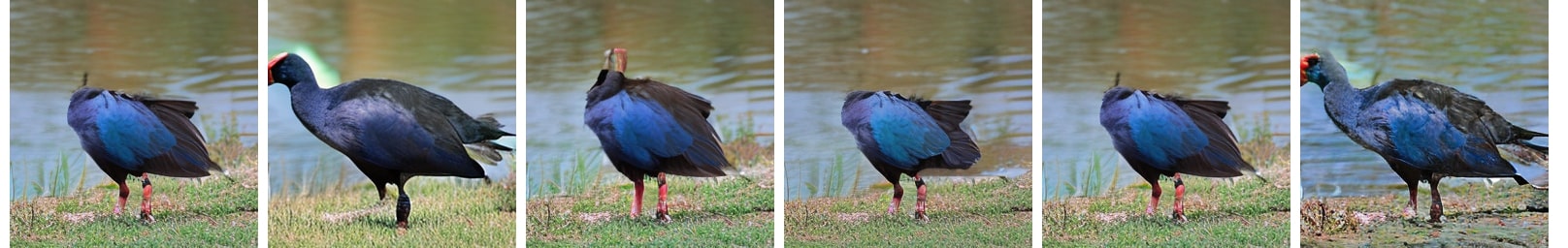}
       }\\
    \midrule
    \multirow{2}{*}{8}
     & \multicolumn{6}{@{}c@{}}{%
         \includegraphics[height=1.01cm]{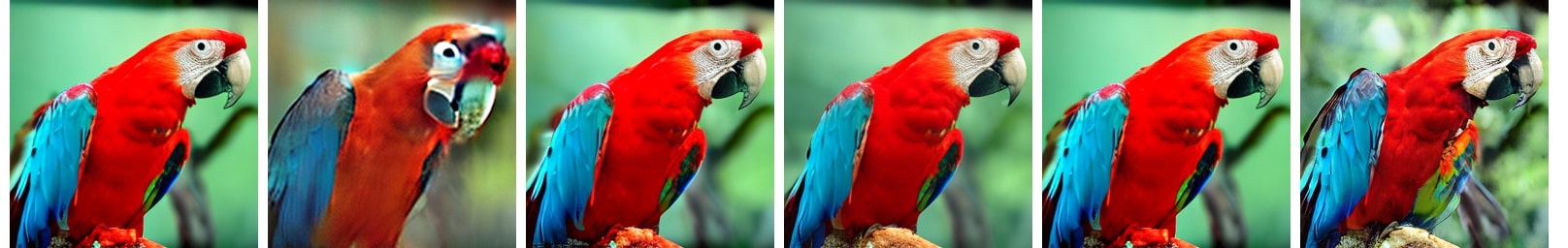}
       }\\
     & \multicolumn{6}{@{}c@{}}{%
         \includegraphics[height=1.01cm]{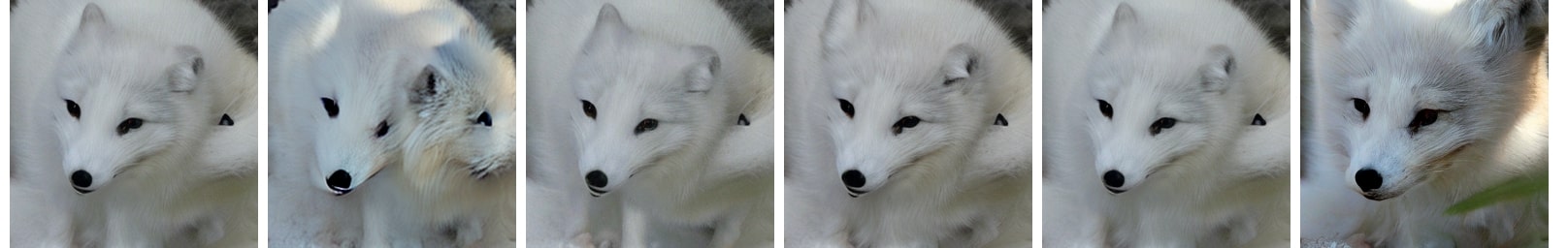}
       }\\
    \bottomrule
  \end{tabularx}
\end{minipage}
}
\caption{\textbf{Qualitative comparisons of samples generated using NFEs 6 and 8 on CIFAR-10 and ImageNet datasets.} We use RK2 solver as the base solver for both cases.}
\vspace{-\baselineskip}
\label{fig:main_qualitative_flow}
\end{figure}


\begin{figure}[t]
  \centering
  {\small\makebox[0.52\textwidth][c]{AFHQv2}\hfill
  \makebox[0.48\textwidth][c]{CIFAR-10}}

  \par\vspace{0.4em}
  \begin{subfigure}[t]{0.25\textwidth}
    \centering
    \includegraphics[width=\linewidth]{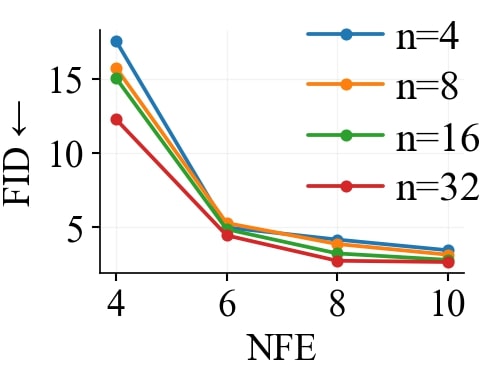}
    \caption{UniPC}
    \label{fig:fid-unipc}
  \end{subfigure}\hfill
  \begin{subfigure}[t]{0.25\textwidth}
    \centering
    \includegraphics[width=\linewidth]{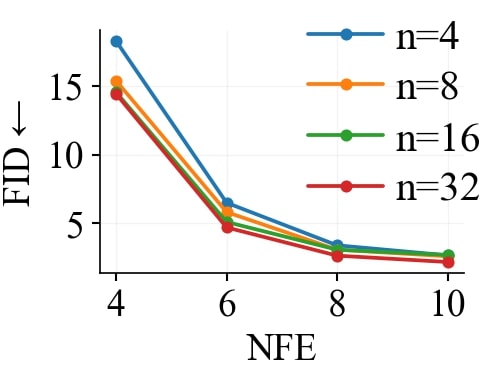}
    \caption{iPNDM}
    \label{fig:fid-ipndm}
  \end{subfigure}\hfill
  \begin{subfigure}[t]{0.25\textwidth}
    \centering
    \includegraphics[width=\linewidth]{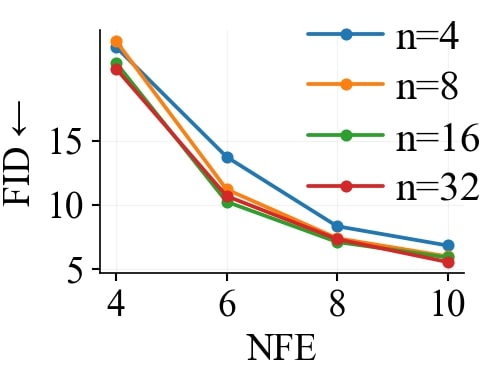}
    \caption{RK1}
    \label{fig:fid-euler}
  \end{subfigure}\hfill
  \begin{subfigure}[t]{0.25\textwidth}
    \centering
    \includegraphics[width=\linewidth]{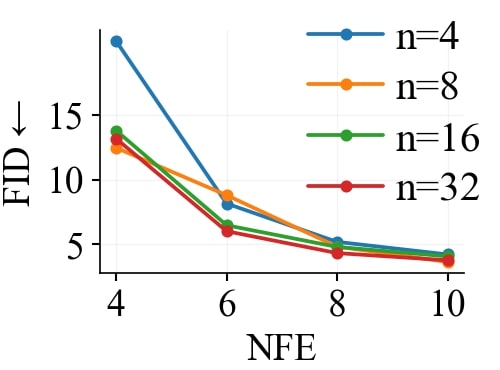}
    \caption{RK2}
    \label{fig:fid-midpoint}
  \end{subfigure}

  \caption{\textbf{Effect of the degree of B\'ezier functions.} Each line reports FID scores on CIFAR-10 across Bézier function degrees $n$ from 4 to 32. Higher degrees yield lower FID, with convergence around $n=32$.}
  \label{fig:bezier_order}
\end{figure}

\begin{center}
\setlength{\tabcolsep}{6pt}            
\renewcommand{\arraystretch}{1.15}     

\begin{minipage}[t]{0.54\linewidth}
  \centering
  \captionof{table}{\textbf{Generalizability of BézierFlow to unseen NFEs.} Each column corresponds to the inference NFE. Baselines are trained with the same NFE used at inference, whereas ours is trained once on NFE=10 without decoupled timesteps and directly applied to unseen NFEs. Reported results are FID scores on CIFAR-10 (lower is better; best in \textbf{bold}).}
  \label{tab:scheduler-transfer}
  {\scriptsize
  \begin{tabular*}{\linewidth}{@{\extracolsep{\fill}}lcccccccc}
    \toprule
    \multirow{2}{*}{Method} &
      \multicolumn{3}{c}{RK1} &
      \multicolumn{3}{c}{RK2} \\
    \cmidrule(lr){2-4}\cmidrule(lr){5-7}
     & 6 & 8 & 10
     & 6 & 8 & 10 \\
    \midrule
    
    GITS & 26.11 & 19.89 & 15.34 & 11.84 & 8.77 & 6.58 \\
    Bespoke & 18.08 & 11.88 & 9.25 & 64.87 & 16.67 & 13.34\\
    LD3 & 20.10 & 12.54 & 9.64 & 13.82 & 6.26 & 3.86 \\
    \midrule
    BF (NFE=10) & \textbf{18.50} & \textbf{9.02} & \textbf{6.01} & \textbf{9.57} & \textbf{5.32} & \textbf{3.71} \\
    \bottomrule
  \end{tabular*}}
\end{minipage}
\hfill
\begin{minipage}[t]{0.45\linewidth}
  \centering
  \captionof{table}{\textbf{Comparison with distillation methods on CIFAR-10.} The middle column reports FID and NFE. Our training time is measured on an NVIDIA A6000, while distillation times are directly from the original papers on NVIDIA A100s. Best in \textbf{bold}.}
  \label{tab:cifar10_ours_vs_baselines}
  {\scriptsize
  \begin{tabular*}{\linewidth}{@{\extracolsep{\fill}}lcc}
    \toprule
    Method & FID ($\downarrow$) / NFE & Training Time \\
    \midrule
    \multicolumn{3}{c}{CIFAR-10 $32{\times}32$ — Diffusion} \\
    \midrule
    CD   & 2.93 / NFE=2 & 8 days \\
    BF w/ UniPC                    & \textbf{2.09} / NFE=10 & 15 minutes \\
    \midrule
    \multicolumn{3}{c}{CIFAR-10 $32{\times}32$ — Flow}
    \\
    \midrule
    2-RF & 3.85 / NFE=2 & 8 days \\
    BF w/ RK2                         & \textbf{3.74} / NFE=10 & 15 minutes \\
    \bottomrule
  \end{tabular*}}
\end{minipage}
\end{center}

\vspace{-1.0\baselineskip}
\paragraph{Effect of the Degree of B\'ezier Functions.}
Fig.~\ref{fig:bezier_order} shows the effect of the degree of B\'ezier functions by varying the number of control points from $4$ to $32$. Across different datasets, NFEs, and base ODE solvers, increasing the number of control points consistently improves FID, indicating that higher-degree B\'ezier functions provide greater expressiveness for learning optimal sampling trajectories. Empirically, we observe that the additional gains between $n=16$ and $n=32$ become marginal, so we adopt $n=32$ as our default. Note that the training time overhead from increasing $n$ is negligible, with only a few seconds difference between $n=4$ and $n=32$.

\vspace{-0.5\baselineskip}
\paragraph{Generalizability to Unseen NFEs.}
As discussed in Sec.~\ref{subsec:connection_to_prior_Work}, unlike prior works~\citep{Xue:2024DMN, Chen:2024GITS, Tong:2025LD3, Shaul:2023Bespoke} that learn \emph{discrete per-step variables}, \ours{} learns the sampling trajectory with continuous functions, enabling generalization to NFEs unseen during training. As shown in Tab.~\ref{tab:scheduler-transfer}, \ours{} trained with NFE=10 also performs well at NFE=6 and NFE=8, achieving outperforming FID scores against the baselines trained directly at those NFEs.


\vspace{-0.5\baselineskip}
\paragraph{Training Efficiency.} 
In Tab.~\ref{tab:cifar10_ours_vs_baselines}, we compare \ours{} on CIFAR-10~\citep{Krizhevsky:2009CIFAR} against representative distillation-based approaches: Consistency Distillation (CD)~\citep{Song:2023CM} for diffusion models and 2-Rectified Flow (2-RF)~\citep{Liu:2023RF} for flow models. While \ours{} achieves better FID scores with a larger inference NFE, its \emph{training} cost is significantly lower, requiring only \textbf{15 minutes} compared to \textbf{8 days} for distillation, which corresponds to approximately $\mathbf{0.13\%}$ of the training time. A more comprehensive comparison of training time is provided in App.~\ref{subsec:comparison_with_few_step}.

\vspace{-0.5\baselineskip}
\paragraph{Combination with LD3.}
Although the search space of LD3 is a subset of that of \ours{}, LD3 optimizes discrete timesteps whereas \ours{} learns continuous sampling paths, making it natural to ask whether jointly optimizing the two could offer complementary improvements. We therefore optimize both the target timesteps and the scheduler in a unified framework. As shown in Tab.~\ref{tbl:ours_ld3} of the Appendix, however, the combination does not offer clear advantages over using \ours{} alone. This indicates that the advantages provided by LD3 are already captured within the learned scheduler, and thus do not translate into further gains when optimized jointly.



\section{Conclusion}
We introduce \ours{}, a lightweight training framework for few-step generation. By combining the optimization of sampling trajectories, rather than discrete ODE timesteps, with a B\'ezier-based continuous parameterization, \ours{} achieves consistent improvements across diffusion and flow models with only minutes of training, surpassing existing lightweight training approaches. For future work, we plan to explore alternative basis functions for B\'ezier functions, which may enable richer expressiveness with fewer control points.  


\paragraph*{Ethics Statement.}
We affirm adherence to the ICLR Code of Ethics. This work relies only on publicly available models and datasets and does not involve human subjects, user data, or personally identifiable information. We acknowledge the potential for misuse of generative AI and encourage responsible deployment and use of our method.

\section*{Acknowledgements}
This work was supported by IITP grants (RS-2024-00399817, RS-2025-25441313, RS-2025-25443318), funded by the Korean government (MSIT); the Industrial Technology Innovation Program (RS-2025-02317326), funded by the Korean government (MOTIE); the National Supercomputing Center (KSC-2025-CRE-0475); and the DRB-KAIST SketchTheFuture Research Center.

{\small
\bibliography{main}
\bibliographystyle{iclr2026_conference}
}


\clearpage
\newpage
\appendix



\section{Validity of Scheduler Reparameterization}
\label{sec:validity_of_scheduler_reparameterization}
We provide a detailed exposition of two key properties discussed in Sec.~\ref{subsec:transformed-path}: (i) the preservation of endpoint marginals, and (ii) the invaraince of the SI training objective with respect to the choice of schedule, provided that the SNR endpoints are identical.

\begin{proposition}
\label{prop:endpoint_marginal_preservation}
(Endpoint marginals equivalence)
Let $p_0$ and $p_1$ denote the endpoint marginals of $x_t$ governed by a scheduler $(\alpha_t, \sigma_t)$, $\bar{p}_0$ and $\bar{p}_1$ denote those of another stochastic interpolant $\bar{x}_s$. If $\bar{x}_s = c_s x_{t_s}$ with $c_s$ and $t_s$ defined in Eq.~\ref{eq:scale_transform}, then the endpoint marginals are preserved, i.e., $p_0 = \bar{p}_0$ and $p_1 = \bar{p}_1$. 
\end{proposition}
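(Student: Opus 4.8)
The plan is to show that the reparameterized process satisfies $\bar x_s = \bar\alpha(s)\,x_1 + \bar\sigma(s)\,x_0$ for every $s\in[0,1]$, built from the \emph{same} endpoint samples $x_0\sim p_0$ and $x_1\sim p_1$ that generate the source interpolant $x_t$ in Eq.~\ref{eq:si_ode}. The endpoint marginal equivalence then follows immediately from the boundary conditions $\bar\alpha(0)=\bar\sigma(1)=0$ and $\bar\alpha(1)=\bar\sigma(0)=1$: at $s=0$ and $s=1$ the process $\bar x_s$ reduces to the random variables $x_0$ and $x_1$ themselves, so that $\bar x_0$ and $\bar x_1$ have the same laws as $x_0$ and $x_1$, namely $p_0$ and $p_1$.

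First I would check that $c_s$ is well-defined on the open interval $0<s<1$, where $\sigma(t_s)>0$ and $\bar\sigma(s)>0$: the two expressions $\bar\sigma(s)/\sigma(t_s)$ and $\bar\alpha(s)/\alpha(t_s)$ in Eq.~\ref{eq:scale_transform} coincide because $t_s=\rho^{-1}(\bar\rho(s))$ forces $\rho(t_s)=\bar\rho(s)$, i.e. $\alpha(t_s)/\sigma(t_s)=\bar\alpha(s)/\bar\sigma(s)$, which is exactly the claimed identity after cross-multiplying. Substituting the one-sided interpolant identity $x_{t_s}=\alpha(t_s)x_1+\sigma(t_s)x_0$ then gives
\[
\bar x_s \;=\; \bigl(c_s\,\alpha(t_s)\bigr)\,x_1 + \bigl(c_s\,\sigma(t_s)\bigr)\,x_0 \;=\; \bar\alpha(s)\,x_1 + \bar\sigma(s)\,x_0,
\]
where I simplify the $x_1$-coefficient using the $\alpha$-form of $c_s$ and the $x_0$-coefficient using the $\sigma$-form. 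For the endpoints $s\in\{0,1\}$, $c_0=c_1=1$ by definition, and I would identify $t_0$ and $t_1$ from monotonicity of the SNR: $\rho$ is strictly increasing and continuous with $\rho(0)=\alpha(0)/\sigma(0)=0$ and $\rho(t)\to\infty$ as $t\to1^-$, and likewise for $\bar\rho$, so $t_0=\rho^{-1}(\bar\rho(0))=\rho^{-1}(0)=0$ and $t_1=\rho^{-1}(\bar\rho(1))=\rho^{-1}(\infty)=1$. Hence $\bar x_0=x_{t_0}=x_0$ and $\bar x_1=x_{t_1}=x_1$. For completeness I would also note that $c_s\to1$ as $s\to0^+$ and as $s\to1^-$, so the piecewise definition of $c_s$ is continuous on $[0,1]$.

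Combining the two cases, $\bar x_0$ equals $x_0$ and $\bar x_1$ equals $x_1$ as random variables, so their distributions agree: $\bar p_0=p_0$ and $\bar p_1=p_1$. The only delicate point is the endpoint $t=1$ (equivalently $s=1$), where $\sigma$ and $\bar\sigma$ vanish and $\rho,\bar\rho$ diverge; I would make the inversion $t_1=\rho^{-1}(\infty)=1$ precise either by restricting $\rho,\bar\rho$ to $[0,1)$ and passing to the limit $t\to1^-$, or by viewing them as homeomorphisms onto the extended half-line $[0,\infty]$ — both legitimate under the strict-monotonicity assumption the paper already uses to guarantee that $\bar\rho^{-1}$ exists. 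Beyond this boundary bookkeeping, the argument is a short algebraic identity, so I do not expect any substantive obstacle.
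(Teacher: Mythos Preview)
Your proof is correct and follows essentially the same approach as the paper: verify that $c_s=1$ at $s\in\{0,1\}$ and that $t_0=0$, $t_1=1$ via the SNR monotonicity, so that $\bar x_0=x_0$ and $\bar x_1=x_1$ as random variables, hence $\bar p_0=p_0$ and $\bar p_1=p_1$. You go further than the paper by also establishing the interior identity $\bar x_s=\bar\alpha(s)x_1+\bar\sigma(s)x_0$ for $0<s<1$ and by treating the $s\to1^-$ limit more carefully, but the core endpoint argument is identical.
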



\begin{proof}
By the boundary conditions discussed in Sec.~\ref{sec:background}, any SI scheduler satisfies
\begin{align}
\alpha(0)=0,\, \alpha(1)=1,\, \sigma(0)=1,\, \sigma(1)=0.    
\end{align}
From Eq.~\ref{eq:scale_transform}, $c_s=1$ at both $s=0$ and $s=1$. By the definition of $t_s=\rho^{-1}(\bar\rho(s))$, we also have $t_{s=0}=0$ and $t_{s=1}=1$. Hence, $\bar{x}_{s=0}=x_{t=0}$ and $\bar{x}_{s=1}=x_{t=1}$. Since the pair of endpoints $(x_0, x_1)$ does not change under the sampling path transformation, the endpoint marginals also coincide: $p_0=\bar{p}_0$ and $p_1=\bar{p}_1$.
\end{proof}

\begin{proposition}
\label{prop:training_objective_invariance}
(Training objective invariance)
For any two schedulers $(\alpha_t,\sigma_t)$ and $(\bar\alpha_s,\bar\sigma_s)$ with matching SNR endpoints, training an SI model $S_\phi$ under either scheduler minimizes the same training objective, hence yields equivalent optima.
\end{proposition}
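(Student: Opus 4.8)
The plan is to express the SI training objective, under either scheduler, as an integral over the signal-to-noise ratio (SNR) and show that its minimizer is scheduler-independent once the SNR endpoints agree. First I would recall the objective in a single unified form: for a scheduler $(\alpha_t,\sigma_t)$ one trains $S_\phi$ to predict $\mathbb{E}[x_0 \mid x_t]$ by minimizing $\int_0^1 w(t)\,\mathbb{E}_{x_0,x_1}\|S_\phi(x_t,t)-x_0\|^2\,dt$ with a strictly positive weighting $w$ and $x_t=\alpha(t)x_1+\sigma(t)x_0$. The velocity and score targets are, by the relations recalled in Sec.~\ref{sec:background}, deterministic scheduler-dependent reparameterizations of this one ($u_t(x)=\dot\alpha(t)\,\mathbb{E}[x_1\mid x_t=x]+\dot\sigma(t)\,\mathbb{E}[x_0\mid x_t=x]$ and $\nabla\log p_t(x)=-\sigma^{-1}(t)\,\mathbb{E}[x_0\mid x_t=x]$), so it suffices to treat the denoising form.

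Next I would change variables from $t$ to the SNR $\lambda=\rho(t)=\alpha(t)/\sigma(t)$, which is legitimate since $\rho$ is strictly monotone (hence invertible) on $[0,1)$, and write $x_t=\sigma(t)\,\tilde x_\lambda$ with $\tilde x_\lambda := \lambda\,x_1 + x_0$. The decisive observation is that the joint law of $(\tilde x_\lambda, x_0, x_1)$ depends only on $\lambda$ and not on the scheduler, since the scheduler enters $x_t$ only through the positive scale $\sigma(t)$. Writing $\tilde S(\tilde x,\lambda):=S_\phi\big(\sigma(t(\lambda))\,\tilde x,\,t(\lambda)\big)$, which is a bijective reparameterization of $S_\phi$ for a fixed scheduler, the objective becomes $\int_{\lambda_{\min}}^{\lambda_{\max}}\tilde w(\lambda)\,\mathbb{E}_{x_0,x_1}\|\tilde S(\tilde x_\lambda,\lambda)-x_0\|^2\,d\lambda$ for some positive $\tilde w$. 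The integration limits are precisely the SNR endpoints, which the hypothesis assumes equal for the two schedulers; and pointwise in $\lambda$ the integrand is a strictly convex quadratic in $\tilde S(\cdot,\lambda)$ whose minimizer is $\mathbb{E}[x_0\mid \tilde x_\lambda]$, independent of the value $\tilde w(\lambda)>0$.

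I would then conclude that both objectives are minimized by the same scheduler-free function $\tilde S^\star(\tilde x,\lambda)=\mathbb{E}[x_0\mid \tilde x_\lambda=\tilde x]$, so after undoing the closed-form reindexing $S_\phi(x,t)=\tilde S\big(x/\sigma(t),\,\rho(t)\big)$ (the standard preconditioning reparameterization, as in EDM~\citep{Karras:2022EDM}) a model optimal under $(\alpha_t,\sigma_t)$ is optimal under $(\bar\alpha_s,\bar\sigma_s)$ as well; equivalently, a single pretrained $S_\phi$ realizes both optima up to this known coordinate change. This is exactly what legitimizes swapping the source scheduler for the target scheduler at inference in Sec.~\ref{subsec:transformed-path}.

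The step I expect to be the main obstacle is handling the weighting honestly: two objectives with different fixed weightings are not the same functional, so the rigorous statement is that they share a minimizer (``equivalent optima''), not that they take equal values; I would make this precise with the pointwise-in-$\lambda$ convexity argument above, observing that the weight only needs to remain strictly positive. A secondary subtlety is the degeneracy of the SNR change of variables near the boundary where $\sigma(t)\to 0$ (maximal SNR); since the statement already restricts to matching SNR endpoints, this is covered, but I would either work on compact sub-intervals and pass to the limit or appeal to the standard truncation used in practice.
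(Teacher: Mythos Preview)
Your proposal is correct and follows the same core route as the paper: reduce to the denoiser form, change variables from the scheduler's time to the SNR, and use the hypothesis of matching SNR endpoints to obtain identical integration ranges, with the scheduler entering only through a positive scalar on the interpolant.

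The main point of divergence is how the weighting is handled. The paper invokes the specific SNR-parameterized loss of \citet{Kingma:2021VDM} (their Eq.~18), under which the change of variables produces an integrand with \emph{no} residual scheduler-dependent weight; it then observes that the two interpolants differ only by the scalar $\alpha_\nu/\bar\alpha_\nu$ and concludes that the two objectives are literally equal as functionals. You instead allow an arbitrary positive weight $w(t)$, carry the resulting $\tilde w(\lambda)$ through, and argue only that the pointwise-in-$\lambda$ minimizer $\mathbb{E}[x_0\mid \tilde x_\lambda]$ is weight-independent, so the two objectives share optima without being the same functional. Your version is thus more general (it does not rely on the VDM weighting) and more honest about what ``equivalent optima'' means, while the paper's version proves the stronger ``same objective'' claim but only for that particular loss. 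Your explicit treatment of the input rescaling via the reparameterization $\tilde S(\tilde x,\lambda)=S_\phi(\sigma(t(\lambda))\tilde x,\,t(\lambda))$ also makes precise a step the paper leaves implicit when it jumps from $x_\nu=(\alpha_\nu/\bar\alpha_\nu)\bar x_\nu$ to equality of losses.
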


\begin{proof}
As noted in Sec.~\ref{sec:background}, different SI models (noise, velocity, or score predictors) learn different but interchangeable quantities, and can all be expressed in the denoiser form $\hat x_\phi$. For convenience, we therefore recall the continuous-time SI objective in the denoiser form (Eq. 18 from \citet{Kingma:2021VDM}): for a schedule $(\alpha_t,\sigma_t)$ with $\rho(t)=\alpha(t)/\sigma(t)$ strictly increasing, define $\nu=\rho^2$ and 
\begin{equation}
x_\nu=\alpha_\nu x_1+\sigma_\nu x_0,\qquad
\alpha_\nu:=\alpha\!\big(\rho^{-1}(\sqrt{\nu})\big),\ \ 
\sigma_\nu:=\sigma\!\big(\rho^{-1}(\sqrt{\nu})\big),
\end{equation}
with $\nu_{\min}=\rho(0)^2,\ \nu_{\max}=\rho(1)^2$. Then,
\begin{equation}
\mathcal L_{\nu}[\phi;(\alpha_t,\sigma_t)]
=\tfrac12\int_{\nu_{\min}}^{\nu_{\max}}
\mathbb{E}_{x_1\sim p_{\rm data},\,x_0\sim p_0}\!\left[
\|x_1-\hat x_\phi(x_\nu,\nu)\|_2^2\right]\,d\nu.
\end{equation}
Now consider another schedule $(\bar\alpha_s,\bar\sigma_s)$ with the same SNR endpoints. Since $\rho(t)=\frac{\alpha(t)}{\sigma(t)}$ and $\bar\rho(s)=\frac{\bar\alpha(s)}{\bar\sigma(s)}$ are strictly increasing, the maps $t\mapsto\nu=\rho(t)^2$ and $s\mapsto\nu=\bar\rho(s)^2$ are bijections onto the common interval $[\nu_{\text{min}},\nu_{\text{max}}]$. Now, for each fixed $\nu$, we have
\begin{equation}
    \nu=\frac{\alpha(t)^2}{\sigma(t)^2}=\frac{\bar\alpha(s)^2}{\bar\sigma(s)^2},
\end{equation}
which implies $\sigma_\nu=\frac{\alpha_\nu}{\sqrt\nu}$ and $\bar\sigma_\nu=\frac{\bar\alpha_\nu}{\sqrt\nu}$. Therefore, the interpolants satisfy 
\begin{equation}
    x_\nu=\alpha_\nu\!\Big(x_1+\tfrac{1}{\sqrt\nu}x_0\Big),\quad \bar x_\nu=\bar\alpha_\nu\!\Big(x_1+\tfrac{1}{\sqrt\nu}x_0\Big),
\end{equation}
so that $x_\nu=\tfrac{\alpha_\nu}{\bar\alpha_\nu}\,\bar x_\nu$.
This shows that the two interpolants differ only by a scalar rescaling factor, and since the integration limits are the same, we conclude
\begin{equation}
\mathcal L_{\nu}[\phi;(\alpha_t,\sigma_t)]
=
\mathcal L_{\nu}[\phi;(\bar\alpha_s,\bar\sigma_s)].
\end{equation}
\end{proof}

\section{Theoretical Analysis of Sampling Trajectory Spaces}
\label{sec:theoretical_analysis}

In this section, we formally show that the family of sampling trajectories realizable by BézierFlow is a super set of that of LD3~\citep{Tong:2025LD3}, offering better optimization advantages.


\begin{theorem}[Inclusion of LD3 in the BézierFlow Trajectory Space]
\label{thm:strict_inclusion}
Let $M$ and $D$ denote the number of sampling steps and the dimension of the state $x$, respectively. Let $\mathcal{X}_{\mathrm{LD3}}, \mathcal{X}_{\mathrm{BF}} \subseteq \mathbb{R}^{M+1 \times D}$ be the sets of sampling trajectories over discrete timesteps realizable by LD3 and \ours{} (parameterized by B\'ezier curves of degree $n \ge M$). Assuming the source sampling path defines a non-linear geometry, the trajectory space of LD3 is a subset of that of \ours{}:
\begin{equation}
    \mathcal{X}_{\mathrm{LD3}} \subsetneq\mathcal{X}_{\mathrm{BF}}.
\end{equation}
\end{theorem}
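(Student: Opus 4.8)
The plan is to establish the two inclusions $\mathcal{X}_{\mathrm{LD3}}\subseteq\mathcal{X}_{\mathrm{BF}}$ and $\mathcal{X}_{\mathrm{LD3}}\neq\mathcal{X}_{\mathrm{BF}}$ separately, after first giving an explicit coordinate description of both sets. Fix a generic pair of endpoints $x_0,x_1$ that are linearly independent (the collinear case is handled by the same, purely one-dimensional, argument). Using the scale reparameterization of Eq.~\ref{eq:scale_transform}, a short computation gives $\bar x_s = c_s x_{t_s} = \bar\alpha(s)\,x_1 + \bar\sigma(s)\,x_0$, so the trajectory realized by a scheduler $(\bar\alpha,\bar\sigma)$ at the fixed inference timesteps $0=s_0<\dots<s_M=1$ is the tuple $\big(\bar\alpha(s_i)x_1+\bar\sigma(s_i)x_0\big)_{i=0}^M$, i.e.\ $M+1$ points read off the planar curve $s\mapsto(\bar\alpha(s),\bar\sigma(s))$. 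Taking $\bar\alpha=\alpha,\ \bar\sigma=\sigma$ and moving the timesteps instead recovers LD3, so $\mathcal{X}_{\mathrm{LD3}}=\{(x_{u_i})_{i=0}^M : 0=u_0<\dots<u_M=1\}$, every point of which lies on the \emph{fixed} source curve $\mathcal{C}=\{\alpha(u)x_1+\sigma(u)x_0 : u\in[0,1]\}$; $\mathcal{X}_{\mathrm{BF}}$ by contrast ranges over all curves realizable by admissible B\'ezier schedulers. (I would state the theorem at the level of these geometric interpolant states; the remark at the end addresses ODE-solver discretization.)

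For the inclusion, given LD3 timesteps $\{u_i\}$ I must produce an admissible B\'ezier scheduler $(\bar\alpha^\theta,\bar\sigma^\theta)$ of some degree $n\ge M$ with $\bar\alpha^\theta(s_i)=\alpha(u_i)$ and $\bar\sigma^\theta(s_i)=\sigma(u_i)$ for all $i$; the two trajectories then agree coordinate by coordinate. The $\alpha$ and $\sigma$ components decouple. For $\bar\alpha^\theta$: since $\alpha$ is increasing with $\alpha(0)=0,\alpha(1)=1$, the data $\{(s_i,\alpha(u_i))\}$ is strictly increasing with the correct endpoints, so the boundary conditions are automatic, and it remains to interpolate strictly increasing data in $[0,1]$ by a B\'ezier function whose control points are themselves increasing and lie in $[0,1]$ — precisely the admissibility enforced by the cumulative-softmax parameterization. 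I would do this in two steps: (i) invoke shape-preserving (monotone) polynomial interpolation to get a polynomial $P$ with $P'>0$ on $[0,1]$, $P(0)=0$, $P(1)=1$, interpolating the data; (ii) write $P$ in Bernstein form of degree $n\ge\max(\deg P,M)$ and use that, because $P'>0$ on $[0,1]$, the degree-$(n{-}1)$ Bernstein coefficients of $P'$ are all strictly positive for $n$ large (a P\'olya/Bernstein positivity fact), which forces the control points $C_0,\dots,C_n$ of $P$ to be strictly increasing; together with $C_0=P(0)=0$ and $C_n=P(1)=1$ this gives $C_i\in[0,1]$. The same argument applied to $1-\bar\sigma^\theta$ handles $\bar\sigma^\theta$. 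Finally, degree elevation of a B\'ezier curve replaces the control polygon by convex combinations of its vertices, hence preserves monotonicity, the $[0,1]$ range, and the boundary values; so any degree $n\ge M$ large enough for step (ii) keeps the realized trajectory inside $\mathcal{X}_{\mathrm{BF}}$.

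For strictness I would exhibit one admissible B\'ezier trajectory that no LD3 timestep choice can match. By the non-linearity hypothesis, the source curve $\Gamma=\{(\alpha(u),\sigma(u)):u\in[0,1]\}$ is not contained in any affine line; moreover the set of values $(\bar\alpha^\theta(s_1),\bar\sigma^\theta(s_1))$ attainable by admissible schedulers is genuinely two-dimensional (the linear scheduler $\bar\alpha(s)=s,\ \bar\sigma(s)=1-s$ is admissible, and perturbing the interior control points of $\bar\alpha$ and of $\bar\sigma$ independently moves $\bar\alpha^\theta(s_1)$ and $\bar\sigma^\theta(s_1)$ while strict monotonicity of $\bar\rho$ persists, being an open condition). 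Since $\Gamma$ is one-dimensional, hence Lebesgue-null in that region, I can pick an admissible scheduler with $(\bar\alpha^\theta(s_1),\bar\sigma^\theta(s_1))\notin\Gamma$; linear independence of $x_0,x_1$ then makes its second state $\bar\alpha^\theta(s_1)x_1+\bar\sigma^\theta(s_1)x_0$ lie off $\mathcal{C}$. As every point of every LD3 trajectory lies on $\mathcal{C}$, this trajectory belongs to $\mathcal{X}_{\mathrm{BF}}\setminus\mathcal{X}_{\mathrm{LD3}}$, so $\mathcal{X}_{\mathrm{LD3}}\subsetneq\mathcal{X}_{\mathrm{BF}}$.

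The main obstacle is step (ii): passing from ``a monotone polynomial interpolant exists'' to ``a monotone polynomial interpolant exists whose Bernstein control polygon is itself monotone and confined to $[0,1]$'', i.e.\ that BF's softmax-type admissibility can always be satisfied. Fixing the degree at exactly $M$ is risky (the unique low-degree Hermite/collocation interpolant need not have monotone control points), so the safe route is to let $n$ grow: P\'olya-type positivity of the Bernstein coefficients of a strictly positive polynomial, plus the monotonicity-preserving property of degree elevation, make it work, and the phrase ``of degree $n\ge M$'' in the statement is exactly what licenses taking $n$ as large as needed. A secondary subtlety, worth a one-line remark rather than a full treatment, is that if one insists on including ODE-solver discretization error in the definition of a trajectory, the conclusion still holds, since a near-piecewise-linear reparameterization (which is realizable by a high-degree monotone B\'ezier) makes the student solver reproduce the teacher's per-step updates up to arbitrarily small error.
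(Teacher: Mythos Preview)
Your proposal is correct and shares the paper's two-step skeleton, but you are more rigorous on the inclusion and take a different route on strictness.

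For inclusion, the paper simply asserts that a degree-$n\ge M$ B\'ezier curve can hit any $M{+}1$ prescribed values, sets $\bar\alpha_{\theta^\star}(s_k)=\alpha(t_k)$, $\bar\sigma_{\theta^\star}(s_k)=\sigma(t_k)$, and concludes $c_{s_k}=1$, $\bar x_{s_k}=x_{t_k}$; it never checks that the interpolant is \emph{admissible} under the cumulative-softmax parameterization of Sec.~\ref{subsec:opt-bezier} (monotone control points in $[0,1]$). Your monotone-interpolation step followed by P\'olya/Bernstein positivity and degree elevation is exactly what closes that gap, and your reading of ``$n\ge M$'' as licensing arbitrarily large $n$ is the only one under which admissibility can always be met. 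For strictness, the paper instead keeps the SNR values fixed, $\bar\rho_\theta(s_k)=\rho(t_k)$, while perturbing $\bar\sigma_\theta(s_k)$, yielding scaled states $c_{s_k}x_{t_k}$ with some $c_{s_k}\neq 1$, and argues LD3 (confined to $c\equiv 1$) cannot match them. Your dimension-counting argument---the source locus $\Gamma$ is a $1$-curve while the admissible $(\bar\alpha(s_1),\bar\sigma(s_1))$ fill a genuinely two-dimensional set, so one can land off $\Gamma$---is a cleaner alternative and in fact does not need the non-linearity hypothesis you invoke, since a smooth curve is Lebesgue-null in the plane regardless.
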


\begin{proof}
Let $\{t_k\}_{k=0}^M$ and $\{s_k\}_{k=0}^M$ denote the timesteps on the source scheduler and on the trajectory induced by the Bézier SI scheduler, respectively, with \[
    0 = t_0 < t_1 < \cdots < t_M = 1,
    \qquad
    0 = s_0 < s_1 < \cdots < s_M = 1.
\] 
Define
\begin{equation}
    \alpha_k := \alpha(t_k),\quad
    \sigma_k := \sigma(t_k),\quad
    x_{t_k} = \alpha_k x_1 + \sigma_k x_0,
    \qquad k=0,\dots,M.
\end{equation}
With this notation, any LD3 sampling trajectory over $M$ timesteps can be written as
\(
    \mathbf{x} = (x_{t_0},\dots,x_{t_M}) \in \mathcal{X}_{\mathrm{LD3}}.
\)

Since a B\'ezier curve of degree $n \ge M$ can interpolate any $M+1$ distinct values, there exists $\theta^\star$ such that
\begin{equation}
    \bar{\alpha}_{\theta^\star}(s_k) = \alpha_k,
    \qquad
    \bar{\sigma}_{\theta^\star}(s_k) = \sigma_k,
    \qquad \forall k.
\end{equation}
Hence
\begin{equation}
    \bar{\rho}_{\theta^\star}(s_k)
    :=
    \frac{\bar{\alpha}_{\theta^\star}(s_k)}{\bar{\sigma}_{\theta^\star}(s_k)}
    =
    \frac{\alpha_k}{\sigma_k}
    =
    \rho(t_k),
\end{equation}
and therefore
\begin{equation}
    t_{s_k} = \rho^{-1}\big(\bar{\rho}_{\theta^\star}(s_k)\big) = t_k,
    \qquad
    c_{s_k} = \frac{\bar{\sigma}_{\theta^\star}(s_k)}{\sigma(t_{s_k})}
            = \frac{\sigma_k}{\sigma_k} = 1.
\end{equation}
Using the sampling transformation in Eq.~\ref{eq:ubar},
\begin{equation}
    \bar{x}_{s_k} = c_{s_k} x_{t_{s_k}} = x_{t_k},
    \qquad \forall k,
\end{equation}
so every $\mathbf{x} \in \mathcal{X}_{\mathrm{LD3}}$ is also realizable by \ours{}, and thus
\begin{equation}
    \mathcal{X}_{\mathrm{LD3}} \subseteq \mathcal{X}_{\mathrm{BF}}.
\end{equation}

For strictness, fix $\{t_k\}_{k=0}^M$ and consider a target scheduler $\theta$ such that
\begin{equation}
    \bar{\rho}_{\theta}(s_k) = \rho(t_k),
    \qquad
    \bar{\sigma}_{\theta}(s_k) \neq \sigma_k
    \quad\text{for at least one } k.
\end{equation}
Then
\begin{equation}
    t_{s_k} = \rho^{-1}\big(\bar{\rho}_{\theta}(s_k)\big) = t_k,
    \qquad
    c_{s_k} = \frac{\bar{\sigma}_{\theta}(s_k)}{\sigma(t_{s_k})}
            = \frac{\bar{\sigma}_{\theta}(s_k)}{\sigma_k} \neq 1
    \quad\text{for at least one } k,
\end{equation}
and the resulting trajectory satisfies
\begin{equation}
    \bar{x}_{s_k} = c_{s_k} x_{t_k},
    \qquad c_{s_k} \neq 1\quad\text{for some } k.
\end{equation}
Since LD3 is constrained to the fixed source scheduler, which corresponds to sampling via Eq.~\ref{eq:ubar} with
\begin{equation}
    s_k = t_k,
    \qquad
    c_{s_k} \equiv 1,
\end{equation}
so any trajectory with $c_{s_k} \neq 1$ for some $k$ cannot be realized by LD3.
Thus
\begin{equation}
    \mathbf{x}_{\theta} \in \mathcal{X}_{\mathrm{BF}}
    \quad \text{and} \quad
    \mathbf{x}_{\theta} \notin \mathcal{X}_{\mathrm{LD3}},
\end{equation}
which implies
\begin{equation}
    \mathcal{X}_{\mathrm{LD3}} \subsetneq \mathcal{X}_{\mathrm{BF}}.
\end{equation}
\end{proof}

\begin{proposition}[Better Optima under Larger Trajectory Spaces]
\label{thm:optimization_advantage}
Let $\mathcal{X}_{\mathrm{LD3}}, \mathcal{X}_{\mathrm{BF}} \subseteq \mathbb{R}^{M+1 \times D}$ be the
trajectory spaces of LD3 and \ours{}, respectively, and suppose
\begin{equation}
    \mathcal{X}_{\mathrm{LD3}} \subsetneq \mathcal{X}_{\mathrm{BF}}
\end{equation}
as in Theorem~\ref{thm:strict_inclusion}.
Let
\(
    \mathcal{L} : \mathcal{X} \to \mathbb{R}
\)
be any real-valued objective functional (e.g., a distillation loss to a teacher).
Define the optimal objective values
\begin{equation}
    \mathcal{L}^\star_{\mathrm{LD3}}
    :=
    \inf_{\mathbf{x} \in \mathcal{X}_{\mathrm{LD3}}} \mathcal{L}(\mathbf{x}),
    \qquad
    \mathcal{L}^\star_{\mathrm{BF}}
    :=
    \inf_{\mathbf{x} \in \mathcal{X}_{\mathrm{BF}}} \mathcal{L}(\mathbf{x}).
\end{equation}
Then, the following inequality holds:
\begin{equation}
    \mathcal{L}^\star_{\mathrm{BF}}
    \;\le\;
    \mathcal{L}^\star_{\mathrm{LD3}}.
\end{equation}
\end{proposition}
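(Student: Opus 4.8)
The plan is to recognize that this is the standard monotonicity of the infimum under set inclusion, so the proof reduces to unwinding the definitions together with the inclusion $\mathcal{X}_{\mathrm{LD3}} \subseteq \mathcal{X}_{\mathrm{BF}}$ supplied by Theorem~\ref{thm:strict_inclusion} (only the inclusion, not its strictness, is needed). First I would record that, by hypothesis, $\mathcal{L}$ is defined on all of $\mathcal{X}_{\mathrm{BF}}$, hence in particular on $\mathcal{X}_{\mathrm{LD3}} \subseteq \mathcal{X}_{\mathrm{BF}}$, so both infima are well posed, interpreted if necessary in the extended reals.

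Next I would fix an arbitrary $\mathbf{x} \in \mathcal{X}_{\mathrm{LD3}}$. Since $\mathbf{x} \in \mathcal{X}_{\mathrm{BF}}$ as well, $\mathbf{x}$ is one of the points over which the infimum defining $\mathcal{L}^\star_{\mathrm{BF}}$ is taken, so $\mathcal{L}^\star_{\mathrm{BF}} \le \mathcal{L}(\mathbf{x})$. As $\mathbf{x}$ ranges over $\mathcal{X}_{\mathrm{LD3}}$, this shows that $\mathcal{L}^\star_{\mathrm{BF}}$ is a lower bound for the set $\{\mathcal{L}(\mathbf{x}) : \mathbf{x} \in \mathcal{X}_{\mathrm{LD3}}\}$; passing to the greatest lower bound of that set gives $\mathcal{L}^\star_{\mathrm{BF}} \le \inf_{\mathbf{x} \in \mathcal{X}_{\mathrm{LD3}}} \mathcal{L}(\mathbf{x}) = \mathcal{L}^\star_{\mathrm{LD3}}$, which is exactly the claimed inequality.

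I do not expect any genuine obstacle: the statement is a one-line consequence of the definition of an infimum over a superset. The only points worth flagging are formal. If $\mathcal{X}_{\mathrm{LD3}}$ were empty the inequality would be vacuous ($\mathcal{L}^\star_{\mathrm{LD3}} = +\infty$), but this does not arise here since the source timesteps $\{t_k\}$ already furnish a feasible LD3 trajectory. More substantively, the inequality is necessarily non-strict: $\mathcal{L}$ is an arbitrary functional, and a constant $\mathcal{L}$ attains $\mathcal{L}^\star_{\mathrm{BF}} = \mathcal{L}^\star_{\mathrm{LD3}}$, so the strict inclusion $\mathcal{X}_{\mathrm{LD3}} \subsetneq \mathcal{X}_{\mathrm{BF}}$ of Theorem~\ref{thm:strict_inclusion} cannot be upgraded to a strict inequality of optima without extra hypotheses on $\mathcal{L}$ (e.g., that its minimizer lies in $\mathcal{X}_{\mathrm{BF}} \setminus \mathcal{X}_{\mathrm{LD3}}$, or a nondegeneracy condition on the teacher). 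I would therefore state the result as the non-strict bound and add a short remark to this effect rather than attempting to establish a strict gap.
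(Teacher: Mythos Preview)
Your proposal is correct and follows essentially the same approach as the paper: both reduce the claim to the elementary monotonicity of the infimum under set inclusion, and the paper likewise notes (as you do) that strict inequality requires the additional hypothesis that some $\mathbf{x}' \in \mathcal{X}_{\mathrm{BF}} \setminus \mathcal{X}_{\mathrm{LD3}}$ achieves $\mathcal{L}(\mathbf{x}') < \mathcal{L}^\star_{\mathrm{LD3}}$.
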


\begin{proof}
Recall that for any two sets $\mathcal{A} \subseteq \mathcal{B}$ and an objective function $f$, the infimum over the superset is less than or equal to the infimum over the subset, i.e.,
\begin{equation}
    \inf_{x \in \mathcal{B}} f(x) \;\le\; \inf_{x \in \mathcal{A}} f(x).
\end{equation}
Since $\mathcal{X}_{\mathrm{LD3}} \subsetneq \mathcal{X}_{\mathrm{BF}}$, applying this property directly yields:
\begin{align}
    \mathcal{L}^\star_{\mathrm{BF}}
    &= \inf_{\mathbf{x} \in \mathcal{X}_{\mathrm{BF}}} \mathcal{L}(\mathbf{x})
      \;\le\; \inf_{\mathbf{x} \in \mathcal{X}_{\mathrm{LD3}}} \mathcal{L}(\mathbf{x})
      = \mathcal{L}^\star_{\mathrm{LD3}}.
\end{align}
Moreover, if there exists $\mathbf{x}' \in \mathcal{X}_{\mathrm{BF}} \setminus \mathcal{X}_{\mathrm{LD3}}$
such that $\mathcal{L}(\mathbf{x}') < \mathcal{L}^\star_{\mathrm{LD3}}$, then
\begin{align}
    \mathcal{L}^\star_{\mathrm{BF}} &< \mathcal{L}^\star_{\mathrm{LD3}}.
\end{align}

\end{proof}

\begin{table}[!t]
\centering
\scriptsize
\caption{\textbf{FID comparison of \ours{}, LD3 and their combination, denoted as Both.} The best results are highlighted in~\textbf{bold} and the second best results are \underline{underlined}. Gray cells indicate the base ODE solvers.} 
\setlength{\tabcolsep}{5pt}
\begin{tabularx}{\textwidth}
  {>{\raggedright\arraybackslash}p{0.1\textwidth}
   |*{4}{>{\centering\arraybackslash}X}
   |>{\raggedright\arraybackslash}p{0.1\textwidth}
   |*{4}{>{\centering\arraybackslash}X}}
\toprule
 Method & NFE=4 & NFE=6 & NFE=8 & NFE=10 & Method & NFE=4 & NFE=6 & NFE=8 & NFE=10 \\
 \midrule
\cellcolor{gray!20}UniPC & \multicolumn{4}{c|}{CIFAR-10 with EDM (Teacher FID: 2.08)} &
\cellcolor{gray!20}UniPC & \multicolumn{4}{c}{FFHQ with EDM (Teacher FID: 2.86)}\\
 \midrule
 + LD3 & 12.04 & 3.56 & \underline{2.43} & \underline{2.62} & + LD3 & 22.48 & \underline{6.16} & 4.25 & 2.92 \\
 + \ours{} & \underline{9.55} & \textbf{3.13} & \textbf{2.40} & \textbf{2.09} & + \ours{} & \textbf{17.05} & \textbf{7.43} & \textbf{3.82} & \underline{3.13} \\
 + Both & \textbf{9.32} & \underline{3.37} & 2.44 & 2.71 & + Both & \underline{20.77} & 6.24 & \underline{4.13} & \textbf{3.04} \\
 \midrule
\cellcolor{gray!20}RK2 & \multicolumn{4}{c|}{CIFAR-10 with ReFlow (Teacher FID: 2.70)} & \cellcolor{gray!20}RK2 &
\multicolumn{4}{c}{ImageNet with FlowDCN (Teacher FID: 15.89)}\\
 \midrule
 + LD3 & 29.45 & 13.82 & 6.26 & 3.86 & + LD3 & \textbf{7.59} & 10.17 & 12.75 & 14.04 \\
 + \ours{} & \underline{13.18} & \underline{6.00} & \underline{4.31} & \underline{3.74} & + \ours{} & 9.50 & \textbf{5.94} & \textbf{6.22} & \textbf{7.56} \\
 + Both & \textbf{12.23} & \textbf{5.50} & \textbf{3.74} & \textbf{3.17} & + Both & 9.11 & \underline{6.64} & \underline{9.38} & \underline{10.88} \\
 \bottomrule
\end{tabularx}
\label{tbl:ours_ld3}
\end{table}

\vspace{-1\baselineskip}
\section{Comparison of Scheduler Parameterizations: BézierFlow vs. Bespoke Solver}
\label{sec:comparison_with_bespoke}
\begin{table}[!t]
\centering
\scriptsize
\caption{\textbf{FID comparison of \ours{}, Bespoke Solver and Bespoke Solver trained with our training loss, denoted as Bespoke*.} Results for the base solvers are reported on each top rows. The best results are highlighted in~\textbf{bold} and the second best results are \underline{underlined}. Gray cells indicate the base ODE solvers.} 
\setlength{\tabcolsep}{5pt}
\begin{tabularx}{\textwidth}
  {>{\raggedright\arraybackslash}p{0.1\textwidth}
   |*{4}{>{\centering\arraybackslash}X}
   |>{\raggedright\arraybackslash}p{0.1\textwidth}
   |*{4}{>{\centering\arraybackslash}X}}
\toprule
 Method & NFE=4 & NFE=6 & NFE=8 & NFE=10 & Method & NFE=4 & NFE=6 & NFE=8 & NFE=10 \\
 \midrule
\multicolumn{10}{c}{CIFAR-10 $32\times 32$ with ReFlow~\citep{Liu:2023RF} (Teacher FID: 2.70)} \\
 \midrule
 \cellcolor{gray!20}RK1 & 52.78 & 26.30 & 17.40 & 13.30 & \cellcolor{gray!20}RK2 & 25.36 & \underline{12.12} & 9.17 & 7.89 \\
 + Bespoke & 45.31 & 18.08 & 11.88 & 9.25 & + Bespoke & 39.45 & 64.87 & 16.67 & 13.34 \\
 + Bespoke* & \underline{38.34} & \underline{17.28} & \underline{10.34} & \underline{7.65} & + Bespoke* & \underline{19.44} & 49.65 & \underline{4.40} & \textbf{3.70} \\
 + \ours{} & \textbf{20.64} & \textbf{9.67} & \textbf{7.30} & \textbf{5.51} & + \ours{} & \textbf{13.18} & \textbf{6.00} & \textbf{4.31} & \underline{3.74} \\
 \bottomrule
\end{tabularx}
\label{tbl:lpips_comparison}
\end{table}
As discussed in Sec.\ref{subsec:connection_to_prior_Work}, both Bespoke Solver~\citep{Shaul:2023Bespoke} and \ours{} aim to learn sampling trajectories, but differ in (i) parameterization and (ii) training objective. Bespoke Solver~\citep{Shaul:2023Bespoke} employs discrete per-step parameterization and minimizes step-wise $\ell_2$ errors against teacher outputs, whereas \ours{} adopts a B\'ezier-based continuous parameterization and is trained with a global truncation loss, computed along the full trajectory from $x_0$ to $x_1$, with LPIPS~\citep{Zhang:2018LPIPS}. 

To ablate the effect of different training objectives and focus solely on parameterization, we report additional quantitative results in Tab.~\ref{tbl:lpips_comparison}, where Bespoke* retains Bespoke Solver's parameterization but adopts the same training objective as ours. While Bespoke* improves over the original Bespoke Solver, \ours{} remains superior, with especially large gains at low NFEs, such as +17.7 FID improvement at NFE=4 with RK1, underscoring the advantage of our B\'ezier-based continuous parameterization.

\section{Implementation Details}
\label{sec:implementation_details}

We first describe the shared experimental setup for the methods based on teacher-forcing framework (Bespoke solver, LD3, and~\ours{}) and method-specific configurations for \ours{} and the baselines.

\subsection{Shared Setup}

\paragraph{Teacher Data Generation.} 
We generate teacher samples using the high-order adaptive solver RK45~\citep{BUTCHER:1996RK}, except for Stable Diffusion v3.5~\citep{Esser:2024SD3}, where we adopt RK2 with 30 NFEs. 
The same teacher samples are used for all baselines that rely on the teacher-forcing framework (e.g., Bespoke Solver, LD3).

\paragraph{Training.} 
We train for 8 epochs on CIFAR-10~\citep{Krizhevsky:2009CIFAR}, FFHQ~\citep{Karras:2019FFHQ}, and AFHQv2~\citep{Choi:2020AFHQv2}, and for 5 epochs on ImageNet~\citep{Deng2009ImageNet} and Stable Diffusion v3.5~\citep{Esser:2024SD3}. At the end of each epoch, we perform validation and select the checkpoint with the best validation score for final evaluation. We use LPIPS~\citep{Zhang:2018LPIPS} as the distance metric for LD3~\citep{Tong:2025LD3} and \ours{}, and RMSE for Bespoke Solver~\citep{Shaul:2023Bespoke}.

\paragraph{Evaluation.} 
We report Fréchet Inception Distance (FID)~\citep{Heusel:2017FID} scores computed against the reference set using 50K randomly generated samples. 
On ImageNet, generated samples are drawn to match the class distribution of the reference set. 
For SD3.5, both reference and generated samples are constructed from disjoint subsets of 30K text prompts from the MS-COCO validation set, following the setup of LD3~\citep{Tong:2025LD3}.

\subsection{BézierFlow Training Details}

\paragraph{Target Timesteps.} 
For diffusion models, the timesteps $\{s_i\}_{i=0}^{\mathrm{NFE}}$ are initialized to be uniformly spaced in terms of the signal-to-noise ratio (SNR). 
For flow models, they are initialized to be uniformly spaced in the time domain.

\paragraph{Initialization.} 
We initialize the Bézier scheduler with a linear SI scheduler, i.e., $\bar\alpha(s)=s$ and $\bar\sigma(s)=1-s$. 
Under the 1D Bézier parameterization, this corresponds to
\begin{equation}
    \theta^{(\alpha)}_i = 1,\quad \theta^{(\sigma)}_i = 1,\qquad i=0,1,\dots,n,
\end{equation}
which places the $n-1$ interior control points uniformly between the two endpoints. 
We use 32 control points in all experiments. 
For decoupled timesteps $s_i^c$ that are fed into the model, we set $s_i^c = s_i + \theta_i^{(c)}$, 
with $\theta_i^{(c)}$ initialized to zero, following LD3~\citep{Tong:2025LD3}.

\paragraph{Optimizer.}
We optimize the Bézier scheduler parameters $\theta^{(\alpha)}, \theta^{(\sigma)}$ using RMSprop, and the decoupled timesteps $\theta_i^{(c)}$ using SGD. 
For RMSprop, we set the momentum to $0.9$ and weight decay to $0$. 
The learning rate is $5\times 10^{-3}$ for CIFAR-10, FFHQ, and AFHQv2, and $1\times 10^{-3}$ for ImageNet and Stable Diffusion v3.5. 
For the decoupled timesteps, we use SGD with a learning rate of $3\times 10^{-2}$ for CIFAR-10, $1\times 10^{-1}$ for ImageNet, $1\times 10^{-2}$ for FFHQ and AFHQv2, and $5\times 10^{-4}$ for Stable Diffusion v3.5, each further scaled by $1/\text{NFE}$. 
We apply gradient clipping with a global norm threshold of $1.0$ to all parameters.

\subsection{Baselines}

\paragraph{GITS~\citep{Chen:2024GITS}.}
We adopt the official implementation code and follow the default number of sampling trajectories, which is 256.

\paragraph{Bespoke Solver~\citep{Shaul:2023Bespoke}.}
Since no official implementation code is publicly available, we re-implemented the method based on the descriptions in the original paper. 
We employ Adam optimizer with a learning rate of $1\times 10^{-4}$, as we observed that the learning rate reported in the paper ($2\times 10^{-3}$) caused divergence and very high FID scores when training on relatively small datasets.

\paragraph{LD3~\citep{Tong:2025LD3}.} 
We adopt the official implementation code and follow the default training configurations. 
For timestep parameters, we use the same optimizer and match their learning rate to that of our scheduler. 
For the decoupled timesteps, we follow the original parameterization and use SGD with a learning rate of $\frac{0.1}{\text{NFE}}$ except for ImageNet and Stable Diffusion v3.5, where we use $\frac{0.001}{\text{NFE}}$ following original paper.

\section{More Quantitative Results}
\label{sec:more_quanti}

\begin{table}[!t]
\centering
\scriptsize
\caption{\textbf{FID comparison of few-step generation with diffusion models at extremely low NFEs.}
Results of the base ODE solvers are reported on each top rows. \textbf{Bold} indicates the best results, and \underline{underline} marks the second best. Gray cells indicate the base ODE solvers.}
\setlength{\tabcolsep}{2pt}
\begin{tabularx}{\textwidth}{
  >{\raggedright\arraybackslash}p{0.1\textwidth}|
  *{3}{>{\centering\arraybackslash}X}|
  *{3}{>{\centering\arraybackslash}X}|
  *{3}{>{\centering\arraybackslash}X}}
\toprule
& \multicolumn{3}{c|}{CIFAR-10 $32\times 32$ with EDM} 
& \multicolumn{3}{c|}{FFHQ $64\times 64$ with EDM} 
& \multicolumn{3}{c}{AFHQv2 $64\times 64$ with EDM} \\
\cmidrule(lr){2-4}\cmidrule(lr){5-7}\cmidrule(lr){8-10}
Method &
NFE=1 & NFE=2 & NFE=3 &
NFE=1 & NFE=2 & NFE=3 & 
NFE=1 & NFE=2 & NFE=3 \\
\midrule
\cellcolor{gray!20}
UniPC
& \underline{377.15} & 168.35 & 57.45  
& \underline{280.61} & \underline{104.57} & 59.54  
& \underline{312.37} & \underline{64.62}     &  44.52    
\\
+ DMN
& - & \underline{160.65} & 66.03
& - & 142.57 & 64.99
& - & 141.99     & 70.01
\\
+ GITS
& - & 168.29 & \underline{53.21}
& - & 107.71 & \underline{42.38}
& - & 73.95     & \textbf{25.13}
\\
+ LD3
& - & 187.42  & \textbf{39.56}
& - & 120.87 & 48.30
& - & 107.77     & 30.53
\\
+ \ours{}
& \textbf{125.03} & \textbf{50.41}  & 55.07
& \textbf{121.94} & \textbf{72.03} & \textbf{33.72}
& \textbf{159.58} & \textbf{39.86}  & \underline{26.31} 
\\
\midrule
\cellcolor{gray!20}
iPNDM
& \underline{377.15} & 153.31 & 47.68
& \underline{280.61} & 102.50 & 45.70
& \underline{312.37} & 79.32     & 38.16
\\
+ DMN
& - & 146.40 & 58.98
& - & 112.55 & 61.54
& - & 128.36  & 76.28
\\
+ GITS
& - & 153.33 & 43.71
& - & 105.78 & \textbf{32.33}
& - & 95.47    & \underline{26.40}
\\
+ LD3
& - & \underline{145.03} & \underline{32.19}
& - & \underline{97.62}  & 38.14
& - & \underline{91.10}   & \textbf{23.85}
\\
+ \ours{}
& \textbf{125.03} & \textbf{41.58} & \textbf{22.20}
& \textbf{121.94} & \textbf{60.45} & \underline{35.10}
& \textbf{159.58} & \textbf{34.70}  & 36.26
\\
\bottomrule
\end{tabularx}
\label{tbl:verylow_fid_diffusion}
\end{table}

\begin{table}[!t]
\centering
\scriptsize
\caption{\textbf{FID comparison of few-step generation with flow models at extremely low NFEs.}
Results of the base ODE solvers are reported on each top rows. \textbf{Bold} indicates the best results, and \underline{underline} marks the second best. Gray cells indicate the base ODE solvers.}
\setlength{\tabcolsep}{2pt}
\begin{tabularx}{\textwidth}{
  >{\raggedright\arraybackslash}p{0.1\textwidth}|
  *{3}{>{\centering\arraybackslash}X}|
  *{3}{>{\centering\arraybackslash}X}|
  *{3}{>{\centering\arraybackslash}X}}
\toprule
& \multicolumn{3}{c|}{CIFAR-10 $32\times 32$ with ReFlow} 
& \multicolumn{3}{c|}{ImageNet $256\times 256$ with FlowDCN} 
& \multicolumn{3}{c}{MS-COCO $512\times 512$ with SDv3.5} \\
\cmidrule(lr){2-4}\cmidrule(lr){5-7}\cmidrule(lr){8-10}
Method &
NFE=1 & NFE=2 & NFE=3 &
NFE=1 & NFE=2 & NFE=3 & 
NFE=1 & NFE=2 & NFE=3 \\
\midrule
\cellcolor{gray!20}
RK1
& \underline{379.22} & 171.48 & 89.00  
& \underline{263.54} & \underline{113.27} & \textbf{27.69}  
& 328.02 & 214.03  &  103.97    
\\
+ DMN
& - & \underline{170.53} & \underline{79.54}
& - & 115.89 & 42.78
& - & 218.20  & \textbf{82.68}
\\
+ GITS
& - & 183.40 & 81.46
& - & 130.81 & 31.70
& - & 166.06  & 94.94
\\
+ Bespoke
& 471.18 & 405.94 & 265.77
& 264.81 & 114.66 & \underline{28.27}
& \underline{324.94} & 212.16 & 98.91
\\
+ LD3
& - & 182.40  & 81.35
& - & 126.29 & 85.16
& - & \textbf{150.21}  & 85.16
\\
+ \ours{}
& \textbf{314.52} & \textbf{67.63}  & \textbf{30.40}
& \textbf{261.79} & \textbf{94.64} & 44.60
& \textbf{320.67} & \underline{156.20}  & \underline{83.55} 
\\
\midrule
\cellcolor{gray!20}
RK2
& - & \underline{128.80} & -
& - & 90.26 & -
& - & 163.35 & -
\\
+ DMN
& -& - & -
& -& - & -
& -& -  & -
\\
+ GITS
& -& - & -
& -& - & -
& -& -  & -
\\
+ Bespoke
& -& 309.60 & -
& -& \underline{86.58} & -
& -& \underline{162.40}    & -
\\
+ LD3
& -& - & -
& -& - & -
& -& -  & -
\\
+ \ours{}
& -& \textbf{70.87} & -
& -& \textbf{83.97} & -
& -& \textbf{146.20}  & -
\\
\bottomrule
\end{tabularx}
\label{tbl:verylow_fid_flow}
\end{table}
\subsection{Probing BézierFlow at Extremely Low NFEs}
\label{subsec:low_nfe}
To stress-test BézierFlow in the extreme low-NFE regime and identify where quality collapse begins, we conduct additional experiments in the very low-NFE range ($\text{NFE}\le3$), which is even lower than the NFEs used in Sec.~\ref{sec:experiment}. Except for the NFEs, all other experiment setups follow those used in Sec.~\ref{sec:experiment}.

As summarized in Tab.~\ref{tbl:verylow_fid_diffusion} and Tab.~\ref{tbl:verylow_fid_flow}, BézierFlow remains effective even at extremely low NFEs for both diffusion and flow models, improving over the base solvers by a substantial margin. Note that blank entries for RK2 simply reflect that RK2 only supports even numbers of function evaluations and has no timestep to learn in NFE=2. Furthermore, for NFE=1, timestep-learning methods cannot be applied, whereas scheduler-learning approaches such as Bespoke Solver~\citep{Shaul:2023Bespoke} and \ours{} remain applicable.

\begin{table}[!t]
\centering
\scriptsize
\caption{\textbf{Quantitative comparison of few-step generation on text--image alignment with Stable Diffusion~\citep{Esser:2024SD3}.}
Results for the base solvers are reported on each top rows. \textbf{Bold} indicates the best results, and \underline{underline} marks the second best. Gray cells indicate the
base ODE solvers.}
\setlength{\tabcolsep}{4pt}

\begin{tabularx}{\textwidth}
  {>{\raggedright\arraybackslash}p{0.12\textwidth}
   |*{8}{>{\centering\arraybackslash}X}}
\toprule
Method 
  & \multicolumn{2}{c}{NFE=4}
  & \multicolumn{2}{c}{NFE=6}
  & \multicolumn{2}{c}{NFE=8}
  & \multicolumn{2}{c}{NFE=10} \\
 & CLIP $\uparrow$ & PickScore $\uparrow$
 & CLIP $\uparrow$ & PickScore $\uparrow$
 & CLIP $\uparrow$ & PickScore $\uparrow$
 & CLIP $\uparrow$ & PickScore $\uparrow$ \\
\midrule
\multicolumn{9}{c}{MS-COCO $512\times 512$ with Stable Diffusion~\citep{Esser:2024SD3}} \\
\midrule
\cellcolor{gray!20}RK1 
  & 0.240 & 0.206 
  & \underline{0.252} & \underline{0.212} 
  & \underline{0.257} & \underline{0.215} 
  & \textbf{0.260} & \textbf{0.217} \\
+ DMN 
  & 0.225 & 0.199 
  & 0.246 & 0.209 
  & 0.253 & 0.213 
  & 0.256 & 0.215 \\
+ Bespoke 
  & 0.241 & 0.206 
  & 0.243 & \underline{0.212} 
  & 0.251 & 0.214 
  & 0.252 & \underline{0.216} \\
+ GITS 
  & 0.234 & 0.204 
  & 0.247 & 0.210 
  & 0.252 & 0.213 
  & 0.255 & 0.214 \\
+ LD3 
  & \underline{0.244} & \underline{0.208} 
  & 0.249 & \underline{0.212} 
  & \textbf{0.258} & \textbf{0.217} 
  & \underline{0.258} & \textbf{0.217} \\
+ \ours{}
  & \textbf{0.245} & \textbf{0.209} 
  & \textbf{0.253} & \textbf{0.214} 
  & 0.256 & \textbf{0.217} 
  & \underline{0.258} & \textbf{0.217} \\
\midrule
\cellcolor{gray!20}RK2 
  & 0.244 & 0.208 
  & 0.255 & 0.214 
  & \underline{0.259} & 0.216 
  & 0.260 & 0.217 \\
+ DMN 
  & 0.243 & 0.208 
  & \underline{0.257} & \textbf{0.216} 
  & 0.252 & 0.213 
  & 0.259 & 0.217 \\
+ Bespoke 
  & 0.244 & 0.208 
  & 0.225 & 0.200 
  & 0.253 & 0.215 
  & 0.257 & 0.217 \\
+ GITS 
  & \textbf{0.251} & \textbf{0.211} 
  & 0.255 & 0.214 
  & 0.257 & 0.216 
  & 0.258 & 0.216 \\
+ LD3 
  & 0.241 & 0.208 
  & 0.255 & \underline{0.215} 
  & \textbf{0.260} & \textbf{0.218} 
  & \underline{0.261} & \underline{0.218} \\
+ \ours{}
  & \underline{0.248} & \underline{0.210} 
  & \textbf{0.258} & \underline{0.215} 
  & \textbf{0.260} & \underline{0.217} 
  & \textbf{0.263} & \textbf{0.219} \\
\bottomrule
\end{tabularx}

\label{tbl:sd_clip_pick}
\end{table}

\subsection{Text--Image Alignment for Foundational Model}
\label{subsec:sd_clip_pickscore}
To complement the zero-shot MS-COCO FID results of Stable Diffusion v3.5~\citep{Esser:2024SD3} in Tab.~\ref{tbl:main_fid_flow}, we provide additional evaluation results for a more comprehensive assessment. We report CLIP score~\citep{hessel2021clipscore} and PickScore~\citep{Kirstain:2023Pick}, both of which measure the alignment between the given text prompt and the generated image.

As shown in Tab.~\ref{tbl:sd_clip_pick}, BézierFlow achieves the best or second-best performance across various NFEs, solvers, and evaluation metrics except for the CLIP Score at NFE=8 with the RK1 solver. These additional results further corroborate the superiority of BézierFlow even with the large-scale 2.5B pretrained stochastic interpolant model~\citep{Esser:2024SD3}.

\begin{table}[!t]
\centering
\scriptsize
\caption{\textbf{Quantitative comparison on training efficiency in few-step generation for diffusion and flow models on CIFAR-10.} All experiments are conducted on A6000 GPUs, except for the last row of distillation methods, which reports the performance of pretrained model from their official implementations~\citep{Song:2023CM, Liu:2023RF}. ``Time'' denotes wall-clock training time, where s/m/d denote seconds/minutes/days, respectively.}
\setlength{\tabcolsep}{4pt}

\begin{tabularx}{\textwidth}
  {>{\raggedright\arraybackslash}p{0.1\textwidth}
   |*{4}{>{\centering\arraybackslash}X}
   |>{\raggedright\arraybackslash}p{0.1\textwidth}
   |*{4}{>{\centering\arraybackslash}X}}
\toprule
Method 
  & \multicolumn{2}{c}{NFE=6}
  & \multicolumn{2}{c|}{NFE=8}
& Method 
  & \multicolumn{2}{c}{NFE=6}
  & \multicolumn{2}{c}{NFE=8} \\
 & FID $\downarrow$ & Time $\downarrow$
 & FID $\downarrow$ & Time $\downarrow$
 &
 & FID $\downarrow$ & Time $\downarrow$
 & FID $\downarrow$ & Time $\downarrow$ \\
\midrule
\multicolumn{10}{c}{(1) Non-distillation Methods}\\
\midrule
\cellcolor{gray!20}iPNDM & \multicolumn{4}{c|}{CIFAR-10 with EDM (Teacher FID: $2.08$)} 
& \cellcolor{gray!20}RK2 & \multicolumn{4}{c}{CIFAR-10 with ReFlow (Teacher FID: $2.70$)}  \\
\midrule
+ DMN 
  & 9.33 & 5s 
  & 4.82 & 5s 
& + DMN 
  & 51.99 & 5s 
  & 21.43 & 5s \\
+ GITS 
  & 6.80 & 30s 
  & 4.07 & 30s 
& + GITS 
  & 11.84 & 30s 
  & 8.77 & 30s \\
+ Bespoke 
  & - & - 
  & - & - 
& + Bespoke 
  & 64.87 & 30m 
  & 16.67 & 30m \\
+ LD3 
  & 4.42 & 10m 
  & 2.93 & 13m 
& + LD3 
  & 13.82 & 10m 
  & 6.26 & 13m \\
+ \ours{}
  & 3.35 & 10m 
  & 2.81 & 13m 
& + \ours{}
  & 6.00 & 10m 
  & 4.31 & 13m \\
\midrule
\multicolumn{10}{c}{(2) Distillation Methods}\\
\midrule
CD
  & 359.59 & 15m 
  & 343.59 & 15m 
& + 2-RF
  & 12.12 & 15m 
  & 9.17 & 15m \\
CD
  & 4.24 & 6d 
  & 3.95 & 6d 
& + 2-RF
  & 5.69 & 2d 
  & 5.45 & 2d \\
CD
  & 2.82 & 8d (A100) 
  & 2.79 & 8d (A100) 
& + 2-RF
  & 3.74 & 8d (A100)
  & 3.68 & 8d (A100) \\
\bottomrule
\end{tabularx}

\label{tbl:comparison_with_cd_rf}
\end{table}

\subsection{Comparison on Training Efficiency with Few-Step Generation Methods}
\label{subsec:comparison_with_few_step}
For a more comprehensive and fair comparison of training efficiency beyond the Tab.~\ref{tab:cifar10_ours_vs_baselines}, we report additional results at matched NFEs with varying training budgets in Tab.~\ref{tbl:comparison_with_cd_rf}. As shown, under the same NFEs, distillation-based approaches (Consistency Distillation (CD)~\citep{Song:2023CM} and 2-Rectified Flow (2-RF)~\citep{Liu:2023RF}) yield notably worse FID under the same lightweight training budget (15 minutes) and require \emph{substantially longer} training time (2-6 days) to achieve FID comparable to BézierFlow, corresponding to roughly \textbf{200-600$\times$ more training time}. These results underscore BézierFlow’s highly training-efficient acceleration, achieving in just a few minutes the performance that prior distillation-based approaches require several days of training to reach. Note that the 15-minute performance of 2-RF is identical to that of the base pretrained model as this budget is fully spent on the data creation stage for ReFlow.

We also include training time comparisons against non-distillation baselines that accelerate generation with lightweight training, including DMN, GITS, Bespoke Solver and LD3~\citep{Xue:2024DMN, Chen:2024GITS, Shaul:2023Bespoke, Tong:2025LD3}. Among these lightweight acceleration methods, BézierFlow achieves the best FID, even outperforming LD3 under the same training budget. This demonstrates that BézierFlow offers a more favorable trade-off between training efficiency and sample quality.

\begin{figure}[t!]
\centering
{\scriptsize
\setlength{\tabcolsep}{4pt} 
\begin{minipage}[t]{0.49\textwidth}
  \centering
  \begin{tabularx}{\linewidth}{@{} >{\centering\arraybackslash}m{1.3em} @{\hspace{6pt}}| *{5}{>{\centering\arraybackslash}m{4.18em}} @{}}
    \toprule
    NFE & \hspace{0.3em}UniPC & DMN & \hspace{-0.3em}GITS & \hspace{-0.7em}LD3 & \hspace{-1.8em}BF \\
    \midrule
    \multicolumn{6}{@{}c@{}}{FFHQ $64\times64$ with EDM~\citep{Karras:2022EDM}}\\
    \midrule
    \multirow{2}{*}{6}
    & \multicolumn{5}{@{}c@{}}{%
         \includegraphics[height=1.12cm]{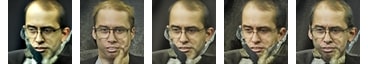}%
       }\\
    & \multicolumn{5}{@{}c@{}}{%
         \includegraphics[height=1.12cm]{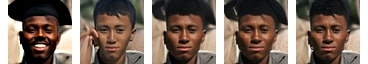}%
       }\\
    \midrule
    \multirow{2}{*}{8}
    & \multicolumn{5}{@{}c@{}}{%
         \includegraphics[height=1.12cm]{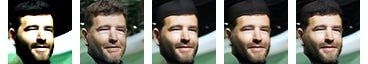}
       }\\
    & \multicolumn{5}{@{}c@{}}{%
         \includegraphics[height=1.12cm]{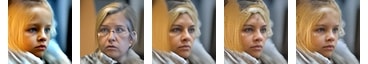}
       }\\
    \bottomrule
  \end{tabularx}
\end{minipage}\hfill
\begin{minipage}[t]{0.49\textwidth}
  \centering
  \begin{tabularx}{\linewidth}{@{} >{\centering\arraybackslash}m{1.3em} @{\hspace{6pt}}| *{5}{>{\centering\arraybackslash}m{4.18em}} @{}}
    \toprule
    NFE & \hspace{0.3em}UniPC & DMN & \hspace{-0.3em}GITS & \hspace{-0.7em}LD3 & \hspace{-1.8em}BF \\
    \midrule
    \multicolumn{6}{@{}c@{}}{AFHQv2 $64\times64$ with EDM~\citep{Karras:2022EDM}}\\
    \midrule
    \multirow{2}{*}{6}
    & \multicolumn{5}{@{}c@{}}{%
         \includegraphics[height=1.12cm]{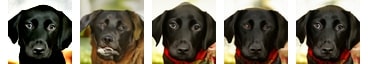}
       }\\
    & \multicolumn{5}{@{}c@{}}{%
         \includegraphics[height=1.12cm]{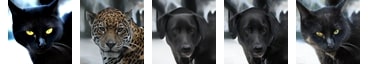}
       }\\
    \midrule
    \multirow{2}{*}{8}
    & \multicolumn{5}{@{}c@{}}{%
         \includegraphics[height=1.12cm]{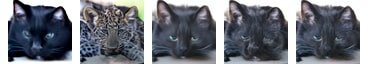}
       }\\
    & \multicolumn{5}{@{}c@{}}{%
         \includegraphics[height=1.12cm]{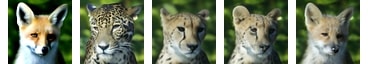}
       }\\
    \bottomrule
  \end{tabularx}
\end{minipage}
}
\caption{\textbf{Qualitative comparisons of samples generated using NFEs 6 and 8 on FFHQ and
AFHQv2 datasets.} We use UniPC solver as the base solver for both cases.}
\label{fig:app_qualitative_diffusion}
\end{figure}
\begin{figure}[t!]
\centering
{\scriptsize
\setlength{\tabcolsep}{4pt} 
\begin{minipage}[t]{0.485\textwidth}
  \centering
  \begin{tabularx}{\linewidth}{@{} >{\centering\arraybackslash}m{1.3em} @{\hspace{6pt}}| *{6}{>{\centering\arraybackslash}m{3.3em}} @{}}
    \toprule
    NFE & RK2 & \hspace{-0.2em}DMN & \hspace{-0.65em}GITS & \hspace{-1em}Bespoke & \hspace{-1.3em}LD3 & \hspace{-2.1em}BF \\
    \midrule
    \multicolumn{7}{@{}c@{}}{CIFAR-10 $32\times 32$ with ReFlow~\citep{Liu:2023RF}} \\
    \midrule
    \multirow{2}{*}{6}
    & \multicolumn{6}{@{}c@{}}{%
         \includegraphics[height=1.01cm]{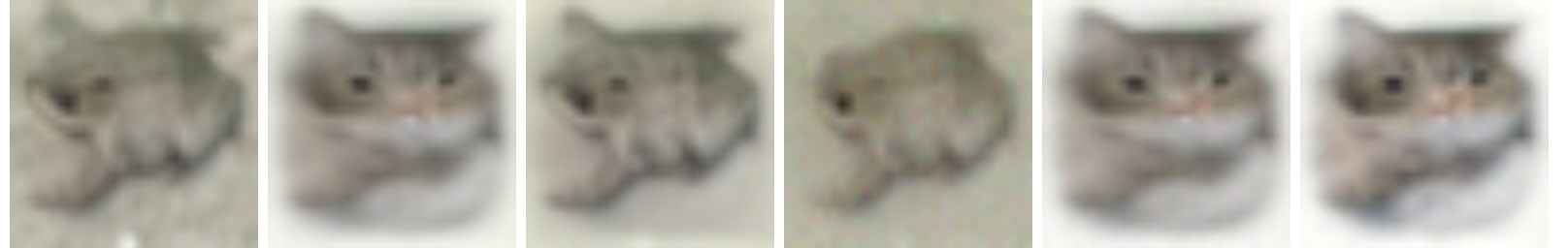}%
       }\\
    & \multicolumn{6}{@{}c@{}}{%
         \includegraphics[height=1.01cm]{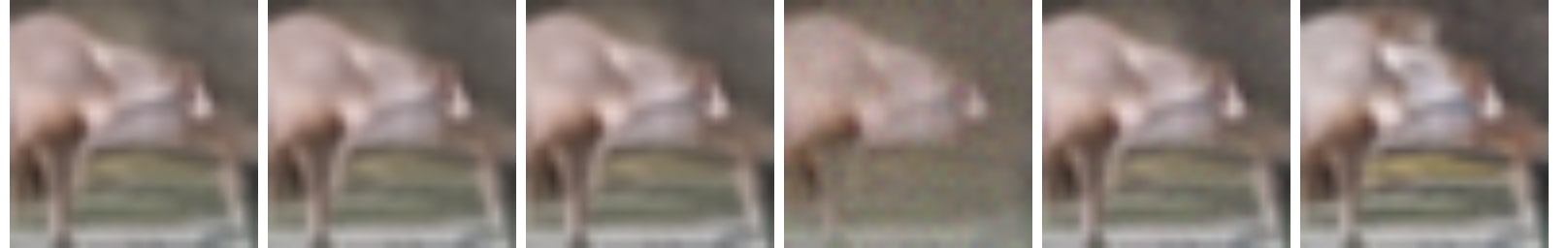}%
       }\\
    \midrule
    \multirow{2}{*}{8}
    & \multicolumn{6}{@{}c@{}}{%
         \includegraphics[height=1.01cm]{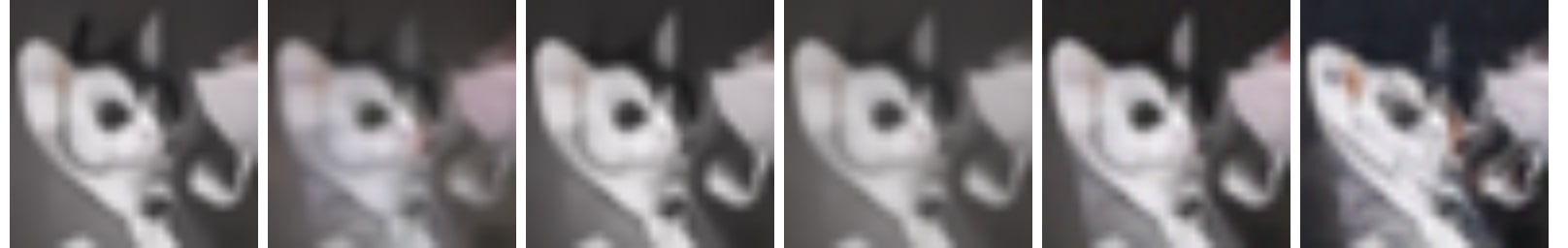}
       }\\
    & \multicolumn{6}{@{}c@{}}{%
         \includegraphics[height=1.01cm]{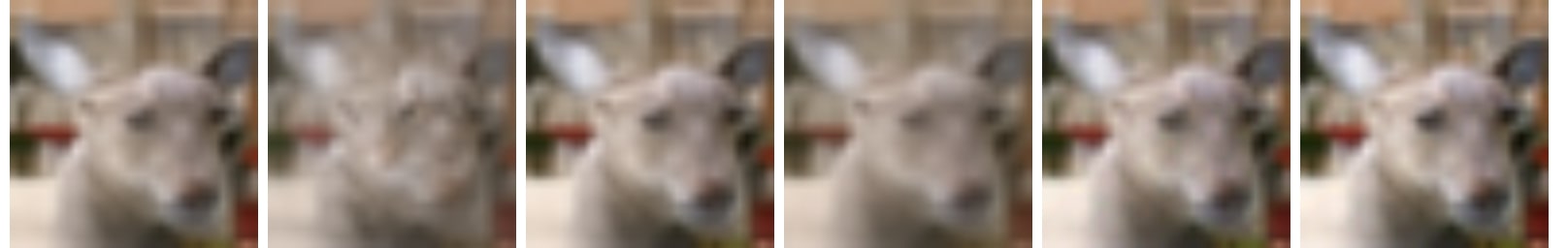}
       }\\
    \bottomrule
  \end{tabularx}
\end{minipage}\hfill
\begin{minipage}[t]{0.485\textwidth}
  \centering
  \begin{tabularx}{\linewidth}{@{} >{\centering\arraybackslash}m{1.3em} @{\hspace{6pt}}| *{6}{>{\centering\arraybackslash}m{3.3em}} @{}}
    \toprule
    NFE & RK2 & \hspace{-0.2em}DMN & \hspace{-0.65em}GITS & \hspace{-1em}Bespoke & \hspace{-1.3em}LD3 & \hspace{-2.1em}BF \\
    \midrule
    \multicolumn{7}{@{}c@{}}{ImageNet $256\times 256$ with FlowDCN~\citep{Wang:2024FlowDCN}}\\
    \midrule
    \multirow{2}{*}{6}
    & \multicolumn{6}{@{}c@{}}{%
         \includegraphics[height=1.01cm]{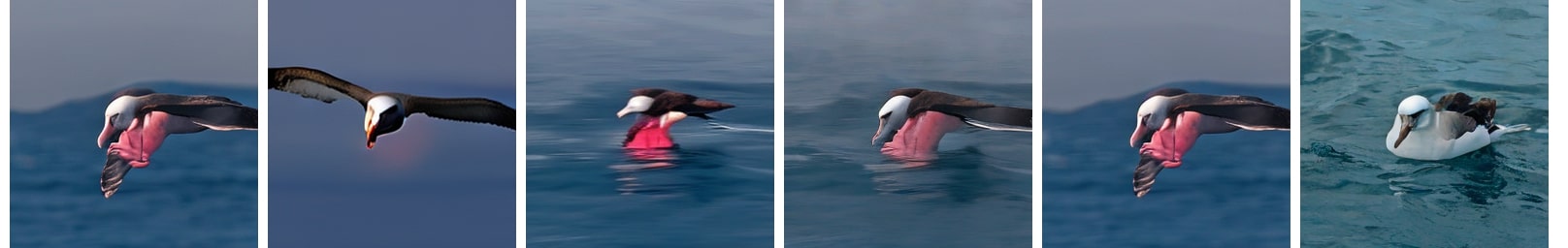}
       }\\
    & \multicolumn{6}{@{}c@{}}{%
         \includegraphics[height=1.01cm]{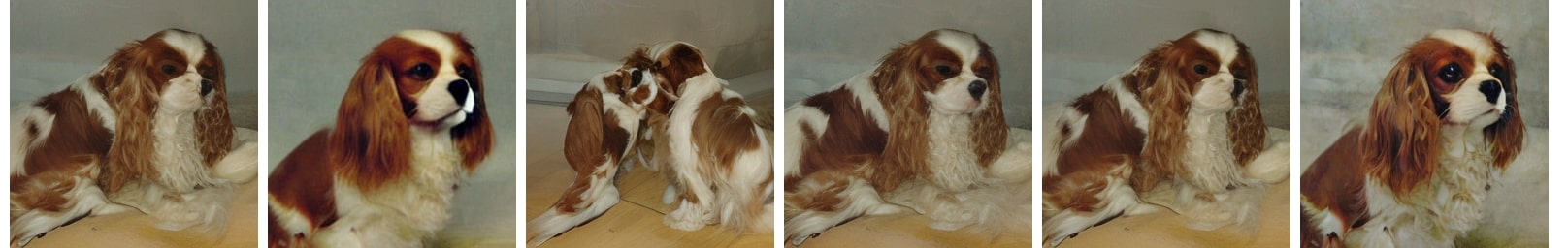}
       }\\
    \midrule
    \multirow{2}{*}{8}
     & \multicolumn{6}{@{}c@{}}{%
         \includegraphics[height=1.01cm]{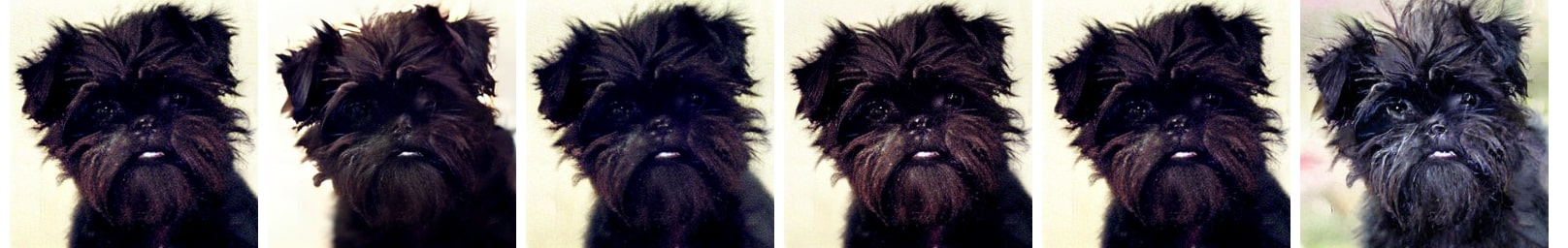}
       }\\
     & \multicolumn{6}{@{}c@{}}{%
         \includegraphics[height=1.01cm]{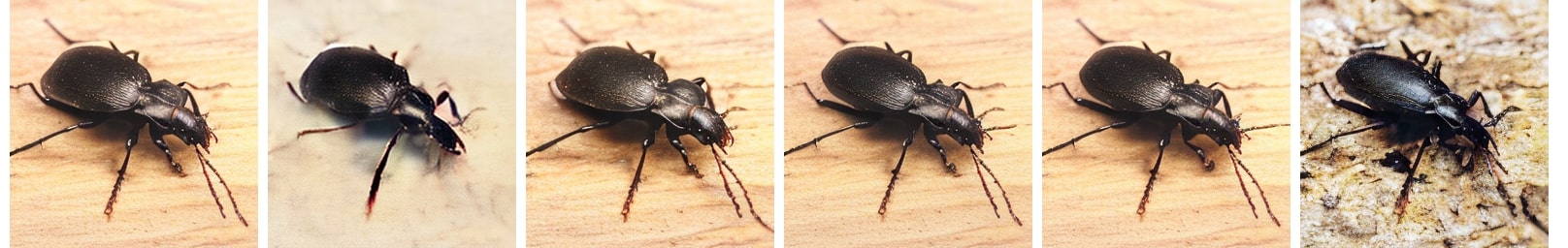}
       }\\
    \bottomrule
  \end{tabularx}
\end{minipage}
}
\caption{\textbf{Qualitative comparisons of samples generated using NFEs 6 and 8 on CIFAR-10 and ImageNet datasets.} We use RK2 solver as the base solver for both cases.}
\vspace{-\baselineskip}
\label{fig:app_qualitative_flow}
\end{figure}

\begin{figure}[t!]
\centering
{\scriptsize
\setlength{\tabcolsep}{1pt}
\begin{tabularx}{0.953\textwidth}{@{}
  >{\centering\arraybackslash}m{3em}
  *{6}{>{\centering\arraybackslash}X}
  >{\centering\arraybackslash}X
@{}}
  \toprule
  \multicolumn{1}{c|}{NFE} & RK1 & DMN & GITS & Bespoke & LD3 & BézierFlow \\
  \midrule
    \multicolumn{7}{@{}c@{}}{MS-COCO $512\times 512$ with Stable Diffusion~\citep{Esser:2024SD3}} \\
\midrule
  \multirow{2}{*}{6}
  & \includegraphics[width=\linewidth]{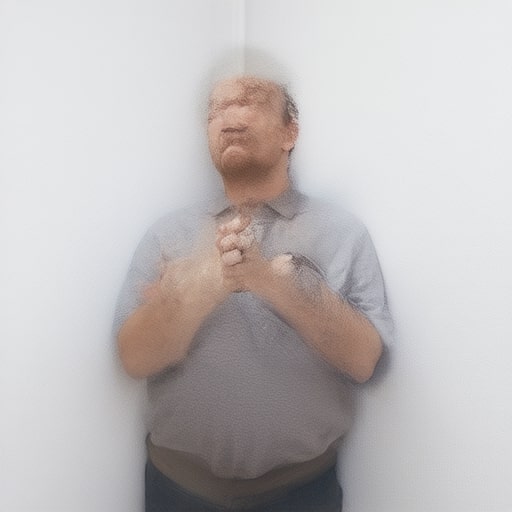}
  & \includegraphics[width=\linewidth]{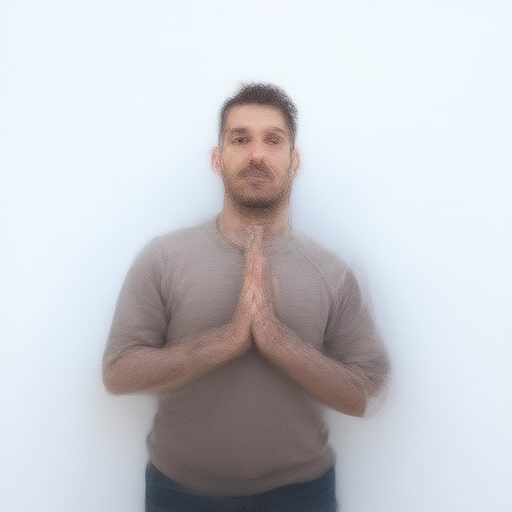}
  & \includegraphics[width=\linewidth]{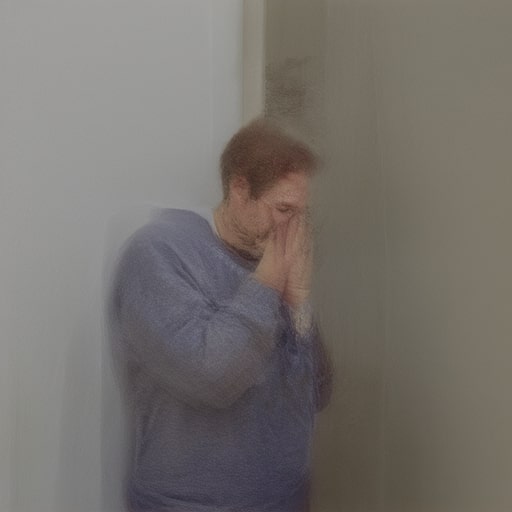}
  & \includegraphics[width=\linewidth]{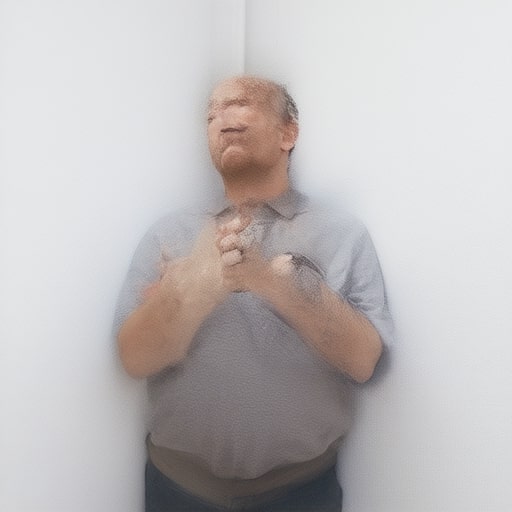}
  & \includegraphics[width=\linewidth]{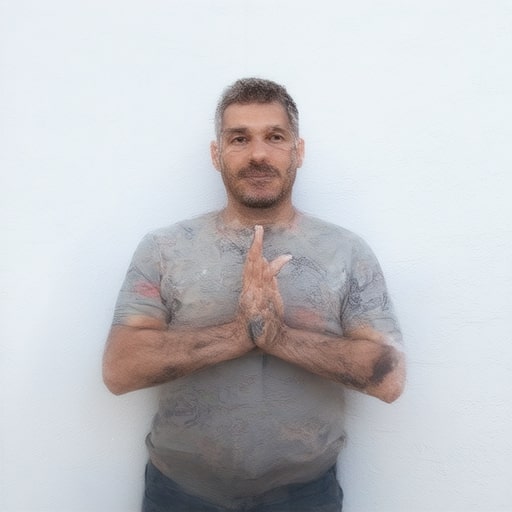}
  & \includegraphics[width=\linewidth]{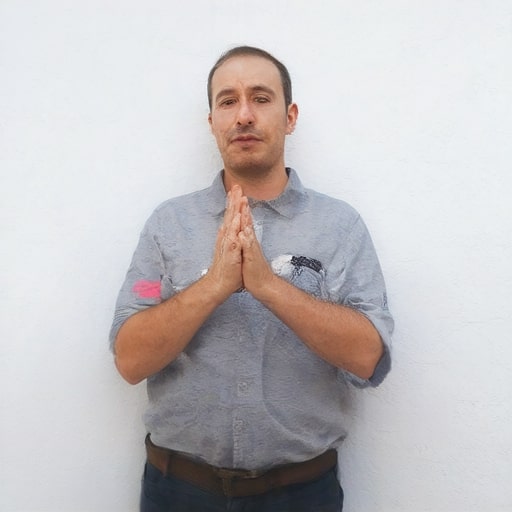}
  \\[0.01em]
  & \multicolumn{6}{c}{\scriptsize{\textit{``A man standing up against a wall with his hands clasped together.''}}} \\[0.02em]
  &
  \includegraphics[width=\linewidth]{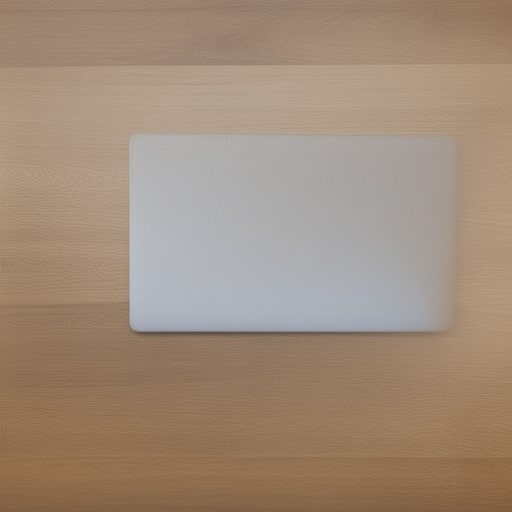}
  & \includegraphics[width=\linewidth]{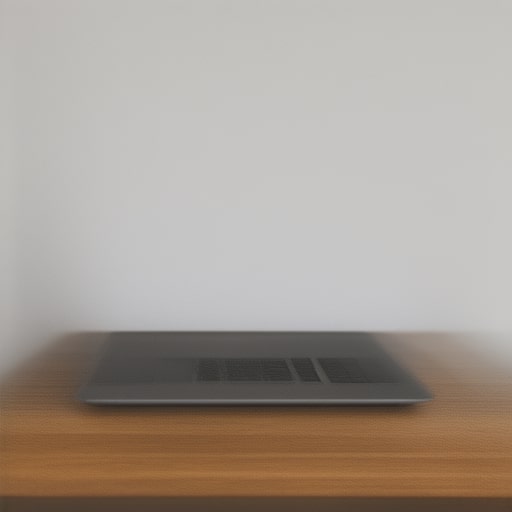}
  & \includegraphics[width=\linewidth]{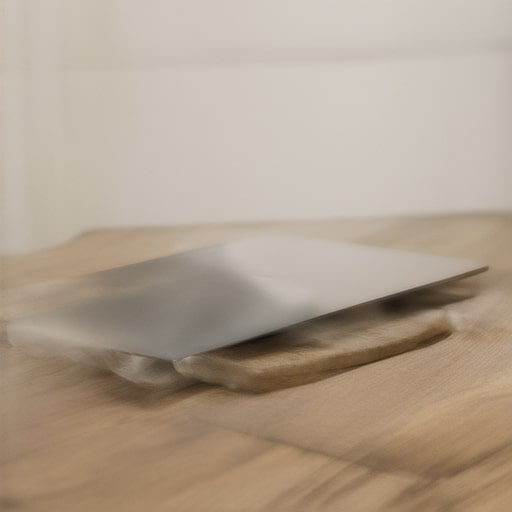}
  & \includegraphics[width=\linewidth]{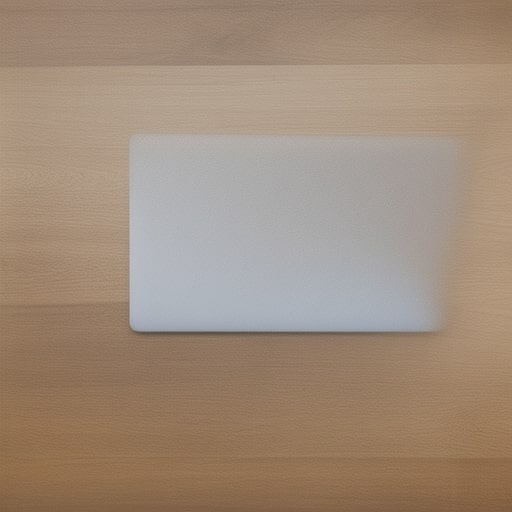}
  & \includegraphics[width=\linewidth]{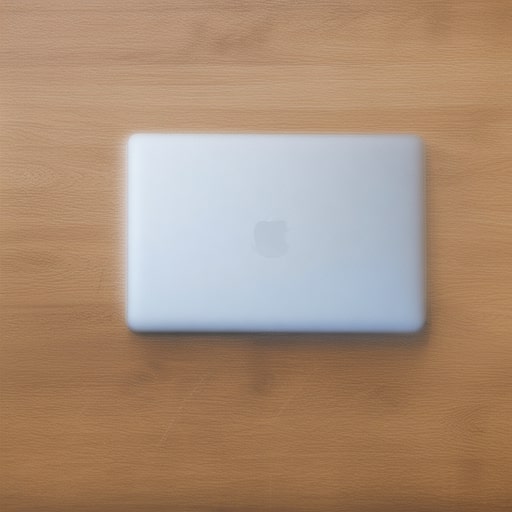}
  & \includegraphics[width=\linewidth]{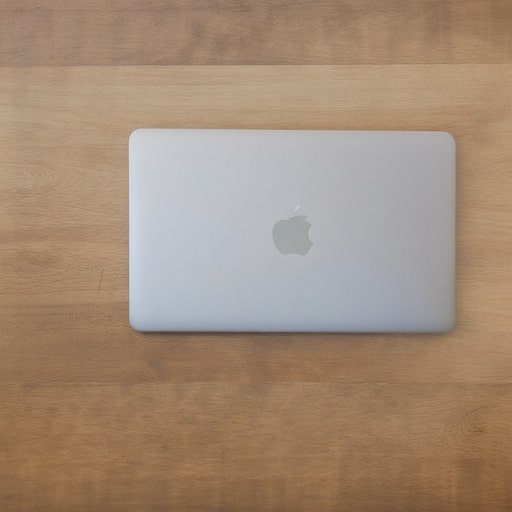}
  \\[0.01em]
  & \multicolumn{6}{c}{\scriptsize{\textit{``A laptop computer sitting on top of a wooden table.''}}} \\[0.01em]
  \midrule
  \multirow{2}{*}{8}
  & \includegraphics[width=\linewidth]{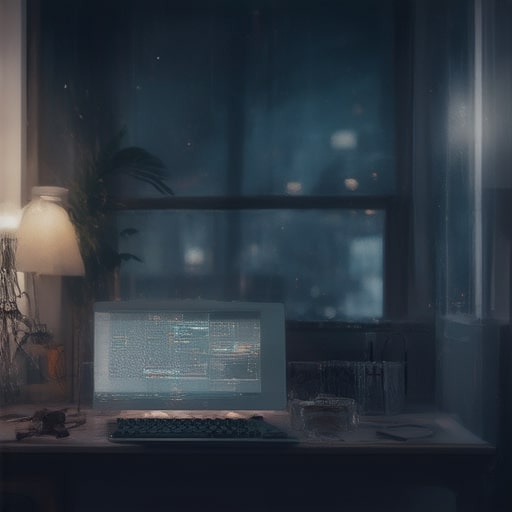}
  & \includegraphics[width=\linewidth]{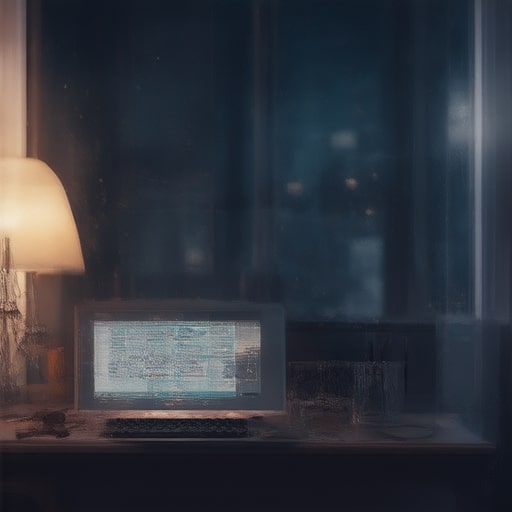}
  & \includegraphics[width=\linewidth]{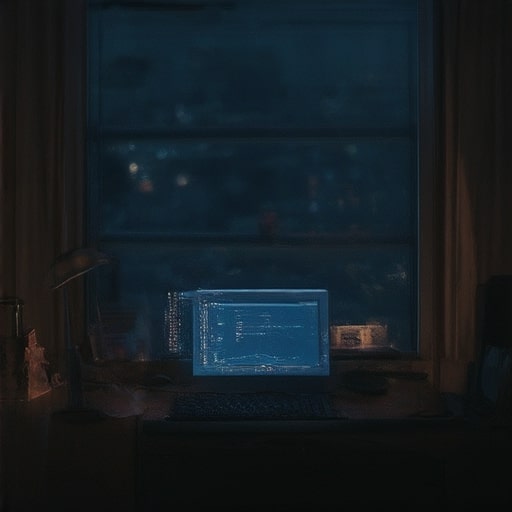}
  & \includegraphics[width=\linewidth]{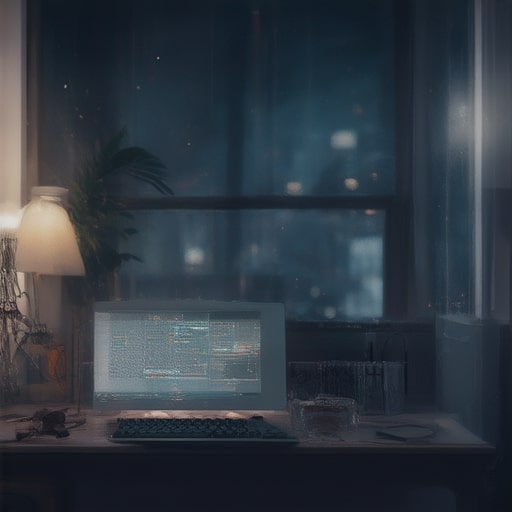}
  & \includegraphics[width=\linewidth]{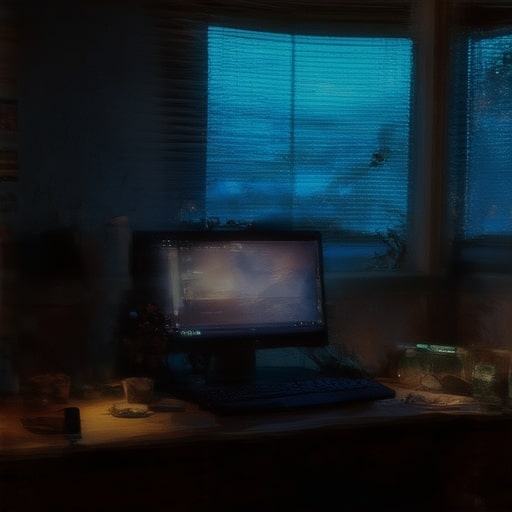}
  & \includegraphics[width=\linewidth]{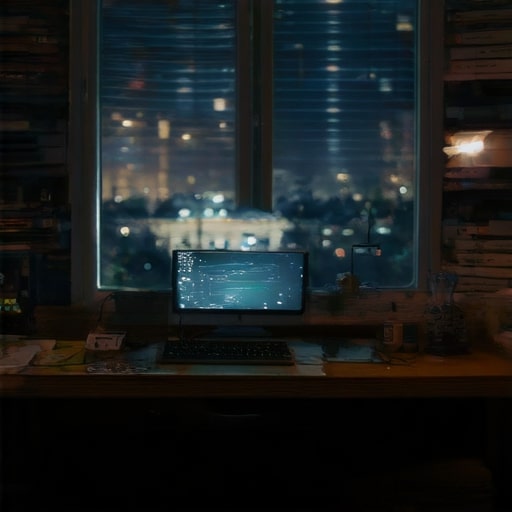}
  \\[0.01em]
  & \multicolumn{6}{c}{\scriptsize{\textit{``Computer on the desk at nighttime in front of a window.''}}} \\[0.02em]
  &
  \includegraphics[width=\linewidth]{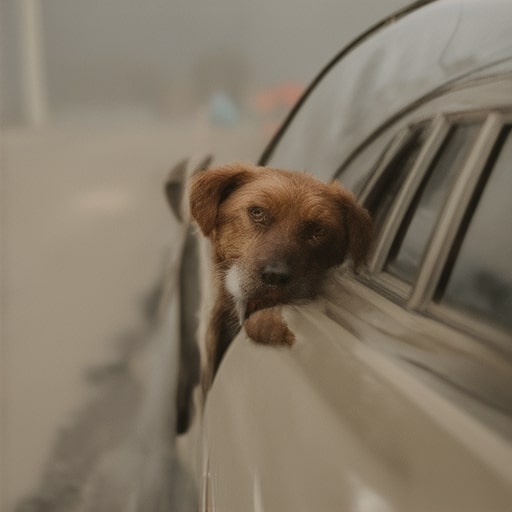}
  & \includegraphics[width=\linewidth]{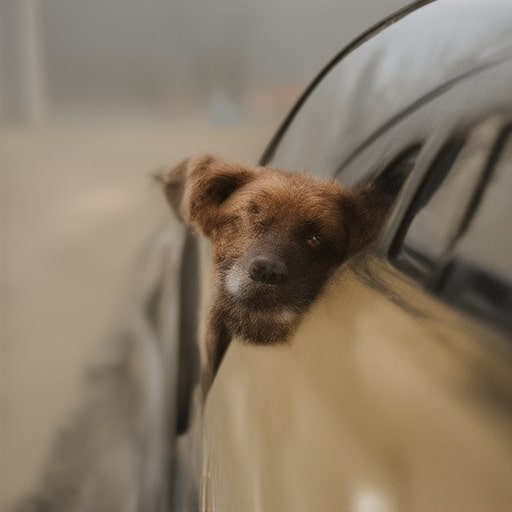}
  & \includegraphics[width=\linewidth]{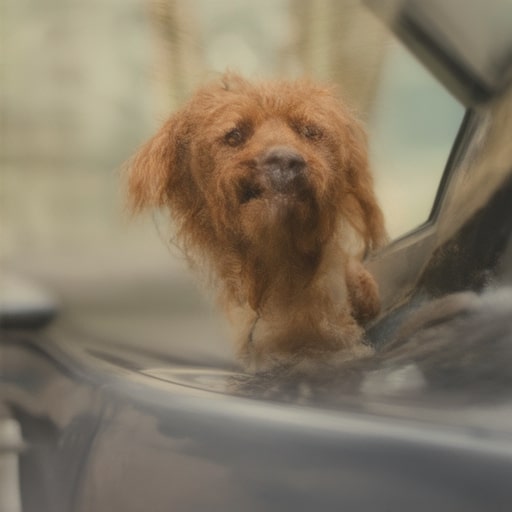}
  & \includegraphics[width=\linewidth]{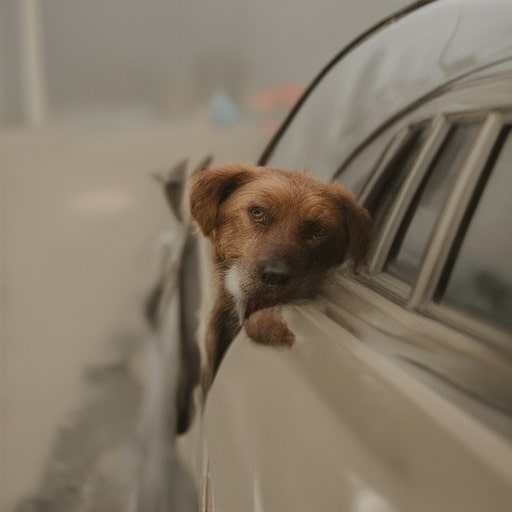}
  & \includegraphics[width=\linewidth]{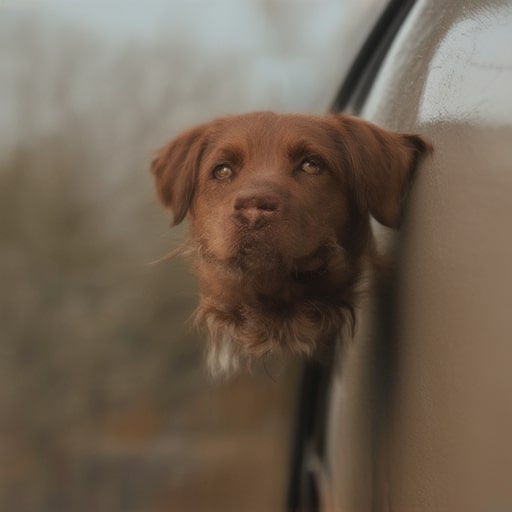}
  & \includegraphics[width=\linewidth]{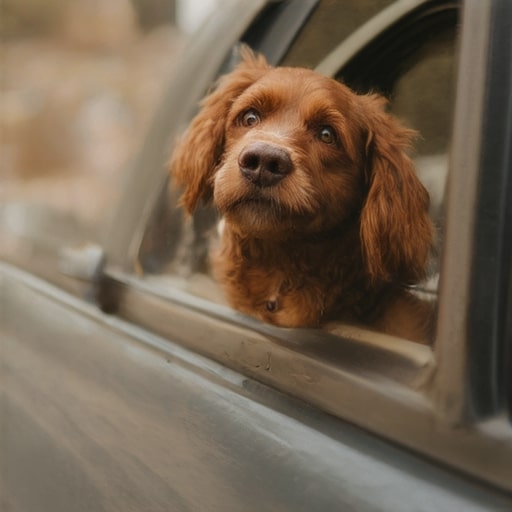}
  \\[0.01em]
  & \multicolumn{6}{c}{\scriptsize{\textit{``A brown dog hanging it's head out of a car window.''}}} \\[0.01em]
  \midrule
  NFE & RK2 & DMN & GITS & Bespoke & LD3 & BézierFlow \\
  \midrule
  \multirow{2}{*}{6}
  & \includegraphics[width=\linewidth]{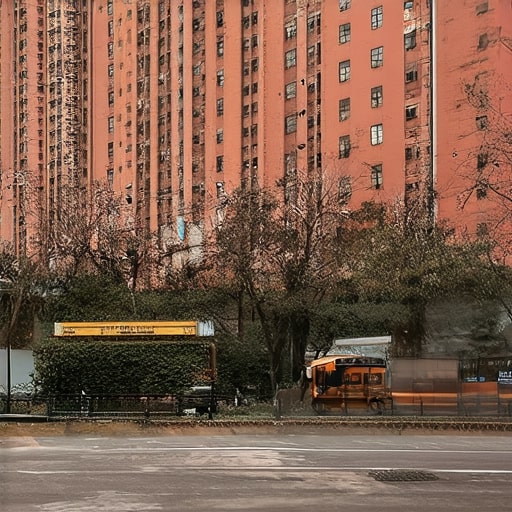}
  & \includegraphics[width=\linewidth]{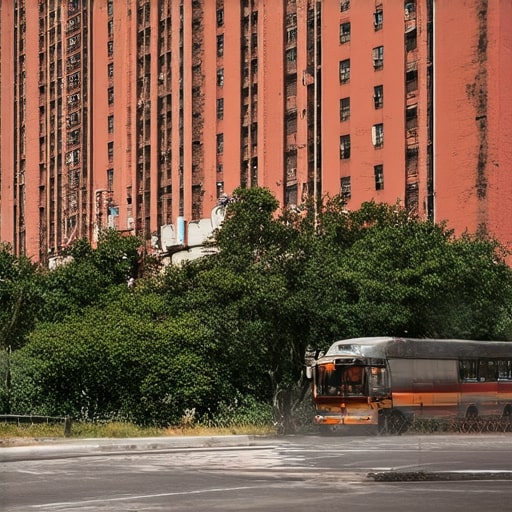}
  & \includegraphics[width=\linewidth]{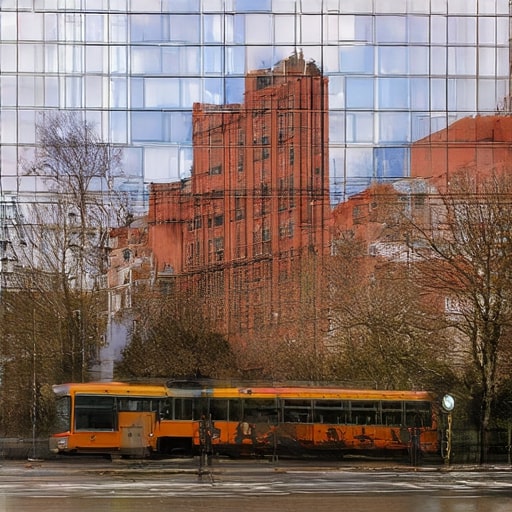}
  & \includegraphics[width=\linewidth]{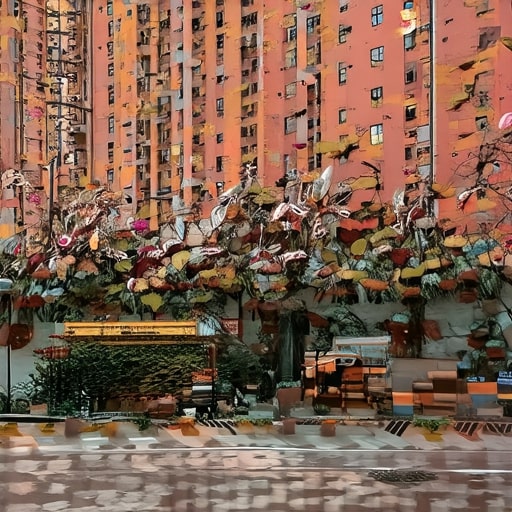}
  & \includegraphics[width=\linewidth]{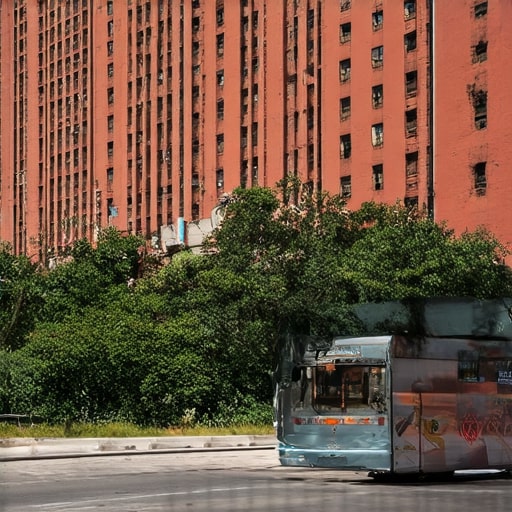}
  & \includegraphics[width=\linewidth]{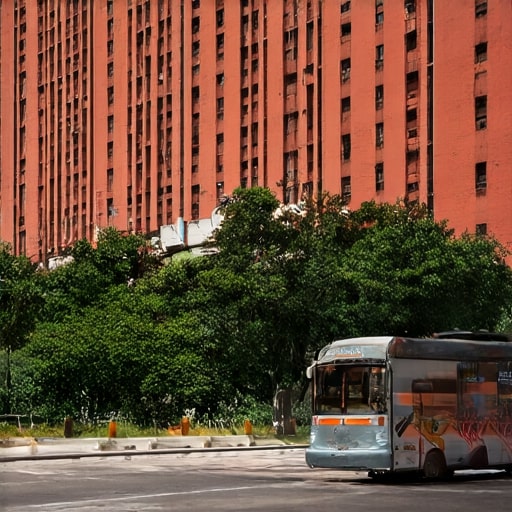}
  \\[0.01em]
  & \multicolumn{6}{c}{\scriptsize{\textit{``A bus stopped in front of a tall red building.''}}} \\[0.02em]
  &
  \includegraphics[width=\linewidth]{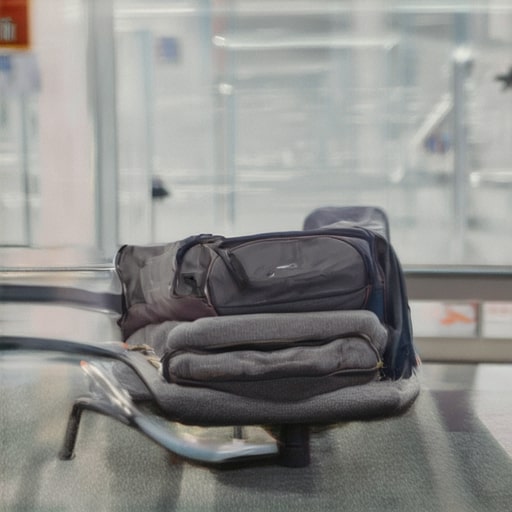}
  & \includegraphics[width=\linewidth]{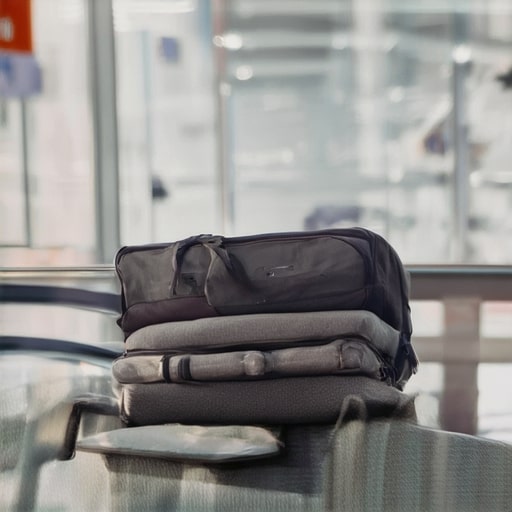}
  & \includegraphics[width=\linewidth]{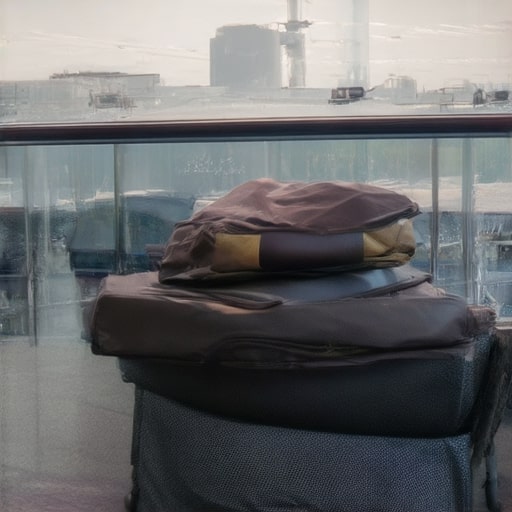}
  & \includegraphics[width=\linewidth]{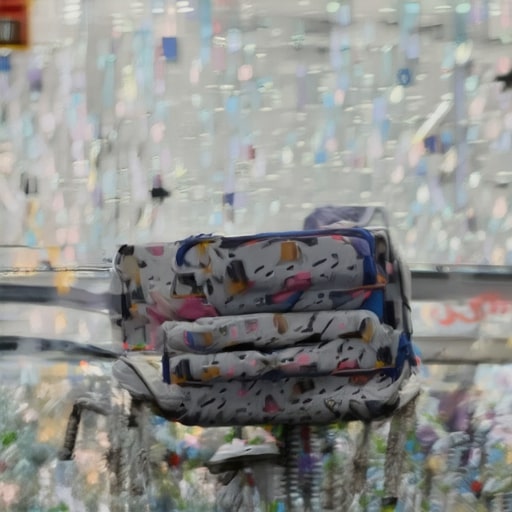}
  & \includegraphics[width=\linewidth]{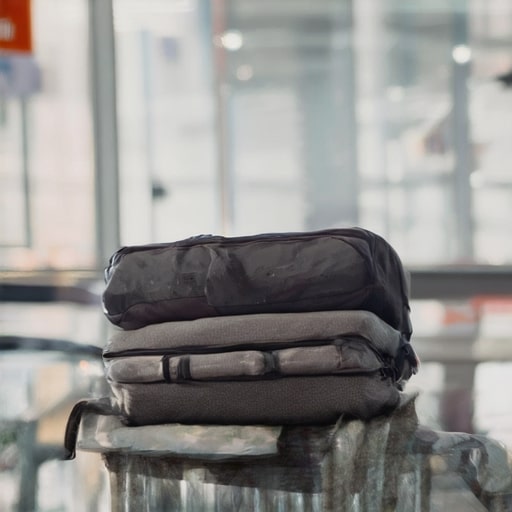}
  & \includegraphics[width=\linewidth]{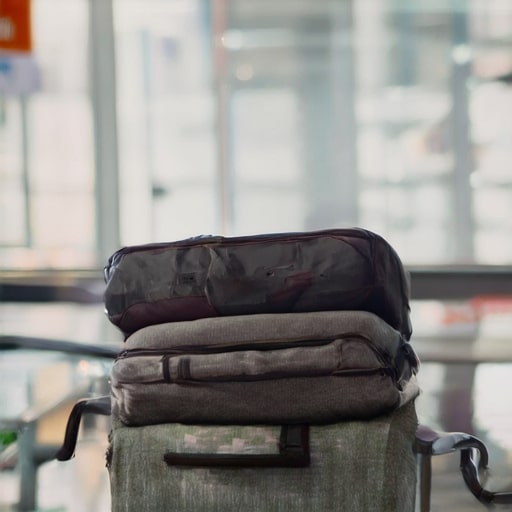}
  \\[0.01em]
  & \multicolumn{6}{c}{\scriptsize{\textit{``A few pieces of luggage sitting on top of a chair in an airport.''}}} \\[0.01em]
  \midrule
  \multirow{2}{*}{8}
  & \includegraphics[width=\linewidth]{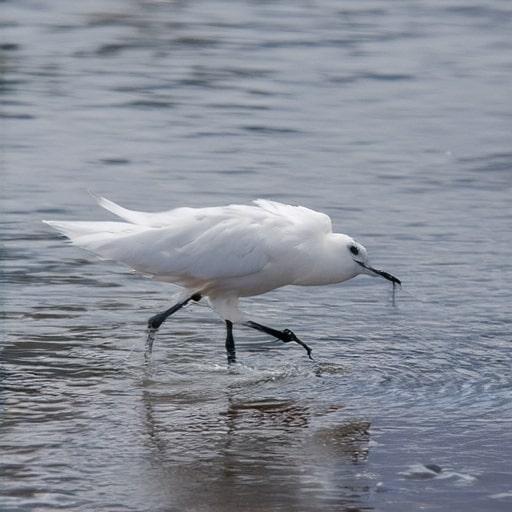}
  & \includegraphics[width=\linewidth]{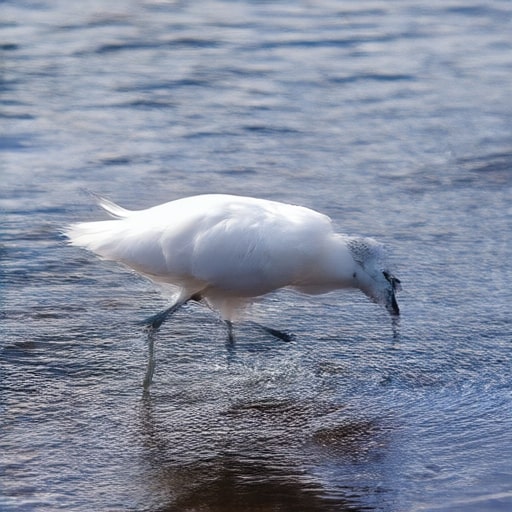}
  & \includegraphics[width=\linewidth]{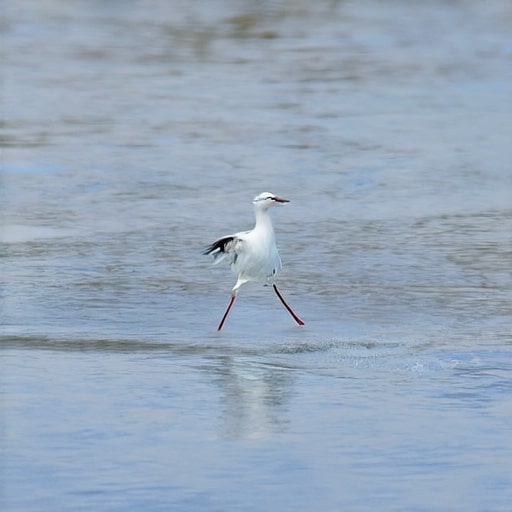}
  & \includegraphics[width=\linewidth]{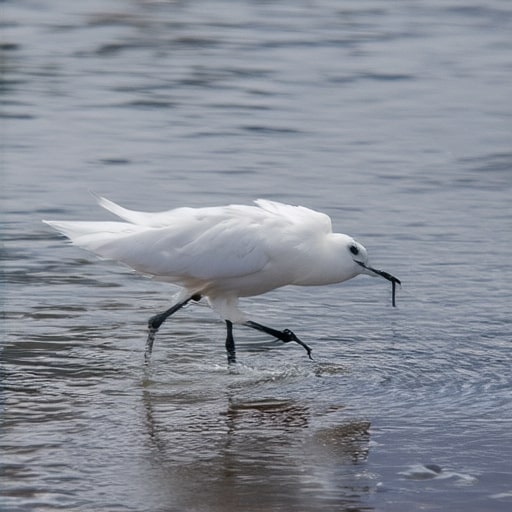}
  & \includegraphics[width=\linewidth]{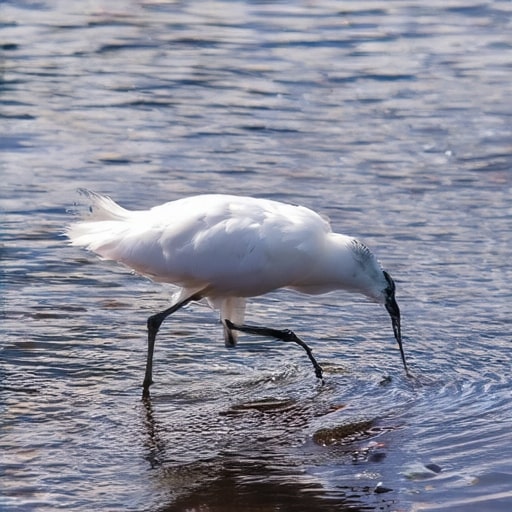}
  & \includegraphics[width=\linewidth]{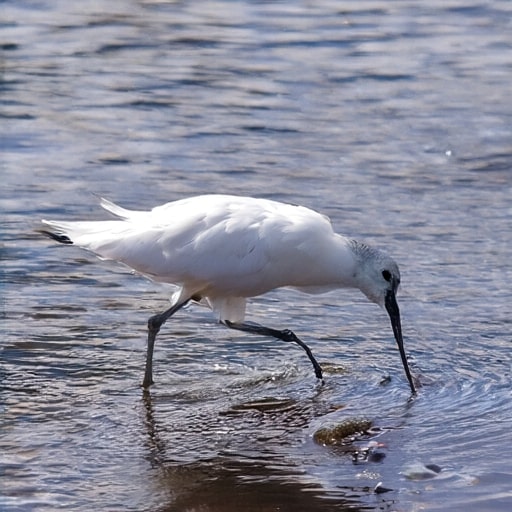}
  \\[0.01em]
  & \multicolumn{6}{c}{\scriptsize{\textit{``A white bird walking through a shallow area of water.''}}} \\[0.02em]
  &
  \includegraphics[width=\linewidth]{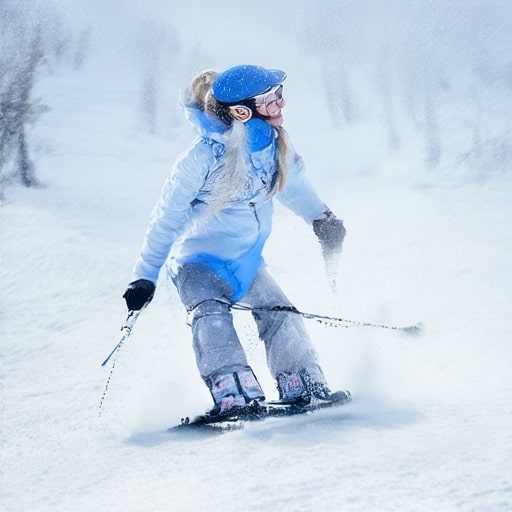}
  & \includegraphics[width=\linewidth]{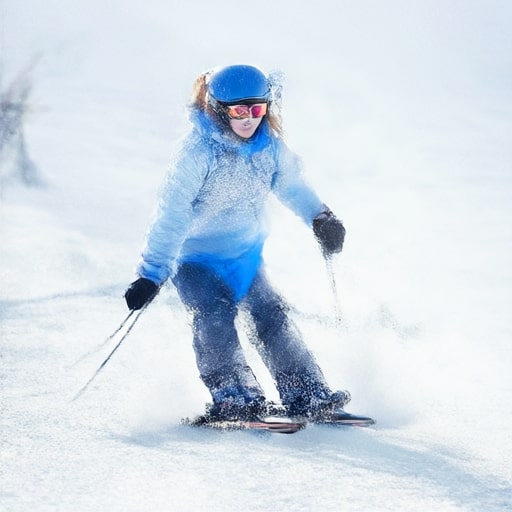}
  & \includegraphics[width=\linewidth]{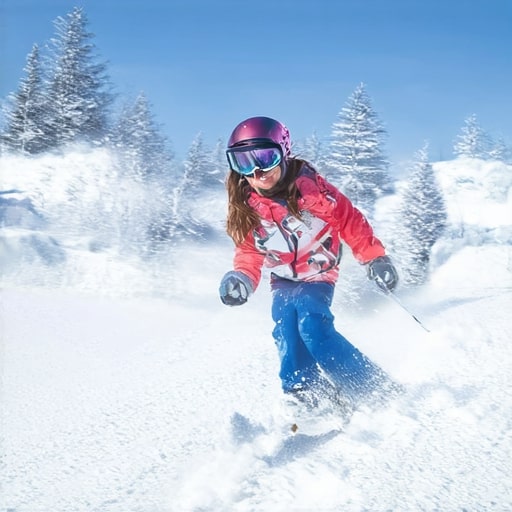}
  & \includegraphics[width=\linewidth]{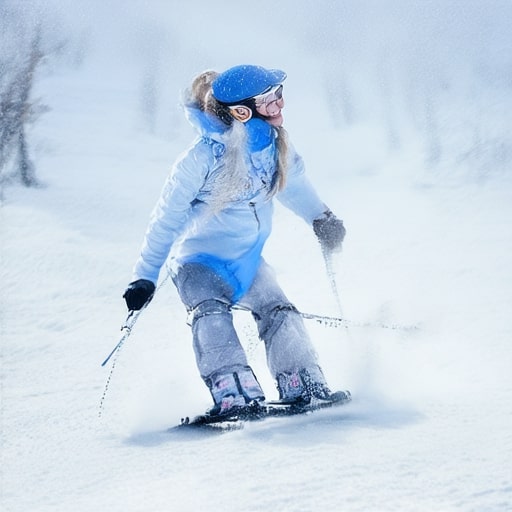}
  & \includegraphics[width=\linewidth]{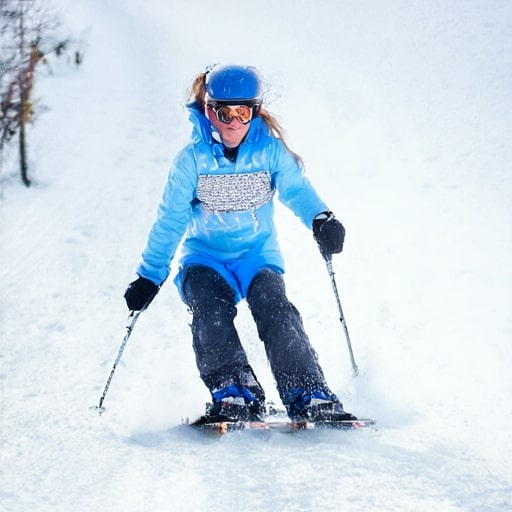}
  & \includegraphics[width=\linewidth]{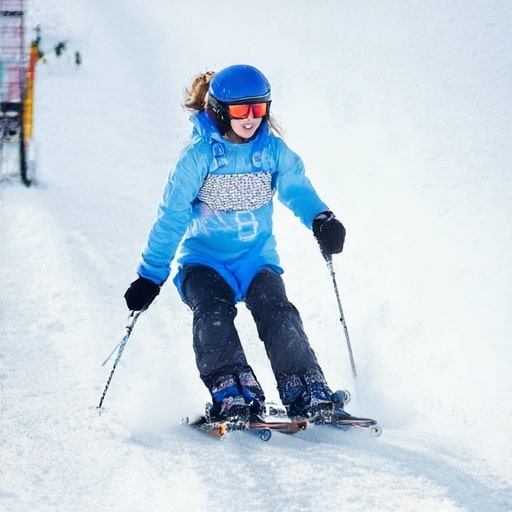}
  \\[0.01em]
  & \multicolumn{6}{c}{\scriptsize{\textit{``A young lady riding skis on a snow covered slope.''}}} \\[0.01em]
  \bottomrule
\end{tabularx}
}
\vspace{-0.6\baselineskip}
\caption{\textbf{Qualitative comparisons of samples generated using NFEs 6 and 8 with Stable Diffusion v3.5~\cite{Esser:2024SD3}.} We use RK1 and RK2 as the base solver.}
\label{fig:app_qualitative_sd}
\end{figure}

\section{More Qualitative Results}
\label{sec:more_quali}

We provide more qualitative results for accelerated sampling of diffusion models in Fig.~\ref{fig:app_qualitative_diffusion} and flow models in Fig.~\ref{fig:app_qualitative_flow} and Fig.~\ref{fig:app_qualitative_sd}. Across both model classes, \ours{} (BF) consistently yields clearer structures and more faithful details compared to baselines under low NFEs.

\section{Extension to Other Domains}
\label{sec:other_modalities}
BézierFlow is a generic framework applicable not only to image synthesis but also to various generative tasks within the stochastic interpolant framework. To demonstrate the versatility of our method and its robustness to different distance metrics beyond LPIPS, we conduct additional experiments on two distinct domains: 3D point cloud generation and layout generation.

\subsection{Unconditional 3D Point Cloud Generation}
\label{subsec:point_cloud}
3D Point cloud generation involves creating 3D representations of objects using discrete points, a task essential for applications in robotics, autonomous driving, and 3D modeling. We evaluate BézierFlow using the Point-Voxel Diffusion (PVD) model~\citep{Zhou:2021pvd}, trained on the \textit{airplane} category of the ShapeNet dataset~\citep{Chang:2016shapenet}.

\paragraph{Experiment Setup.} We adopt a simple mean squared error (MSE) loss for both training and validation. We generate 32 noise–data pairs for both the training and validation sets using DPM-Solver \citep{Lu:2022DPMSolver} with 64 NFEs, and train the model for 5 epochs. We compare our method against the same set of baselines reported in Tab.~\ref{tbl:main_fid_diffusion}.

\paragraph{Evaluation Metrics.} Following the evaluation protocol of PVD~\citep{Zhou:2021pvd}, we assess the quality of generated samples using three metrics based on the Chamfer Distance (CD): Minimum Matching Distance (CD-MMD), Coverage Score (CD-COV), and Jensen-Shannon Divergence (JSD).

\paragraph{Results.} Tab.~\ref{tab:point_cloud_results} presents the quantitative results. BézierFlow consistently achieves the best or second-best performance on CD-MMD, CD-COV across all NFEs, substantially improving over the base solvers and timestep-learning baselines~\citep{Xue:2024DMN, Chen:2024GITS, Tong:2025LD3}. Fig.~\ref{fig:pc_generation} provides qualitative comparisons of generated 3D point clouds, where BézierFlow better preserves both the global shape and coverage of the target distribution.

\begin{table}[t!]
    \centering
    \tiny
    \caption{\textbf{Quantitative comparison on unconditional 3D point cloud generation with Point Voxel Diffusion (PVD)~\citep{Zhou:2021pvd}.} Lower is better for CD-MMD (denoted as MMD) and JSD and higher is better for CD-COV (denoted as COV). CD-MMD is multiplied by $10^3$. Results for the base solvers are reported on each top rows. \textbf{Bold} indicates the best results, and \underline{underline} marks the second best. Gray cells indicate the
base ODE solvers.}
    \label{tab:point_cloud_results}
    \setlength{\tabcolsep}{2pt}
    \begin{tabularx}{\textwidth}
    {>{\raggedright\arraybackslash}p{0.11\textwidth}
    |*{12}{>{\centering\arraybackslash}X}}
        \toprule
        \multirow{2}{*}{Method} & \multicolumn{3}{c}{NFE=4} & \multicolumn{3}{c}{NFE=6} & \multicolumn{3}{c}{NFE=8} & \multicolumn{3}{c}{NFE=10} \\
        \cmidrule(lr){2-4} \cmidrule(lr){5-7} \cmidrule(lr){8-10} \cmidrule(lr){11-13}
        & MMD $\downarrow$ & COV $\uparrow$ & JSD $\downarrow$ & MMD $\downarrow$ & COV $\uparrow$ & JSD $\downarrow$ & MMD $\downarrow$ & COV $\uparrow$ & JSD $\downarrow$ & MMD $\downarrow$ & COV $\uparrow$ & JSD $\downarrow$ \\
        \midrule
        \cellcolor{gray!20}UniPC & 2.50 & 3.21 & 0.46 & 1.25 & 8.89 & 0.30 & 0.95 & 17.03 & 0.25 & 0.79 & 20.25 & \underline{0.22} \\
        + DMN & \underline{1.10} & 16.79 & \underline{0.27} & \underline{0.68} & \textbf{26.42} & \textbf{0.23} & 1.50 & 15.06 & 0.32 & \underline{0.67} & 19.51 & 0.27 \\
        + GITS & 5.32 & 9.52 & 0.56 & 9.14 & 0.74 & 0.63 & 1.20 & 20.25 & 0.31 & 0.90 & 16.30 & 0.23 \\
        + LD3 & 1.20 & \textbf{21.23} & \textbf{0.24} & 1.16 & 20.74 & \underline{0.24} & \underline{0.80} & \underline{21.48} & \underline{0.25} & 0.91 & \underline{21.23} & 0.23 \\
        + BézierFlow & \textbf{0.88} & \underline{18.77} & 0.29 & \textbf{0.59} & \underline{22.72} & \textbf{0.23} & \textbf{0.58} & \textbf{23.45} & \textbf{0.24} & \textbf{0.53} & \textbf{23.70} & \textbf{0.21} \\
        \midrule
        \cellcolor{gray!20}iPNDM & \underline{1.17} & 14.81 & \textbf{0.27} & 0.91 & 16.54 & \underline{0.23} & 0.78 & \underline{23.46} & \textbf{0.21} & 0.67 & \textbf{26.67} & \textbf{0.20} \\
        + DMN & 1.18 & \textbf{19.26} & \underline{0.29} & \underline{0.63} & \textbf{24.69} & \textbf{0.22} & 1.74 & 6.17 & 0.35 & \underline{0.65} & 20.49 & 0.22 \\
        + GITS & 3.22 & 7.65 & 0.44 & 3.59 & 3.21 & 0.48 & 3.99 & 1.73 & 0.49 & 2.73 & 3.95 & 0.41 \\
        + LD3 & 2.40 & 13.33 & 0.34 & 0.89 & 18.52 & 0.25 & \underline{0.77} & 19.01 & 0.25 & 0.70 & 22.72 & 0.22 \\
        + BézierFlow & \textbf{0.85} & \underline{18.52} & \underline{0.29} & \textbf{0.58} & \underline{22.73} & \underline{0.23} & \textbf{0.57} & \textbf{24.44} & \underline{0.23} & \textbf{0.56} & \underline{24.52} & \underline{0.21} \\
        \bottomrule
    \end{tabularx}
\end{table}
\begin{figure}[t!]
\centering
{\scriptsize
\setlength{\tabcolsep}{1pt}
\begin{tabularx}{\textwidth}{@{}
  >{\centering\arraybackslash}m{3em}
  *{4}{>{\centering\arraybackslash}X}
  >{\centering\arraybackslash}X
@{}}
  \toprule
  \multicolumn{1}{c|}{NFE} 
  & iPNDM & DMN & GITS & LD3 
  & BézierFlow \\
  \midrule
      \multicolumn{6}{@{}c@{}}{ShapeNet \emph{airplane} with PVD~\citep{Zhou:2021pvd}} \\
\midrule
  \multirow{2}{*}{6}
  & \includegraphics[width=\linewidth]{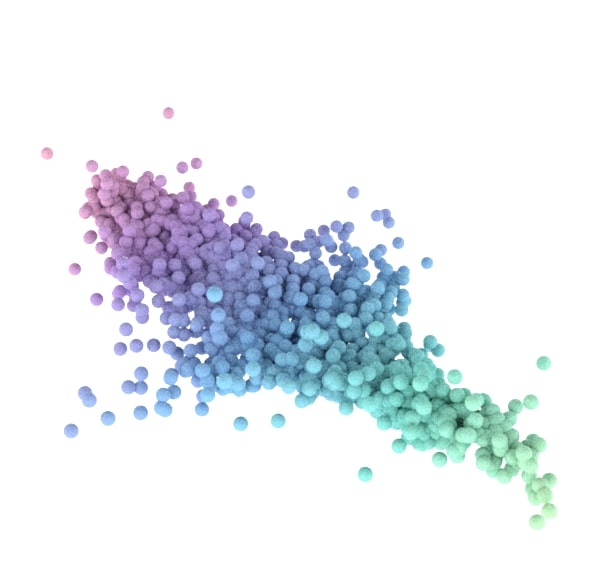}
  & \includegraphics[width=\linewidth]{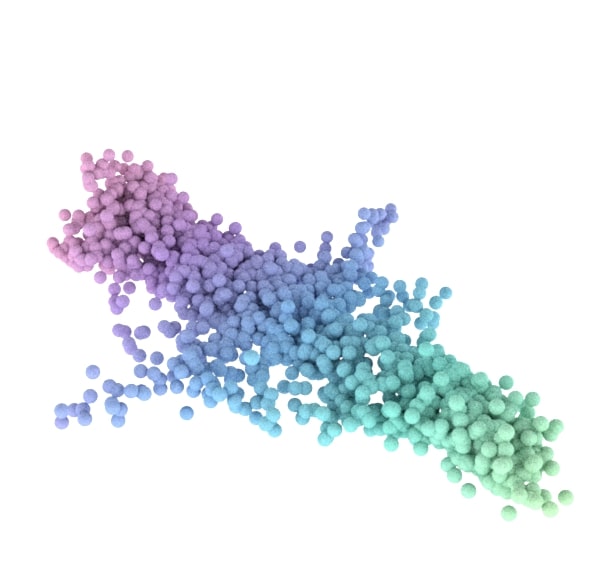}
  & \includegraphics[width=\linewidth]{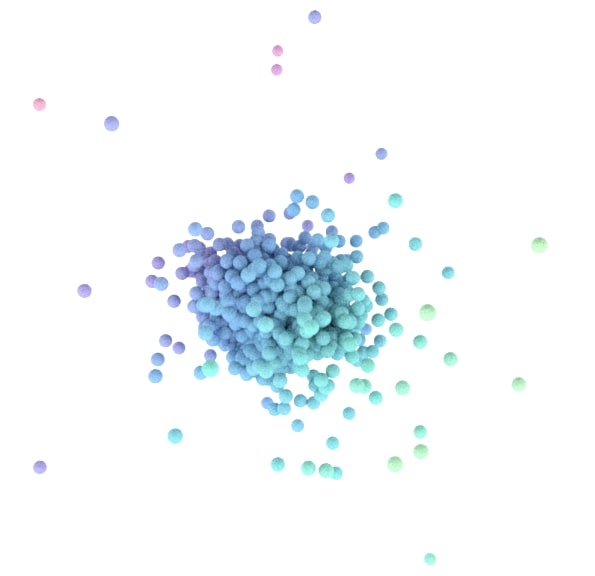}
  & \includegraphics[width=\linewidth]{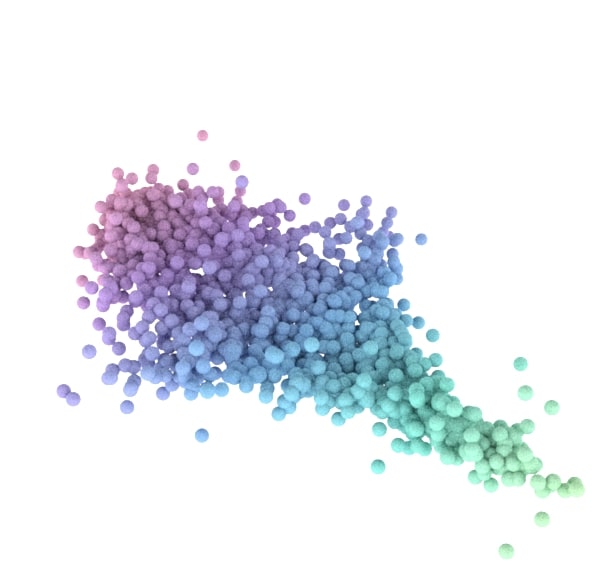}
  & \includegraphics[width=\linewidth]{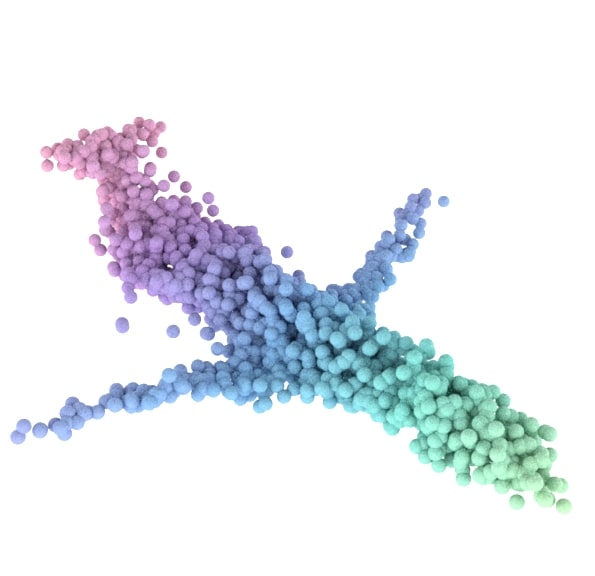}
  \\
  &
  \includegraphics[width=\linewidth]{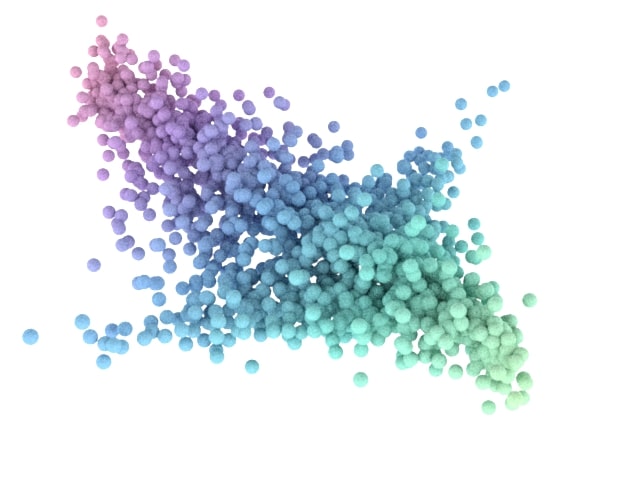}
  & \includegraphics[width=\linewidth]{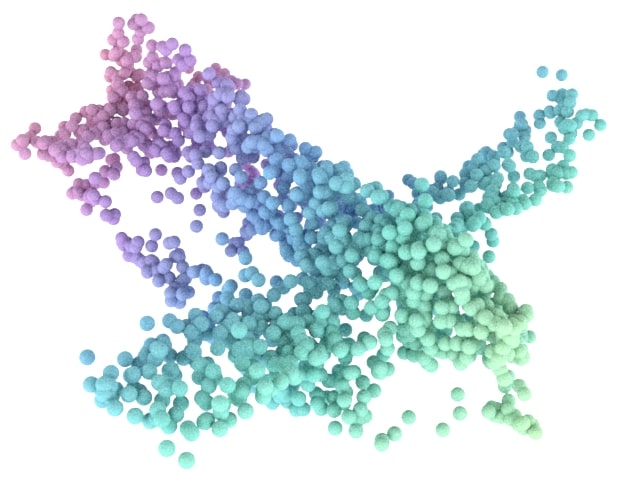}
  & \includegraphics[width=\linewidth]{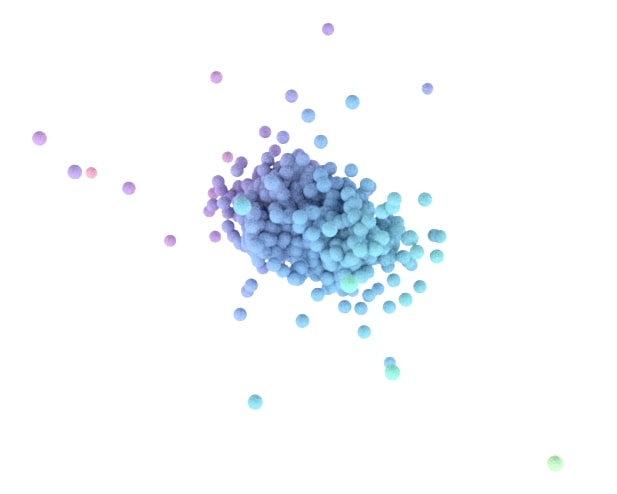}
  & \includegraphics[width=\linewidth]{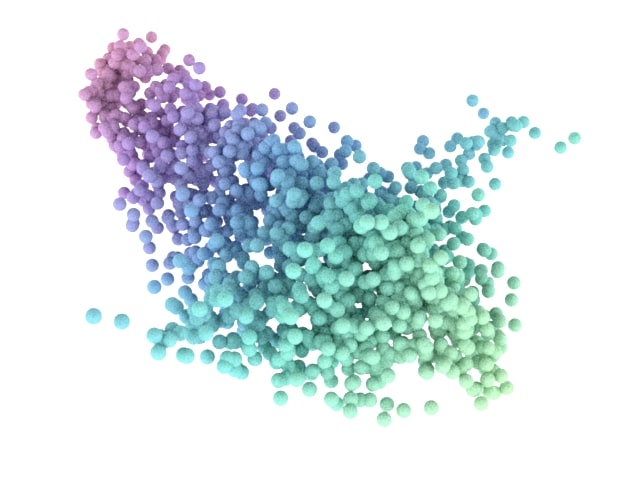}
  & \includegraphics[width=\linewidth]{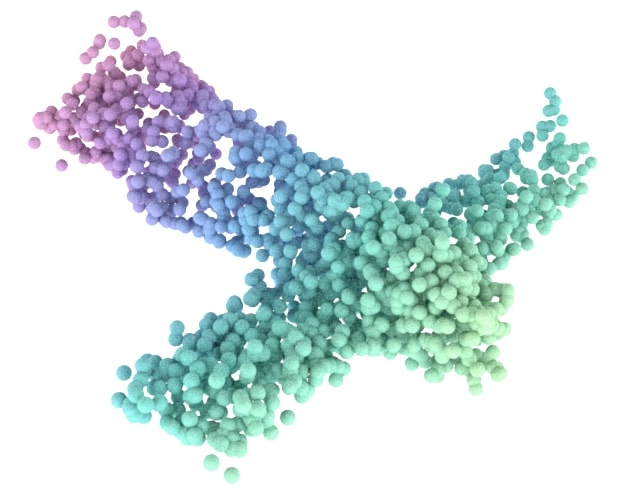}
  \\
  \midrule
  \multirow{2}{*}{8}
  & \includegraphics[width=\linewidth]{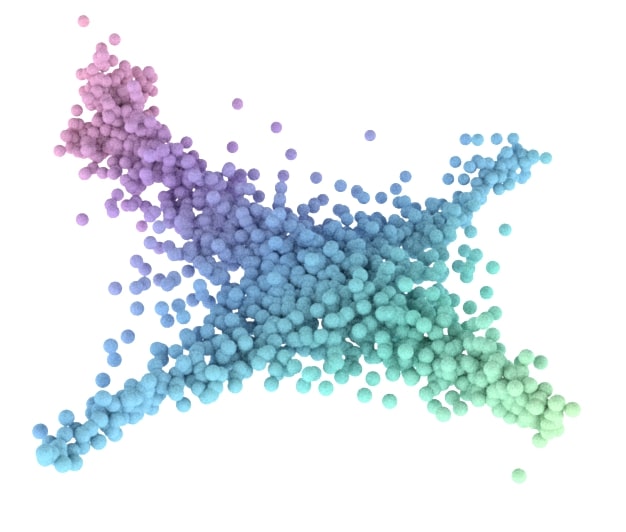}
  & \includegraphics[width=\linewidth]{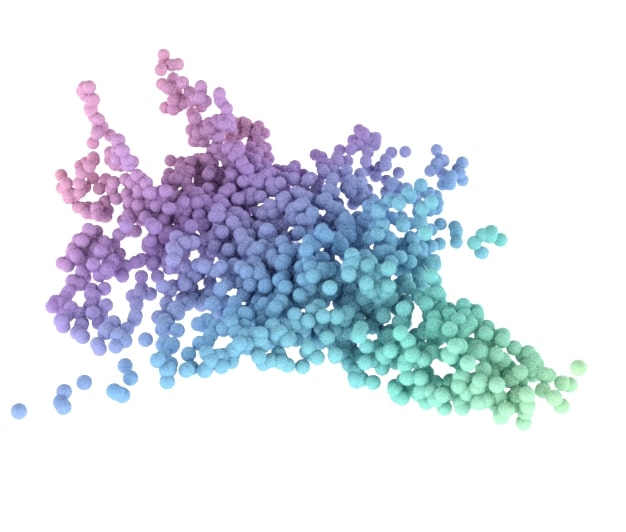}
  & \includegraphics[width=\linewidth]{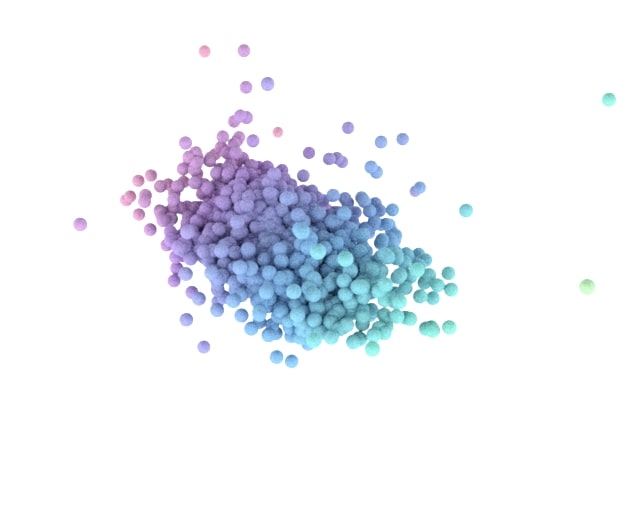}
  & \includegraphics[width=\linewidth]{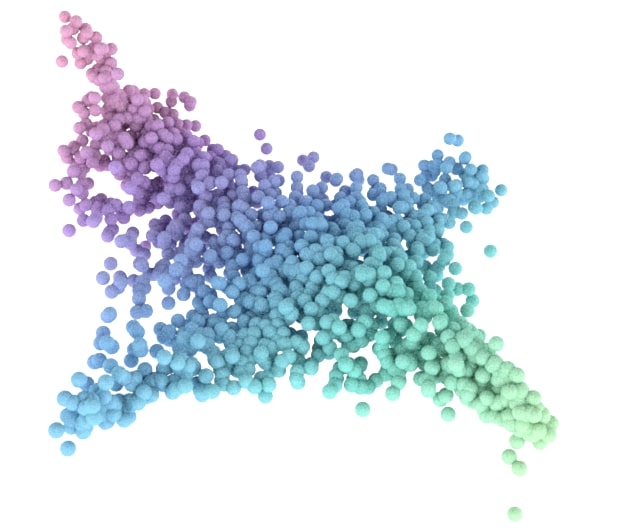}
  & \includegraphics[width=\linewidth]{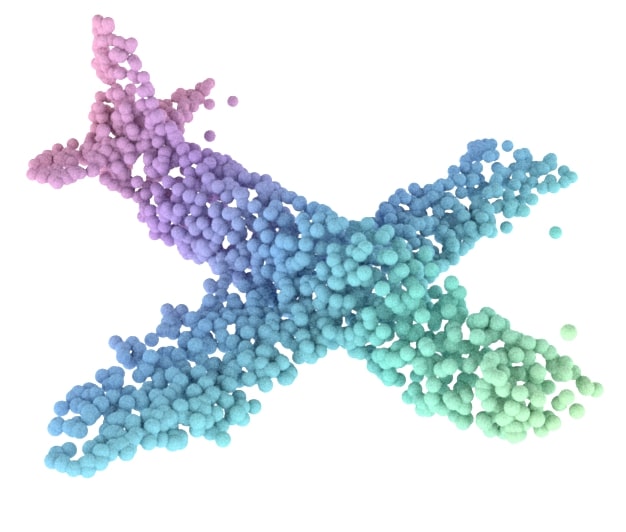}
  \\
  &
  \includegraphics[width=\linewidth]{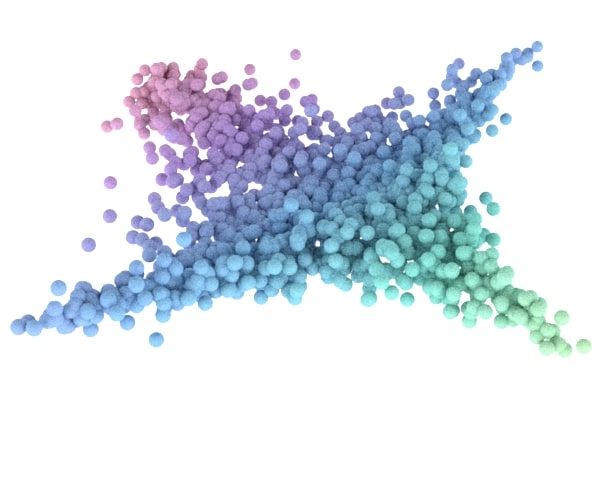}
  & \includegraphics[width=\linewidth]{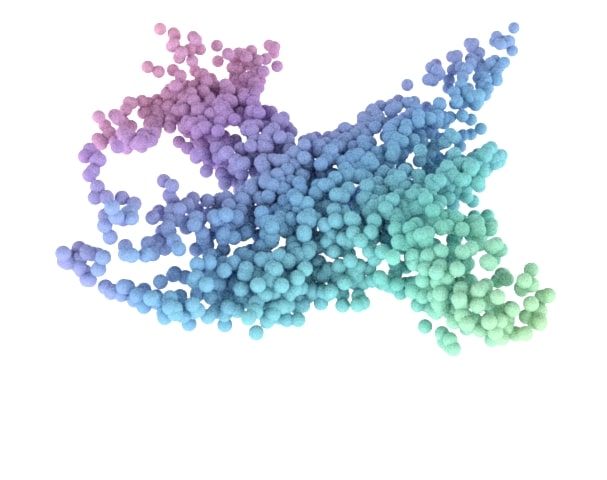}
  & \includegraphics[width=\linewidth]{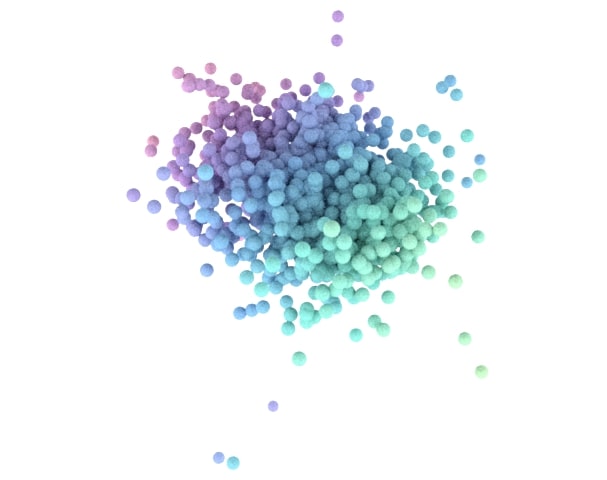}
  & \includegraphics[width=\linewidth]{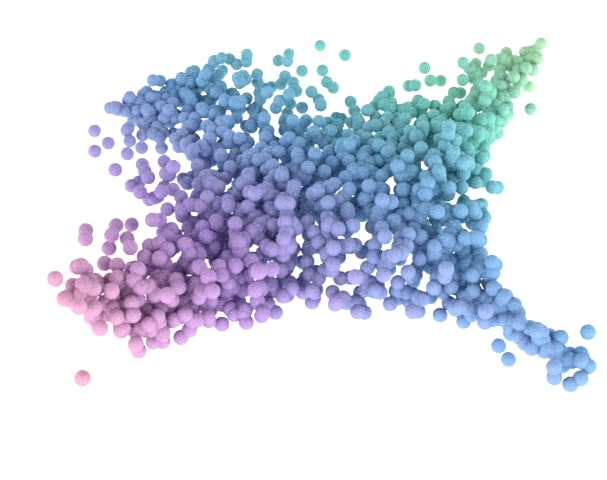}
  & \includegraphics[width=\linewidth]{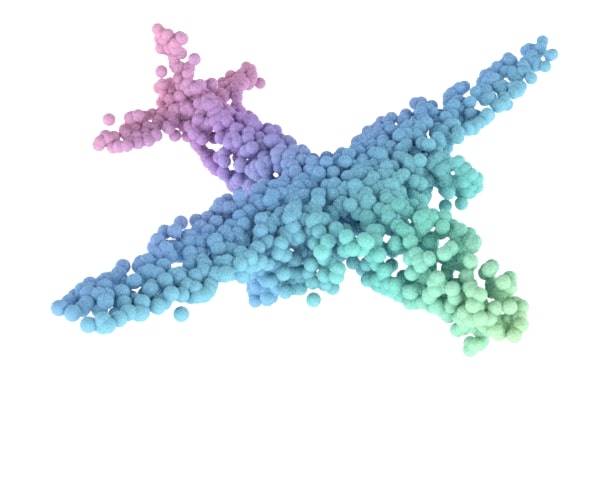}
  \\
  \bottomrule
\end{tabularx}
}
\caption{\textbf{Qualitative comparisons of 3D point cloud samples generated using NFEs 6 and 8 with PVD~\citep{Zhou:2021pvd}.} We use iPNDM as the base solver.}
\label{fig:pc_generation}
\end{figure}
\begin{table}[t!]
    \centering
    \tiny
    \caption{\textbf{Quantitative comparison on unconditional layout generation with LayoutFlow~\citep{Guerreiro:2024layout}.} Lower is better for FID, Alignment (denoted as Align.), Overlap. Results for the base solvers are reported on each top rows. \textbf{Bold} indicates the best results, and \underline{underline} marks the second best. Gray cells indicate the
base ODE solvers.}
    \label{tab:layout_results}
    \setlength{\tabcolsep}{2pt}
    \begin{tabularx}{\textwidth}
  {>{\raggedright\arraybackslash}p{0.11\textwidth}
   |*{12}{>{\centering\arraybackslash}X}}
        \toprule
        \multirow{2}{*}{Method} & \multicolumn{3}{c}{NFE=4} & \multicolumn{3}{c}{NFE=6} & \multicolumn{3}{c}{NFE=8} & \multicolumn{3}{c}{NFE=10} \\
        \cmidrule(lr){2-4} \cmidrule(lr){5-7} \cmidrule(lr){8-10} \cmidrule(lr){11-13}
        & FID $\downarrow$ & Align. $\downarrow$ & Overlap $\downarrow$ & FID $\downarrow$ & Align. $\downarrow$ & Overlap $\downarrow$ & FID $\downarrow$ & Align. $\downarrow$ & Overlap $\downarrow$ & FID $\downarrow$ & Align. $\downarrow$ & Overlap $\downarrow$ \\
        \midrule
        \cellcolor{gray!20}RK1 & 55.88 & 0.40 & 0.60 & 22.75 & 0.35 & 0.56 & 11.66 & 0.30 & 0.54 & 7.93 & 0.27 & 0.52 \\
        + DMN & 178.35 & 0.55 & 1.08 & 88.40 & 0.69 & 0.70 & 26.27 & 0.37 & \underline{0.46} & 10.96 & 0.33 & \textbf{0.46} \\
        + GITS & 41.08 & 0.37 & 0.57 & 12.84 & 0.35 & \textbf{0.47} & 7.32 & 0.29 & \textbf{0.45} & 5.90 & 0.27 & \textbf{0.46} \\
        + Bespoke & 213.61 & 0.92 & 1.01 & 201.20 & 0.88 & 0.67 & 168.49 & 0.63 & 0.59 & 171.11 & 0.63 & 0.56 \\
        + LD3 & \textbf{19.51} & \textbf{0.32} & \underline{0.54} & \underline{8.36} & \underline{0.28} & \underline{0.51} & \underline{5.03} & \textbf{0.23} & 0.48 & \underline{3.70} & \underline{0.23} & \underline{0.47} \\
        + BézierFlow & \underline{32.78} & \underline{0.35} & \textbf{0.53} & \textbf{7.10} & \textbf{0.26} & \textbf{0.47} & \textbf{3.86} & \underline{0.25} & 0.49 & \textbf{2.96} & \textbf{0.22} & 0.50 \\
        \midrule
        \cellcolor{gray!20}RK2 & 143.90 & 0.67 & 0.65 & 73.91 & 0.47 & 0.49 & 35.84 & 0.38 & 0.51 & 20.80 & 0.34 & 0.51 \\
        + DMN & 142.40 & 0.66 & \textbf{0.35} & 88.15 & 0.49 & \underline{0.46} & 63.57 & 0.37 & \textbf{0.42} & 56.23 & 0.35 & \textbf{0.43} \\
        + GITS & \textbf{102.11} & \textbf{0.49} & \underline{0.42} & 51.62 & 0.37 & 0.48 & 27.84 & \underline{0.32} & 0.50 & \underline{8.25} & \textbf{0.22} & \underline{0.47} \\
        + Bespoke & \underline{126.80} & \underline{0.61} & 0.47 & 187.62 & 0.86 & \textbf{0.37} & 32.99 & 0.38 & 0.54 & 21.54 & 0.36 & 0.50 \\
        + LD3 & 162.98 & 0.62 & 0.47 & \underline{42.82} & \underline{0.37} & 0.48 & \textbf{12.57} & \textbf{0.26} & \underline{0.48} & 8.39 & 0.27 & 0.48 \\
        + BézierFlow & 142.34 & 0.63 & 0.57 & \textbf{39.17} & \textbf{0.35} & 0.52 & \underline{25.51} & 0.37 & 0.50 & \textbf{7.18} & \underline{0.26} & 0.49 \\
        \bottomrule
    \end{tabularx}
\end{table}
\begin{figure}[t!]
\centering
{\scriptsize
\setlength{\tabcolsep}{1pt}
\begin{tabularx}{\textwidth}{@{}
  >{\centering\arraybackslash}m{3em}
  *{6}{>{\centering\arraybackslash}X}
  >{\centering\arraybackslash}X
@{}}
  \toprule
  \multicolumn{1}{c|}{NFE} 
  & RK1 & DMN & GITS & Bespoke & LD3 
  & BézierFlow 
  & \multicolumn{1}{|c}{Teacher} \\
  \midrule
      \multicolumn{8}{@{}c@{}}{RICO with LayoutFlow~\citep{Guerreiro:2024layout}} \\
\midrule
  \multirow{2}{*}{6}
  & \includegraphics[width=\linewidth]{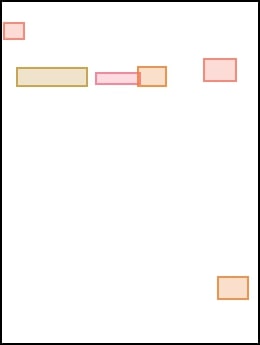}
  & \includegraphics[width=\linewidth]{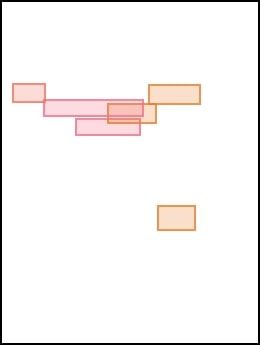}
  & \includegraphics[width=\linewidth]{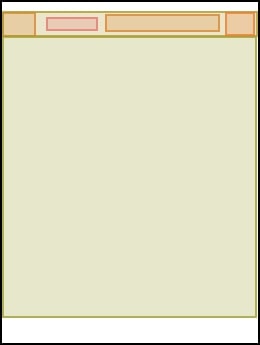}
  & \includegraphics[width=\linewidth]{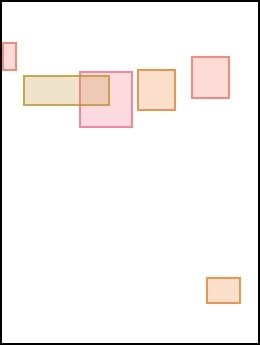}
  & \includegraphics[width=\linewidth]{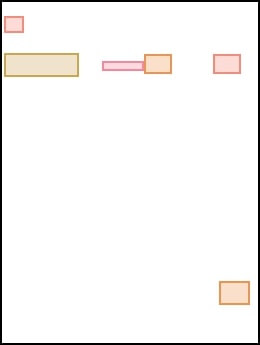}
  & \includegraphics[width=\linewidth]{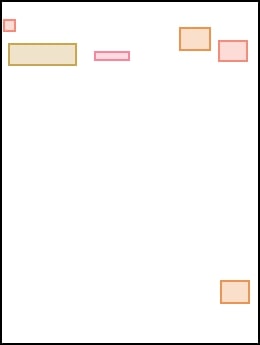}
  & \includegraphics[width=\linewidth]{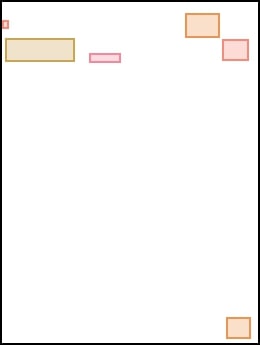}
  \\
  &
  \includegraphics[width=\linewidth]{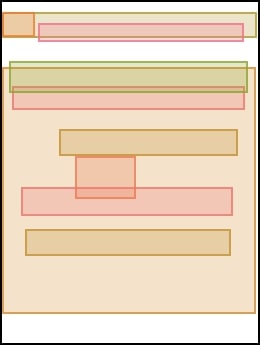}
  & \includegraphics[width=\linewidth]{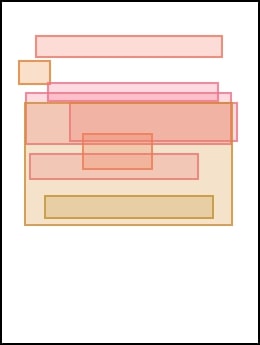}
  & \includegraphics[width=\linewidth]{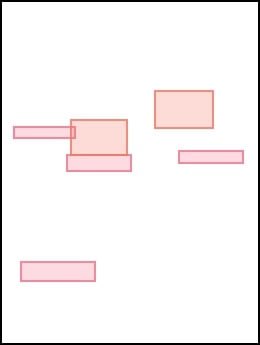}
  & \includegraphics[width=\linewidth]{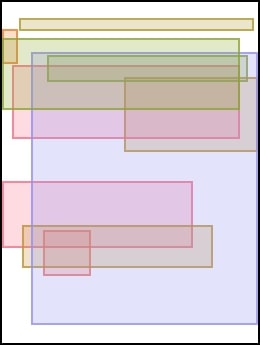}
  & \includegraphics[width=\linewidth]{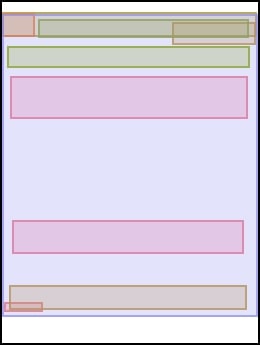}
  & \includegraphics[width=\linewidth]{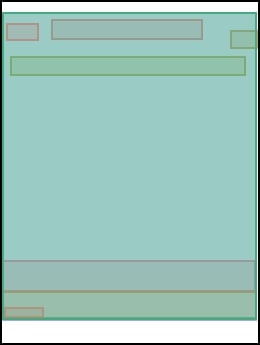}
  & \includegraphics[width=\linewidth]{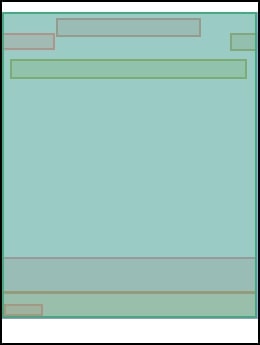}
  \\
  \midrule
  \multirow{2}{*}{8}
  & \includegraphics[width=\linewidth]{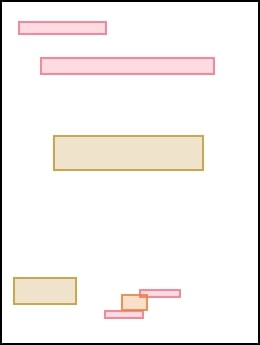}
  & \includegraphics[width=\linewidth]{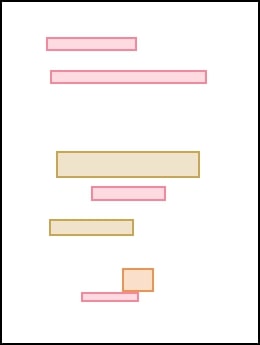}
  & \includegraphics[width=\linewidth]{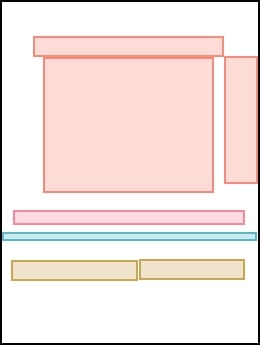}
  & \includegraphics[width=\linewidth]{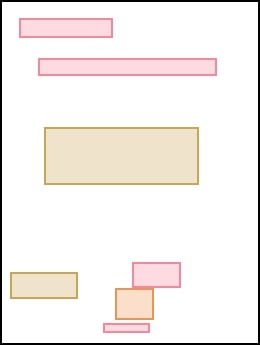}
  & \includegraphics[width=\linewidth]{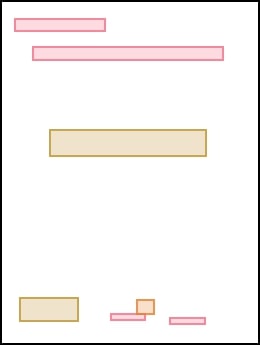}
  & \includegraphics[width=\linewidth]{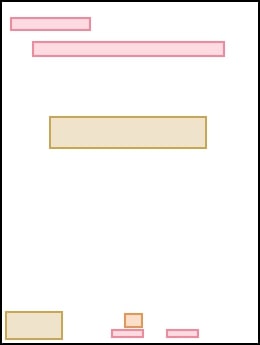}
  & \includegraphics[width=\linewidth]{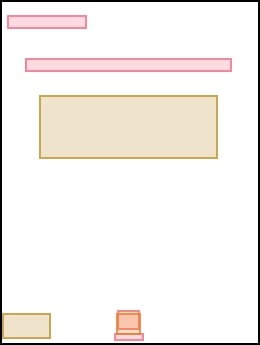}
  \\
  &
  \includegraphics[width=\linewidth]{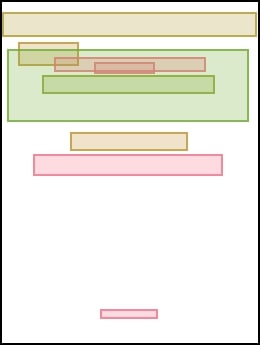}
  & \includegraphics[width=\linewidth]{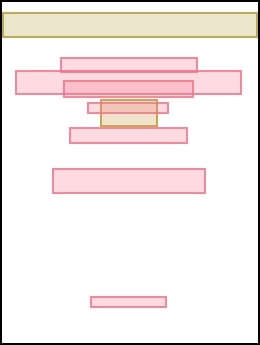}
  & \includegraphics[width=\linewidth]{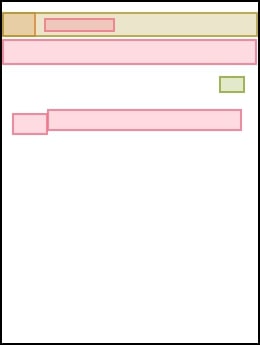}
  & \includegraphics[width=\linewidth]{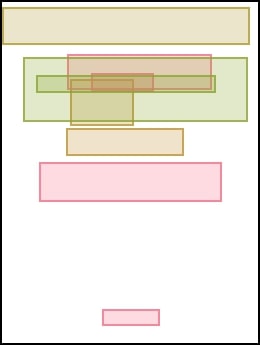}
  & \includegraphics[width=\linewidth]{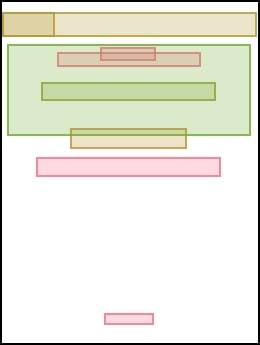}
  & \includegraphics[width=\linewidth]{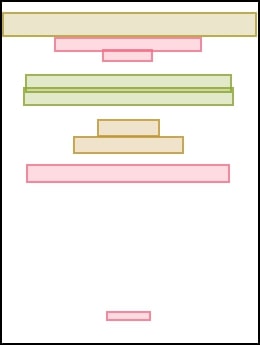}
  & \includegraphics[width=\linewidth]{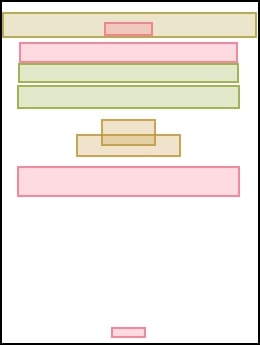}
  \\
  \bottomrule
\end{tabularx}
}
\caption{\textbf{Qualitative comparisons of layout samples generated using NFEs 6 and 8 with LayoutFlow~\citep{Guerreiro:2024layout}.} We use RK1 as the base solver. The rightmost column shows teacher samples from RK45 solver.}
\label{fig:layout_generation}
\end{figure}

\subsection{Unconditional Layout Generation}
\label{subsec:layout_gen}
Layout generation aims to synthesize structural arrangements of elements (e.g., UI components, document blocks), which is a critical step in graphic design automation. We evaluate our method on unconditional layout generation using LayoutFlow~\citep{Guerreiro:2024layout}, pretrained on the RICO dataset~\citep{Deka:2017rico}.

\paragraph{Experiment Setup.} We adopt negative mean Intersection over Union (mIoU) between the teacher and student layouts as the objective for both training and validation. We generate 50 noise–data pairs for both the training and validation sets using an RK45~\citep{BUTCHER:1996RK} solver, and train the model for 5 epochs. We compare our method against the same set of baselines reported in Tab.~\ref{tbl:main_fid_flow}.

\paragraph{Evaluation Metrics.} Following the evaluation protocol of LayoutFlow~\citep{Guerreiro:2024layout}, we assess generation quality using Fréchet Inception Distance (FID) adapted for layouts, alongside Alignment and Overlap scores. For FID calculation, we employ the feature extractor from LayoutDiffusion~\citep{Zheng:2023layoutdiffusion}.

\paragraph{Results.} As shown in Table \ref{tab:layout_results}, BézierFlow shows better performance than other baselines in terms of FID and Alignment. Interestingly, it consistently outperforms the base solvers (RK1 and RK2) on all metrics at the same NFE settings. We provide a qualitative comparison of the generated layouts in Fig.~\ref{fig:layout_generation}. BézierFlow produces layouts that most closely follow the teacher trajectory, preserving the spatial arrangement and aspect ratios of objects.

\begin{table}[!t]
\centering
\scriptsize
\caption{\textbf{FID comparison of VDM, Multi-marginal SI (denoted as MMSI), and BézierFlow on CIFAR-10.} Results for the base solvers are reported on each top rows. \textbf{Bold} indicates the best results, and \underline{underline} marks the second best. Gray cells indicate the base ODE solvers.}
\setlength{\tabcolsep}{5pt}
\begin{tabularx}{\textwidth}
  {>{\raggedright\arraybackslash}p{0.1\textwidth}
   |*{4}{>{\centering\arraybackslash}X}
   |>{\raggedright\arraybackslash}p{0.1\textwidth}
   |*{4}{>{\centering\arraybackslash}X}}
\toprule
 Method & NFE=4 & NFE=6 & NFE=8 & NFE=10 & Method & NFE=4 & NFE=6 & NFE=8 & NFE=10 \\
 \midrule
\multicolumn{10}{c}{CIFAR-10 $32\times 32$ with ReFlow~\citep{Liu:2023RF} (Teacher FID: 2.70)} \\
\midrule
 \cellcolor{gray!20}RK1 & 52.78 & 26.30 & 17.40 & 13.30 & \cellcolor{gray!20}RK2 & 25.36 & 12.12 & 9.17 & 7.89 \\
 + VDM & 54.72 & 22.06 & 19.10 & 19.00 & + VDM & 36.24 & 25.74 & 16.37 & 12.39 \\
 + MMSI & \underline{22.89}&\underline{12.06}&\underline{7.59}&\underline{5.86}
 & + MMSI & \underline{20.82}&\underline{9.03}&\underline{7.57}&\underline{7.79} \\
 + \ours{} & \textbf{20.64} & \textbf{9.67} & \textbf{7.30} & \textbf{5.51} & + \ours{} & \textbf{13.18}&\textbf{6.00}&\textbf{4.31}&\textbf{3.74} \\

 \bottomrule
\end{tabularx}
\label{tbl:vdm_mmsi}
\end{table}

\section{Comparison with Other Scheduler Parameterizations}
In this section, we discuss prior work~\citep{Kingma:2021VDM, Albergo:2024multimarginalsi} that also learns optimal SI schedulers and compare them against BézierFlow. We first clarify how these methods differ in their scheduler parameterizations, and then experimentally show that BézierFlow achieves superior performance due to its compact parameterization that explicitly satisfies the core SI scheduler requirements: boundary conditions, monotonicity, and differentiability.

\paragraph{Varitional Diffusion Models~\citep{Kingma:2021VDM}.} Variational Diffusion Models (VDMs) model the signal-to-noise ratio function $\text{SNR}(t)$ with a monotone neural network to satisfy monotonicity, aiming primarily to improve generative performance rather than sampling acceleration.
However, this neural network contains more than 1024 parameters, and thus is not parameter-efficient.
In contrast, BézierFlow uses a much more compact parameterization with only $n=32$ control points in our experiments by leveraging 1-D Bézier functions, reducing the number of scheduler parameters by roughly an order of magnitude.

\paragraph{Multi-Marginal Stochastic Interpolant~\citep{Albergo:2024multimarginalsi}.} Multi-Marginal Stochastic Interpolant (Multi-Marginal SI) also learns a stochastic interpolant scheduler to improve generative performance.
In the 2-marginal case, the (unnormalized) scheduler is parameterized as




\begin{equation}
    \tilde\alpha(s) = 1 - s + \left(\sum_{k=1}^K a_k \sin\Big(\frac{\pi}{2}t\Big)\right)^2,\qquad \tilde\sigma(s) = s + \left(\sum_{k=1}^K b_k \sin\Big(\frac{\pi}{2}t\Big)\right)^2,
\end{equation}

with learnable coefficients $a_k$ and $b_k$, which are then normalized via

\begin{equation}
    \bar\alpha(s) = \frac{\tilde\alpha(s)}{\tilde\alpha(s) + \tilde\sigma(s)},\qquad \bar\sigma(s)=\frac{\tilde\sigma(s)}{\tilde\alpha(s) + \tilde\sigma(s)}.
\end{equation}

While this parameterization enforces the boundary conditions $\bar\alpha(0)=0$, $\bar\sigma(1)=0$, $\bar\alpha(1)=1$, and $\bar\sigma(0)=1$, the induced SNR schedule $\bar\rho(s) = \bar\alpha(s) / \bar\sigma(s)$ is not guaranteed to be monotonically increasing. In contrast, our Bézier-based parameterization explicitly satisfies the three core requirements of an SI scheduler: (1) boundary conditions, (2) monotonicity, and (3) differentiability. This advantage is reflected in the quantitative comparison reported below.


\paragraph{Results.} We report few-step generation FIDs on CIFAR-10 with ReFlow~\citep{Liu:2023RF} for VDM~\citep{Kingma:2021VDM}, Multi-Marginal SI~\citep{Albergo:2024multimarginalsi}, and \ours{}. For VDM, we parameterize the SNR neural network as a 3-layer MLP with hidden size $1024$, following the original configuration in the paper, and set the trigonometric order of Multi-Marginal SI to $K=32$ so that its number of scheduler degrees of freedom matches our $n=32$ Bézier parameterization. As shown in Tab.~\ref{tbl:vdm_mmsi}, BézierFlow consistently achieves the best FID across all NFEs and base solvers, outperforming VDM and Multi-Marginal SI under the same training setup. This highlights that our Bézier-based parameterization, which satisfies the key requirements of an SI scheduler, provides a more effective and stable way to learn SI schedulers for few-step generation than existing neural or trigonometric alternatives.

\begin{table}[!t]
\centering
\scriptsize
\caption{\textbf{Cross-dataset transfer of Bézier stochastic interpolant schedulers.} Results for the base solvers are reported on each top rows. \textbf{Bold} indicates the best results, and \underline{underline} marks the second best. Gray cells indicate the base ODE solvers.}
\setlength{\tabcolsep}{5pt}
\begin{tabularx}{\textwidth}
  {>{\raggedright\arraybackslash}p{0.1\textwidth}
   |*{4}{>{\centering\arraybackslash}X}
   |>{\raggedright\arraybackslash}p{0.1\textwidth}
   |*{4}{>{\centering\arraybackslash}X}}
\toprule
 Method & NFE=4 & NFE=6 & NFE=8 & NFE=10 & Method & NFE=4 & NFE=6 & NFE=8 & NFE=10 \\
 \midrule
\multicolumn{10}{c}{CIFAR-10 $32\times 32$ with EDM $\rightarrow$ FFHQ $64\times 64$ with EDM} \\
\midrule
 \cellcolor{gray!20}UniPC & 47.62 & 14.96 & 7.76 & 8.93 & \cellcolor{gray!20}iPNDM & 28.75 & 11.15 & 6.68 & 4.80 \\
 + \ours{} & \textbf{17.05} & \textbf{7.43} & \textbf{3.82} & \textbf{3.13} & + \ours{} & \textbf{15.39} & \textbf{7.84} & \underline{5.56} & \textbf{3.75} \\
 + Transferred & \underline{22.35} & \underline{9.05} & \underline{4.93} & \underline{4.50} & + Transferred & \underline{23.41} & \underline{9.87} & \textbf{5.47} & \underline{3.79} \\
 \midrule
\multicolumn{10}{c}{CIFAR-10 $32\times 32$ with EDM $\rightarrow$ AFHQv2 $64\times 64$ with EDM} \\
\midrule
 \cellcolor{gray!20}UniPC & 23.59 & 10.15 & 7.76 & 6.38 & \cellcolor{gray!20}iPNDM & 15.14 & 6.12 & 3.80 &  3.01 \\
 + \ours{} & \underline{12.27} & \textbf{4.46} & \textbf{2.75} & \textbf{2.67} & + \ours{} & \underline{14.44} & \underline{4.69} & \textbf{2.63} & \textbf{2.16} \\
 + Transferred & \textbf{11.58} & \underline{4.47} & \underline{2.98} & \underline{2.71} & + Transferred & \textbf{9.52} & \textbf{3.95} & \underline{2.96} & \underline{2.30} \\

 \bottomrule
\end{tabularx}
\label{tbl:scheduler_transfer}
\end{table}

\section{Cross-Dataset Transfer of Bézier Scheduler}
We investigate whether a BézierFlow trained on one dataset can be reused on other datasets without retraining. Specifically, we train BézierFlow on CIFAR-10 with pretrained diffusion models~\citep{Karras:2022EDM} and then evaluate on two different datasets, FFHQ and AFHQv2. In Tab.~\ref{tbl:scheduler_transfer}, we report FIDs for the base ODE solvers, the dataset-specific scheduler (denoted as ``\ours{}'') and the CIFAR-10–trained scheduler reused on the target datasets (denoted as ``Transferred'').

As shown, although the scheduler is trained only on CIFAR-10, its performance on out-of-domain datasets still outperforms the base solvers and remains competitive with a scheduler trained directly on the target dataset. This demonstrates that BézierFlow provides a generally effective acceleration scheme even under domain shift.

\begin{table}[t]
\centering
\scriptsize
\caption{\textbf{Training time and peak GPU memory usage of BézierFlow for diffusion and flow models at NFEs 4 and 10 on a single A6000 GPU.}}
\setlength{\tabcolsep}{6pt}
\begin{tabularx}{\textwidth}
  {>{\raggedright\arraybackslash}p{0.445\textwidth}
   |*{4}{>{\centering\arraybackslash}X}}
\toprule
\multirow{2}{*}{Dataset / Model} & \multicolumn{2}{c}{NFE=4} & \multicolumn{2}{c}{NFE=10} \\
 & Training Time & VRAM & Training Time & VRAM \\
\midrule
\multicolumn{5}{c}{(1) Diffusion Models} \\
\midrule
CIFAR-10 32$\times$32 with EDM~\citep{Karras:2022EDM}        & 8 minutes  & 4 GB  & 15 minutes  & 8 GB  \\
FFHQ 64$\times$64 with EDM~\citep{Karras:2022EDM}           & 11 minutes & 3 GB  & 18 minutes  & 7 GB  \\
AFHQv2 64$\times$64 with EDM~\citep{Karras:2022EDM}         & 11 minutes & 3 GB  & 18 minutes  & 7 GB  \\
\midrule
\multicolumn{5}{c}{(2) Flow Models} \\
\midrule
CIFAR-10 32$\times$32 with ReFlow~\citep{Liu:2023RF}       & 8 minutes  & 3 GB  & 15 minutes  & 7 GB  \\
ImageNet 256$\times$256 with FlowDCN~\citep{Wang:2024FlowDCN}     & 25 minutes & 5 GB  & 45 minutes  & 8 GB  \\
MS-COCO 512$\times$512 with Stable Diffusion~\citep{Esser:2024SD3} & 60 minutes  & 21 GB & 100 minutes & 22 GB \\
\bottomrule
\end{tabularx}
\label{tbl:comp_costs}
\end{table}

\section{Computational Costs}
\label{sec:comp_costs}
We report wall-clock training time and peak GPU memory for BézierFlow across all datasets and both diffusion and flow models, evaluated at NFE=4 and NFE=10 on a single A6000 GPU. As shown in Tab.~\ref{tbl:comp_costs}, BézierFlow trains in at most 1 hour even for a 2.5B large-scale pretrained model~\citep{Esser:2024SD3} at $512\times 512$ resolution, while requiring only 22~GB of GPU memory. This makes the method practical even on a single commodity GPU commonly available in research labs. Despite this low training and memory cost, BézierFlow improves FID by large margins over the base model, e.g., \textbf{from 50.30 to 9.55 ($\approx$81\% relative improvement)} at NFE=4 on CIFAR-10 and \textbf{from 8.93 to 3.13 ($\approx$64\% relative improvement)} at NFE=10 on FFHQ, as shown in Tab.~\ref{tbl:main_fid_diffusion}.

The increase in training time from NFE=4 to NFE=10 is always less than $2\times$, and peak GPU memory grows only mildly with NFE. This is because we apply gradient checkpointing over the student trajectory, so activation memory scales only weakly with the number of steps, even for Stable Diffusion v3.5. These results indicate that BézierFlow scales to high-resolution, large models with modest computational overhead, making it practical as a plug-and-play scheduler even for large-scale generative models.


\end{document}